\documentclass{article}

\usepackage[preprint,nonatbib]{neurips_2026}

\usepackage[T1]{fontenc}    % use 8-bit T1 fonts
\usepackage{url}            % simple URL typesetting
\usepackage{booktabs}       % professional-quality tables
\usepackage{amsfonts}       % blackboard math symbols
\usepackage{nicefrac}       % compact symbols for 1/2, etc.
\usepackage{microtype}      % microtypography
\usepackage{xcolor}         % colors
\usepackage[numbers,sort&compress]{natbib}
\usepackage{amsmath}
\usepackage{pifont}
\usepackage{graphicx}
\usepackage{enumitem}
\usepackage{amsthm}
\usepackage{amsfonts}
\usepackage{algorithm}
\usepackage{algpseudocode}
\newtheorem{theorem}{Theorem}
\newtheorem{proposition}{Proposition}
\newtheorem{corollary}{Corollary}
\newtheorem{remark}{Remark}
\newtheorem{lemma}{Lemma}
\usepackage{newtxtext,newtxmath}
\usepackage{xcolor}
\usepackage[colorlinks]{hyperref}
\usepackage{multirow}
\usepackage{colortbl}
\usepackage[most]{tcolorbox}
\usepackage{makecell}
\usepackage{wrapfig}
\usepackage{subcaption}

\usepackage{tabularx}
\usepackage{array}

\definecolor{rowblue}{HTML}{E0F7FA}
\definecolor{oursyellow}{HTML}{FFF8E8}
\definecolor{MyCustomBlue}{HTML}{3389DA}
\definecolor{MyCustomPink}{HTML}{EE4F89}
\definecolor{CustomPink}{HTML}{0021C2}

\hypersetup{
    citecolor=MyCustomBlue,
    linkcolor=MyCustomPink,
    urlcolor=CustomPink
}

\newcommand{\appendixentry}[2]{%
  \noindent\hyperref[#1]{\makebox[2.6em][l]{\ref*{#1}}#2}\dotfill\pageref{#1}\par
}

\newcommand{\appendixsubentry}[2]{%
  \noindent\hspace{1.5em}\hyperref[#1]{\makebox[3.4em][l]{\ref*{#1}}#2}\dotfill\pageref{#1}\par
}

\newcommand{\appendixsubsubentry}[2]{%
  \noindent\hspace{3.2em}\hyperref[#1]{\makebox[4.4em][l]{\ref*{#1}}#2}\dotfill\pageref{#1}\par
}

\usepackage{tikz}
\newcommand{\circnum}[1]{%
  \tikz[baseline=(char.base)]{
    \node[shape=circle, fill=black, text=white,
          inner sep=0.9pt, font=\scriptsize\sffamily\bfseries] (char) {#1};
  }%
}

\definecolor{venuebg}{HTML}{D4ECF8}
\newtcbox{\venuebox}{on line,
  boxrule=0pt,
  colback=venuebg,
  colframe=gray!15,
  arc=1.5pt,
  left=2pt,right=2pt,top=1pt,bottom=1pt,
  boxsep=0pt}
  
\newcommand{\venuetag}[1]{\hspace{0.0em}{\scriptsize\venuebox{\textcolor{black!80}{#1}}}}

\definecolor{propbg}{RGB}{242,249,255}
\definecolor{propborder}{RGB}{90,150,220}
\newtcolorbox{theorembox}{
  colback=propbg,
  colframe=propborder,
  boxrule=1.2pt,
  arc=2.5mm,
  left=0.5mm,
  right=0.5mm,
  top=0.3mm,
  bottom=0.3mm,
  before skip=5pt,
  after skip=5pt
}

\definecolor{propositionpropbg}{RGB}{244,252,247}
\definecolor{propositionpropborder}{RGB}{72,155,105}
\newtcolorbox{propositionbox}{
  colback=propositionpropbg,
  colframe=propositionpropborder,
  boxrule=1.2pt,
  arc=2.5mm,
  left=0.5mm,
  right=0.5mm,
  top=0.3mm,
  bottom=0.3mm,
  before skip=5pt,
  after skip=5pt
}

\title{One Algorithm, Two Goals: Dual Scoring for Parameter and Data Selection in LLM Fine-Tuning}

\author{
Xinrui Chen$^{1}$ \quad Liu Yang$^{2}$ \quad Ou Wu$^{1}$\thanks{Corresponding author.} \\
\texttt{chenxinrui25@mails.ucas.ac.cn} \quad
\texttt{yangliuyl@tju.edu.cn} \quad
\texttt{wuou@ucas.ac.cn} \\
$^{1}$Hangzhou Institute for Advanced Study, University of Chinese Academy of Sciences \\
$^{2}$College of Intelligence and Computing, Tianjin University
}

\begin{document}

\maketitle

\begin{abstract}
In Large Language Model (LLM) fine-tuning, parameter and data selection are common strategies for reducing fine-tuning cost, yet they are typically driven by separate scoring mechanisms. When a parameter mask and data subset jointly determine restricted fine-tuning, this separation incurs redundant overhead and makes coordinated selection difficult. We cast parameter and data selection as two bilevel selection problems under a common validation objective and derive a shared local response-surrogate scoring rule. Under first- and second-order validation-improvement approximations, parameter importance and data utility emerge as column-wise and row-wise aggregations of a single gradient interaction matrix, yielding a closed-form row--column correspondence for co-extracting both signals. Building on this structure, we propose \textbf{DualSFT} (\textbf{Dual}-\textbf{S}election \textbf{F}ine-\textbf{T}uning), a one-shot dual-scoring algorithm that produces a parameter mask and data subset from shared gradient statistics. On 3B--9B LLMs, single-axis DualSFT variants strengthen target-task performance and stability--plasticity trade-offs within their comparison groups, while full DualSFT yields a more favorable joint-constrained trade-off than sequential hybrid baselines under matched budgets. 
\end{abstract}

\section{Introduction}
Full fine-tuning adapts Large Language Models (LLMs) to downstream tasks~\citep{roziere2023code, azerbayev2024llemma, li2023chatdoctor}, but is prohibitive at modern model and data scales~\citep{hu2022lora, dettmers2023qlora, hwang2025pica, jeon2026qwha, xu2026parameter}. This motivated cost-efficient fine-tuning research, evolving into separate parameter-side and data-side lines. Parameter-efficient fine-tuning (PEFT) includes approaches like additive modules, low-rank updates, and parameter selection~\citep{yang2023parameter, wang2023multitask, hu2022lora, zhao2024galore, liu2024dora, liao2023make, lingam2024svft, song2024increasing, xu2026parameter}. Data-efficient fine-tuning (DEFT) includes approaches like data selection, weighting, and scheduling~\citep{xia2024less, sanyal2025upweighting, xu2026skrull, jiang2026difficulty, das2024deft, agarwal2025delift}.

Within PEFT and DEFT, parameter selection~\citep{panigrahi2023task, song2024sparse, yang2024s2ft, pan2024lisa, he2025smt, liu2025lift, zhou2025pay, lin2025continual, yao2026gast} and data selection~\citep{xia2024less, deb2025fishersft, liang2025boosting, fan2026joint, min2026gist, chang2026spice} share a budgeted-scoring pattern yet evolved independently, targeting complementary resources: data selection reduces optimization workload, while parameter selection reduces trainable-state memory. Recent pipeline combinations seek savings~\citep{zhao2024tuning, khaki2025sparselora, luo2025tr, mccoy2025ai}, yet retain redundant scoring and coordination mismatch. Since parameter mask and data subset jointly determine restricted updates, this motivates the question: \textit{\textbf{can one algorithm serve both goals via a shared scoring rule, rather than disconnected mechanisms?}}

Instead of heuristically combining parameter and data selection, we formulate separate bilevel problems sharing a validation objective, analyzing responses at a common checkpoint. This reveals scoring-level row-column duality: under first- and second-order validation-improvement surrogates, aggregating a gradient interaction matrix along parameter and sample axes yields parameter importance and data utility. \textit{\textbf{Thus, separate scoring problems become views of one local interaction object, enabling closed-form signal co-extraction from shared gradient statistics.}} This aggregation admits a localized Shapley interpretation, linking scores to surrogate cooperative-game credit assignment.

Building on this duality, we propose \textbf{DualSFT}, realizing \textit{\textbf{one algorithm for two goals}}: a lightweight warmup establishes a stable scoring checkpoint, one-shot dual scoring reads parameter importance and data utility from a shared projection vector, and restricted fine-tuning updates selected parameters on selected data. By scoring both axes before the final update, DualSFT coordinates them during scoring, avoiding independent scoring rules and repeated conditional passes of sequential hybrids.

DualSFT preserves pre-trained capabilities via corpus-free \textbf{CWSD}~(\textbf{C}onfidence-\textbf{W}eighted \textbf{S}elf-\textbf{D}istillation), mitigating catastrophic forgetting during downstream fine-tuning~\citep{wu2025mitigating, wang2025on, harmon2026mapping, shen2025dont, lin2026sft, sanyal2025upweighting, huang2025mitigating}. Experiments on 3B to 9B LLMs for code generation~\citep{wei2024magicoder} and mathematical reasoning~\citep{yu2024metamath} show DualSFT reduces fine-tuning cost while improving target-task performance. Furthermore, DualSFT strengthens stability-plasticity trade-offs: one-sided variants achieve best mean trade-offs within their groups, and full DualSFT yields best mean joint-constrained trade-off against sequential pipelines under matched budgets. Ablations and sensitivity analyses support its scoring and preservation components.

Our main contributions are summarized as follows:
\begin{itemize}[leftmargin=*]
    \item We derive a shared response-surrogate scoring rule for parameter and data selection from separate bilevel formulations with a common validation objective. Within this tractable scoring regime, parameter importance and data utility are the column-wise and row-wise aggregations of one gradient interaction matrix, with a localized Shapley interpretation of the resulting closed-form scores.
    \item We propose DualSFT, a one-shot dual scoring algorithm that co-extracts both signals from shared gradient statistics, reducing redundant scoring and repeated conditional passes in sequential combinations while coordinating the two restricted axes at scoring time.
    \item We construct a practical preservation direction using CWSD without pre-training data.
    \item On 3B--9B LLMs, DualSFT reduces tunable parameters and training data, improving target performance. Single-axis variants strengthen intra-group performance and stability-plasticity trade-offs, while full DualSFT outperforms sequential hybrid baselines in joint-constrained trade-offs.
\end{itemize}

\vspace{-0.05in}
\section{Related Work}

\vspace{-0.05in}
\subsection{Parameter and Data Efficient Fine-tuning}
PEFT efficiently restricts updates via additive modules~\citep{yang2023parameter, wang2023multitask}, low-rank updates~\citep{hu2022lora, zhao2024galore, liu2024dora}, and parameter selection~\citep{liao2023make, panigrahi2023task, song2024increasing, song2024sparse, xu2026parameter}. Recent works combine structured or dynamic sparsity with criteria like Fisher guidance, regularization, and importance tracking~\citep{yang2024s2ft, he2025smt, pan2024lisa, yin2024lofit, lingam2024svft, ansell2024scaling, liu2025lift, miao2025taso, yao2026gast, song2025alleviate, wang2025floe, hui2025hft, feng2025recurrent, lin2025continual, wang2025parameter, zhou2025pay, huang2025mitigating}. Although conceptually related, game-theoretic evaluations~\citep{chu2025model} and Shapley-based methods primarily target post-training compression, including rank allocation~\citep{zhao2026shaplora} and layer pruning~\citep{ding2026pruning}.

DEFT improves data efficiency via reweighting~\citep{sanyal2025upweighting, lin2026sft}, scheduling~\citep{jiang2026difficulty, xu2026skrull}, token filtering~\citep{pang2025token, li2026tokenlevel, qin2026sstoken}, and sample selection~\citep{xia2024less, deb2025fishersft, chang2026spice, jain2026train}, respectively reallocating effort, reordering signals, dropping low-value tokens, and choosing subsets. Sample selection evolved from static gradient/subspace matching~\citep{xia2024less, min2026gist} to trajectory-aware, validation-guided scoring~\citep{yu2024mates, liang2025boosting, liu2026learn, shen2025seal, jain2026train}, refined by Shapley attribution~\citep{wang2025data}, submodular conflict resolution~\citep{chang2026spice}, and quality-diversity tradeoffs~\citep{yan2025coido, ling2025diversity, fan2026joint, das2026matched, he2026paser}.

Recent cost-efficient fine-tuning methods increasingly couple efficiency and retention. They restrict adaptation via low-rank or subspace adapters~\citep{biderman2024lora,yang2024corda,wang2025milora,tang2025loranull,luo2025sclora,han2025slim}, partial updates, optimizer filtering~\citep{hui2025hft,zhou2025pay,chen2025mofo,wang2026loki}, and data/token selection~\citep{wu2025mitigating,liu2026learn}. While preserving pre-trained capabilities downstream, they mostly operate on a single axis, lacking a shared scoring rule for parameter and data selection.

% Prior cross-axis efforts consider joint parameter--token selection, contextual sparse LoRA, or trainable-component/data-type interactions~\citep{luo2025tr,khaki2025sparselora,zhao2024tuning}, but rely on architectural or empirical coupling rather than deriving parameter and data scores from a shared validation-improvement surrogate. DualSFT extracts both scores from one projection, yielding a coupled row--column scoring rule.

Recent cross-axis studies show that jointly considering parameter- and data-side resources can further reduce fine-tuning cost~\citep{mccoy2025ai,luo2025tr,huang2025damoc,khaki2025sparselora,zhao2024tuning}. Yet they lack explicitly coordinated data-subset and parameter-mask scoring under a shared validation objective. DualSFT derives both scores from a shared validation-improvement surrogate, yielding a row--column scoring rule for coordination.

\vspace{-0.05in}
\subsection{Bilevel Optimization}
Bilevel optimization (BLO) is widely used for meta-learning, hyperparameter optimization, and architecture search~\citep{finn2017model, franceschi2018bilevel, liu2018darts}. It supports validation-guided data selection before and after LLMs~\citep{killamsetty2021glister, shen2025seal, yu2025llm}, and efficient fine-tuning and compression~\citep{liao2025compress, shirkavand2025bilevel, jiang2025beyond}. Unlike prior BLO optimizing data, prompts, parameters, or compression targets separately, DualSFT links separate validation-guided bilevel formulations through a shared local surrogate, enabling coordinated data--parameter selection.

\section{Bilevel Formulations and Surrogates for Parameter and Data Selection}
\label{sec:blo_framework}

\begin{figure}[t]
    \centering
    \includegraphics[width=\linewidth]{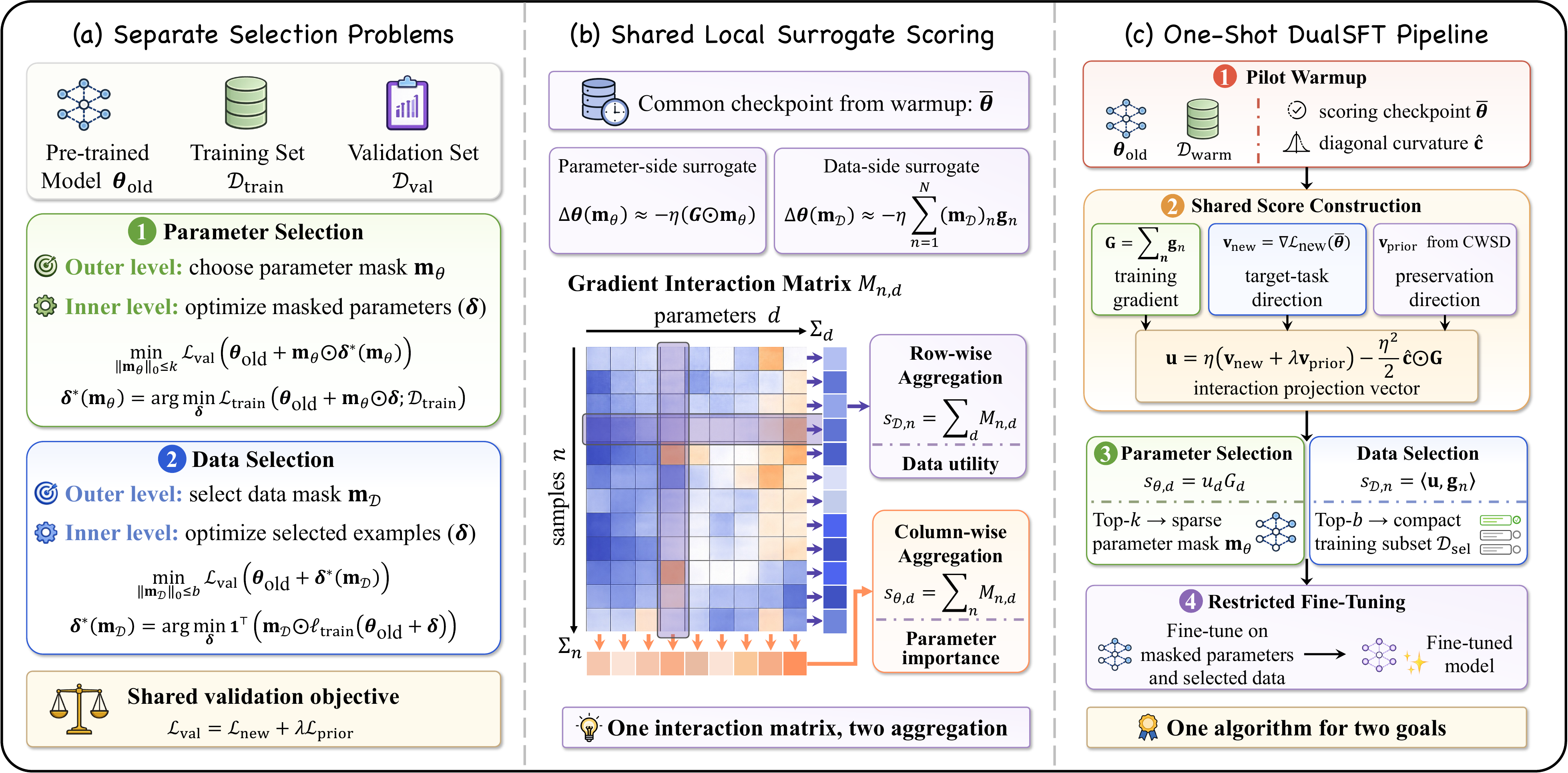}
    \vspace{-0.2in}
    \caption{DualSFT overview. Separate parameter and data bilevel problems share a local surrogate: scores aggregate columns and rows of the same conceptual gradient interaction matrix. DualSFT uses one-shot streaming dual scoring with CWSD before restricted fine-tuning.}
    \label{fig:sparsesft_framework}
    \vspace{-0.1in}
\end{figure}

As summarized in Figure~\ref{fig:sparsesft_framework}, we formulate parameter and data selection under a shared outer validation objective and derive tractable local surrogates for their bilevel responses.

\subsection{Problem Setting}
Consider a pre-trained LLM with $\boldsymbol{\theta}_{\text{old}} \in \mathbb{R}^D$. Let $\boldsymbol{\delta}\in\mathbb{R}^D$ denote the update applied to $\boldsymbol{\theta}_{\text{old}}$, and let $\boldsymbol{\theta}=\boldsymbol{\theta}_{\text{old}}+\boldsymbol{\delta}$ be the adapted parameter vector. Let $\mathcal{D}_{\text{train}}=\{(x_n,y_n)\}_{n=1}^N$ be the training set and $\mathcal{D}_{\text{val}}$ a validation set. We index updatable coordinates by a binary mask $\mathbf{m}_{\theta}\in\{0,1\}^D$ with $\|\mathbf{m}_{\theta}\|_0\le k$, so that $\delta_d=0$ whenever $(\mathbf{m}_{\theta})_d=0$, and we index training examples by a binary mask $\mathbf{m}_{\mathcal D}\in\{0,1\}^N$ with $\|\mathbf{m}_{\mathcal D}\|_0\le b$, inducing the selected subset $\mathcal{D}_{\mathrm{sel}}(\mathbf{m}_{\mathcal D})\triangleq\{(x_n,y_n)\in\mathcal{D}_{\text{train}}\mid (\mathbf{m}_{\mathcal D})_n=1\}$. The two masks act on different objects but share the same outer criterion. To balance target-task adaptation and prior-capability preservation, we define the validation objective as $\mathcal{L}_{\text{val}}(\boldsymbol{\theta})\triangleq \mathcal{L}_{\text{new}}(\boldsymbol{\theta})+\lambda \mathcal{L}_{\text{prior}}(\boldsymbol{\theta})$, where $\mathcal{L}_{\text{new}}$ is the target-task validation loss, $\mathcal{L}_{\text{prior}}$ the prior-capability preservation loss, and $\lambda>0$ controls the stability--plasticity trade-off.

\subsection{Bilevel Optimization Problems for Parameter and Data Selection}

Under the shared outer criterion $\mathcal{L}_{\text{val}}$, parameter selection and data selection can be formulated as two exact bilevel optimization problems. In both cases, the outer level selects a budget-constrained mask, while the inner level determines the training-induced parameter update under the corresponding restriction. Accordingly, the quality of a selection is determined by the induced update and its resulting validation performance. Here, $\ell(x_n,y_n;\boldsymbol{\theta})$ denotes the per-example training loss, $\boldsymbol{\ell}_{\text{train}}(\boldsymbol{\theta})\triangleq[\ell(x_1,y_1;\boldsymbol{\theta}),\dots,\ell(x_N,y_N;\boldsymbol{\theta})]^\top\in\mathbb{R}^N$ is the vector of per-sample training losses, $\mathbf{1}\in\mathbb{R}^N$ is the all-ones vector, and $\mathcal{L}_{\text{train}}(\boldsymbol{\theta};\mathcal{D}_{\text{train}})\triangleq \mathbf{1}^\top\boldsymbol{\ell}_{\text{train}}(\boldsymbol{\theta})$ is the full training objective.

For parameter selection during LLM fine-tuning, the outer mask determines which parameters are allowed to update, and the inner problem computes the optimal update under this restriction:
\begin{equation}
\min_{\|\mathbf{m}_{\theta}\|_0\le k}\ \mathcal{L}_{\text{val}}\!\left(\boldsymbol{\theta}_{\text{old}}+\mathbf{m}_{\theta}\odot\boldsymbol{\delta}^*(\mathbf{m}_{\theta})\right)
\quad
\text{s.t.}
\quad
\boldsymbol{\delta}^*(\mathbf{m}_{\theta})=\arg\min_{\boldsymbol{\delta}}\ \mathcal{L}_{\text{train}}\!\left(\boldsymbol{\theta}_{\text{old}}+\mathbf{m}_{\theta}\odot\boldsymbol{\delta};\mathcal{D}_{\text{train}}\right).
\label{eq:blo_param_new}
\end{equation}
For data selection during LLM fine-tuning, the outer mask determines which training examples are used for learning, and the inner problem computes the update induced by the selected subset:
\begin{equation}
\min_{\|\mathbf{m}_{\mathcal D}\|_0\le b}\ \mathcal{L}_{\text{val}}\!\left(\boldsymbol{\theta}_{\text{old}}+\boldsymbol{\delta}^*(\mathbf{m}_{\mathcal D})\right)
\quad
\text{s.t.}
\quad
\boldsymbol{\delta}^*(\mathbf{m}_{\mathcal D})=\arg\min_{\boldsymbol{\delta}}\ \mathbf{1}^\top\!\Big(\mathbf{m}_{\mathcal D}\odot \boldsymbol{\ell}_{\text{train}}(\boldsymbol{\theta}_{\text{old}}+\boldsymbol{\delta})\Big).
\label{eq:blo_data_new}
\end{equation}
Here, $\boldsymbol{\delta}^*(\mathbf{m}_{\theta})$ and $\boldsymbol{\delta}^*(\mathbf{m}_{\mathcal D})$ denote the optimal updates induced by the corresponding masks.
% Eq.~\eqref{eq:blo_data_new} equivalently minimizes the training loss over the selected subset $\mathcal{D}_{\mathrm{sel}}(\mathbf{m}_{\mathcal D})$. The two formulations share the outer objective but differ in where the restriction is imposed: Eq.~\eqref{eq:blo_param_new} restricts which parameter coordinates to update, whereas Eq.~\eqref{eq:blo_data_new} restricts which training examples to use.

\subsection{Local Surrogate Approximation of the Exact Bilevel Problems}
Solving Eqs.~\eqref{eq:blo_param_new} and \eqref{eq:blo_data_new} is intractable at LLM scale, as each candidate mask requires constrained fine-tuning. Following In-Run Data Shapley~\citep{wang2025data} and scalable proxy-based scoring methods~\citep{wang2024greats,shen2025seal,jain2026train,chen2026influencepreserving,nikdan2025efficient,yu2025grouplevel,liang2025boosting}, we approximate both updates via one gradient step of the inner objective at a common checkpoint $\bar{\boldsymbol{\theta}}\in\mathbb{R}^D$, yielding local response surrogates sharing this reference point.

Let $\mathbf{g}_n\triangleq\nabla_{\boldsymbol{\theta}} \ell(x_n,y_n;\bar{\boldsymbol{\theta}})$ be the $n$-th example's gradient at $\bar{\boldsymbol{\theta}}$ and $\mathbf{G}\triangleq\sum_{n=1}^N\mathbf{g}_n$ the aggregated training gradient. The masked parameter-side step then descends along $\mathbf{G}\odot \mathbf{m}_{\theta}$, while the data-side step descends along $\sum_{n=1}^{N}(\mathbf{m}_{\mathcal D})_n\mathbf{g}_{n}$, giving the one-step inner-level updates:
\begin{equation}
\Delta\boldsymbol{\theta}(\mathbf{m}_{\theta})\approx -\eta(\mathbf{G}\odot\mathbf{m}_{\theta}),
\qquad
\Delta\boldsymbol{\theta}(\mathbf{m}_{\mathcal D})\approx -\eta\sum\nolimits_{n=1}^N (\mathbf{m}_{\mathcal D})_n\mathbf{g}_n,
\label{eq:single_step_new}
\end{equation}
with $\eta$ the effective local step size. Appendix~\ref{app:one_step_derivation} gives the derivation. The parameter-side surrogate masks the aggregate gradient, whereas the data-side surrogate sums selected-example gradients.

For notational convenience, let $[D]\triangleq\{1,\dots,D\}$, $[N]\triangleq\{1,\dots,N\}$, and identify each mask with its selected index set $S_{\theta}\triangleq\{d\in[D]\mid (\mathbf{m}_{\theta})_d=1\}$, $S_{\mathcal D}\triangleq\{n\in[N]\mid (\mathbf{m}_{\mathcal D})_n=1\}$. Substituting Eq.~\eqref{eq:single_step_new} into the validation gain at $\bar{\boldsymbol{\theta}}$ yields the localized utilities:
\begin{equation}
U_{\theta}(S_{\theta})\triangleq \mathcal{L}_{\text{val}}(\bar{\boldsymbol{\theta}})-\mathcal{L}_{\text{val}}\!\left(\bar{\boldsymbol{\theta}}+\Delta\boldsymbol{\theta}_{\theta}(S_{\theta})\right),
\qquad
U_{\mathcal D}(S_{\mathcal D})\triangleq \mathcal{L}_{\text{val}}(\bar{\boldsymbol{\theta}})-\mathcal{L}_{\text{val}}\!\left(\bar{\boldsymbol{\theta}}+\Delta\boldsymbol{\theta}_{\mathcal D}(S_{\mathcal D})\right),
\label{eq:surrogate_set_functions}
\end{equation}
where $G_d$ denotes the $d$-th entry of $\mathbf{G}$, $\mathbf{e}_d\in\mathbb{R}^D$ the $d$-th standard basis vector, $\Delta\boldsymbol{\theta}_{\theta}(S_{\theta})\triangleq-\eta\sum_{d\in S_{\theta}}G_d\mathbf{e}_d$, and $\Delta\boldsymbol{\theta}_{\mathcal D}(S_{\mathcal D})\triangleq-\eta\sum_{n\in S_{\mathcal D}}\mathbf{g}_n$. These utilities measure validation-objective reduction at $\bar{\boldsymbol{\theta}}$ from local updates, providing a tractable basis for shared scoring analysis.

\vspace{-0.05in}
\section{Shared Interaction Matrix and Dual Scores}
\label{sec:shared_scores}
Deriving the DualSFT scoring rule, parameter and data updates from different spaces share a validation-improvement expansion around $\bar{\boldsymbol{\theta}}$. Here, their scores reduce to column and row aggregations of one gradient interaction matrix, providing a single algebraic basis to co-extract both signals.

\vspace{-0.05in}
\subsection{Shared Taylor Approximation, Interaction Matrix, and Closed-Form Scores}
Let $\mathbf{v}_{\text{new}}\triangleq \nabla_{\boldsymbol{\theta}} \mathcal{L}_{\text{new}}(\bar{\boldsymbol{\theta}})$, $\mathbf{v}_{\text{prior}}\triangleq \nabla_{\boldsymbol{\theta}} \mathcal{L}_{\text{prior}}(\bar{\boldsymbol{\theta}})$, $\mathbf{v}_{\text{val}}\triangleq \nabla_{\boldsymbol{\theta}} \mathcal{L}_{\text{val}}(\bar{\boldsymbol{\theta}})=\mathbf{v}_{\text{new}}+\lambda \mathbf{v}_{\text{prior}}$, and $\mathbf{H}\triangleq \nabla_{\boldsymbol{\theta}}^2 \mathcal{L}_{\text{val}}(\bar{\boldsymbol{\theta}})$. Although the localized utilities in Eq.~\eqref{eq:surrogate_set_functions} act on different selection spaces, the surrogate updates in Eq.~\eqref{eq:single_step_new} place them around the same checkpoint $\bar{\boldsymbol{\theta}}$ under one outer objective, so both admit a common local second-order expansion. For either $a\in\{\theta,\mathcal D\}$ and its selected set $S_a$:
\begin{equation}
U_a(S_a)
=
-\mathbf{v}_{\text{val}}^\top \Delta\boldsymbol{\theta}_a(S_a)
-\frac{1}{2}\Delta\boldsymbol{\theta}_a(S_a)^\top \mathbf{H}\Delta\boldsymbol{\theta}_a(S_a)
+ R_a(S_a),
\label{eq:taylor_local_utility}
\end{equation}
where $R_a(S_a)=\mathcal{O}(\|\Delta\boldsymbol{\theta}_a(S_a)\|^3)$ is the Taylor remainder. We use the first- or second-order truncation of Eq.~\eqref{eq:taylor_local_utility}; Appendix~\ref{app:local_error} shows utility errors are $O(\eta^2)$ and $O(\eta^3)$, respectively.

Following tractable diagonal curvature approximations for scalable and stable LLM fine-tuning~\citep{zhao2025secondorder,zhao2025pazo,zhao2025helene,sliwa2025mitigating}, we adopt the diagonal approximation $\mathbf{H}\approx \operatorname{Diag}(c_1,\dots,c_D)$ for this instantiation. With $G_d$ and $g_{n,d}$ denoting the $d$-th entries of $\mathbf{G}$ and $\mathbf{g}_n$, we define three interaction matrices:
\begin{equation}
M_{n,d}^{(1)}\triangleq \eta (\mathbf{v}_{\text{val}})_d g_{n,d},\quad
M_{n,d}^{(\mathrm{diag})}\triangleq M_{n,d}^{(1)}-\frac{\eta^2}{2}c_d G_d g_{n,d},\quad
M_{n,d}^{(\mathrm{full})}\triangleq M_{n,d}^{(1)}-\frac{\eta^2}{2}(\mathbf{H}\mathbf{G})_d g_{n,d},
\label{eq:interaction_matrices}
\end{equation}
and let $\mathbf{M}^{(\alpha)}\in\mathbb{R}^{N\times D}$ collect the entries $M_{n,d}^{(\alpha)}$. For $\alpha\in\{1,\mathrm{diag},\mathrm{full}\}$, the corresponding scores are
\begin{equation}
s_{\theta,d}^{(\alpha)}\triangleq \sum\nolimits_{n=1}^N M_{n,d}^{(\alpha)},\qquad
s_{\mathcal D,n}^{(\alpha)}\triangleq \sum\nolimits_{d=1}^D M_{n,d}^{(\alpha)}.
\label{eq:closed_form_scores}
\end{equation}
Within the local validation-improvement surrogate, the scoring relation takes an explicit form: parameter importance and data utility are column and row aggregations of one shared interaction matrix.

\vspace{-0.05in}
\subsection{Shapley Interpretation of the Closed-Form Scores}
The closed-form scores admit a Shapley interpretation. For each $\alpha\in\{1,\mathrm{diag},\mathrm{full}\}$, localized utilities $U_{\theta}^{(\alpha)}$ and $U_{\mathcal D}^{(\alpha)}$ from Eq.~\eqref{eq:taylor_local_utility} define cooperative games $\mathcal{G}_{\theta}^{(\alpha)}=([D],U_{\theta}^{(\alpha)})$ and $\mathcal{G}_{\mathcal D}^{(\alpha)}=([N],U_{\mathcal D}^{(\alpha)})$.
\begin{theorembox}
\begin{theorem}
\label{thm:dual_shapley}
For any $\alpha\in\{1,\mathrm{diag},\mathrm{full}\}$, the Shapley values of the two localized surrogate games are given by the closed-form scores in Eq.~\eqref{eq:closed_form_scores}:
\begin{equation}
\phi_{\theta,d}^{(\alpha)} = s_{\theta,d}^{(\alpha)}=\sum\nolimits_{n=1}^N M_{n,d}^{(\alpha)},\qquad
\phi_{\mathcal D,n}^{(\alpha)} = s_{\mathcal D,n}^{(\alpha)}=\sum\nolimits_{d=1}^D M_{n,d}^{(\alpha)}.
\label{eq:dual_shapley}
\end{equation}
\end{theorem}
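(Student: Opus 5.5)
The plan is to observe that, for every $\alpha\in\{1,\mathrm{diag},\mathrm{full}\}$, the truncated localized utility is a polynomial of degree at most two in the indicator vector of the coalition. Such a game splits into an additive (singleton) part and a symmetric pairwise part. I will first state a small lemma computing the Shapley value of any such game. I will then instantiate it on both sides and match the result with the row and column sums in Eq.~\eqref{eq:closed_form_scores}.

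\emph{Lemma (Shapley value of a quadratic game).} Let $U(S)=\sum_{i\in S}a_i+\sum_{i\in S}\sum_{j\in S}w_{ij}$ on a player set $P$, with $w$ symmetric. Then $\phi_i=a_i+\sum_{j\in P}w_{ij}$.

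The proof uses linearity of the Shapley value together with additivity over the component games. The singleton game $S\mapsto a_i\mathbf{1}[i\in S]$ is inessential, so it gives $\phi_i=a_i$ by the dummy and efficiency axioms. The diagonal game $S\mapsto w_{ii}\mathbf{1}[i\in S]$ is likewise of singleton type and contributes $w_{ii}$ to player $i$. For $i\neq j$, the game $S\mapsto 2w_{ij}\mathbf{1}[\{i,j\}\subseteq S]$ is a scaled unanimity game. Symmetry and efficiency split its worth $2w_{ij}$ equally, so each of $i$ and $j$ receives $w_{ij}$. Summing over all components gives the claim.

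\emph{Data side.} Set $\Delta\boldsymbol{\theta}_{\mathcal D}(S)=-\eta\sum_{n\in S}\mathbf{g}_n$ in the truncation of Eq.~\eqref{eq:taylor_local_utility}. Write $\mathbf{H}^{(\alpha)}$ for $0$, $\operatorname{Diag}(c)$ or $\mathbf{H}$ according to $\alpha$. This gives
\[
U^{(\alpha)}_{\mathcal D}(S)=\sum_{n\in S}\eta\,\mathbf{v}_{\text{val}}^\top\mathbf{g}_n-\tfrac{\eta^2}{2}\sum_{n,m\in S}\mathbf{g}_n^\top\mathbf{H}^{(\alpha)}\mathbf{g}_m .
\]
The lemma then yields
\[
\phi_{\mathcal D,n}=\eta\,\mathbf{v}_{\text{val}}^\top\mathbf{g}_n-\tfrac{\eta^2}{2}\,\mathbf{g}_n^\top\mathbf{H}^{(\alpha)}\mathbf{G}.
\]
Expanding both inner products coordinate-wise gives $\sum_d M^{(\alpha)}_{n,d}$. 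In the diagonal case, $(\mathbf{H}^{(\alpha)}\mathbf{G})_d=c_dG_d$.

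\emph{Parameter side.} Set $\Delta\boldsymbol{\theta}_\theta(S)=-\eta\sum_{d\in S}G_d\mathbf{e}_d$, which gives
\[
U^{(\alpha)}_\theta(S)=\sum_{d\in S}\eta(\mathbf{v}_{\text{val}})_dG_d-\tfrac{\eta^2}{2}\sum_{d,d'\in S}G_dH^{(\alpha)}_{dd'}G_{d'} .
\]
The lemma then yields
\[
\phi_{\theta,d}=\eta(\mathbf{v}_{\text{val}})_dG_d-\tfrac{\eta^2}{2}G_d(\mathbf{H}^{(\alpha)}\mathbf{G})_d .
\]
Substituting $G_d=\sum_n g_{n,d}$ turns this into $\sum_n M^{(\alpha)}_{n,d}$.

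\emph{Main obstacle.} The step needing most care is the pairwise-term bookkeeping in the lemma. The double sum over ordered pairs counts each off-diagonal interaction twice, and each player gets exactly half of that doubled worth, i.e.\ $w_{ij}$. The diagonal terms go entirely to their own player. Together these turn the coalition-dependent quadratic form into the full sum $\sum_{j\in P}w_{ij}$ over all players. This is precisely what makes the curvature correction involve the aggregate $\mathbf{G}$ rather than a partial sum over a coalition. I also need to note that the remainder $R_a$ is excluded by definition of the truncated games $U^{(\alpha)}_a$, so the identity is exact for those games.
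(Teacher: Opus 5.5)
Your proof is correct, but it takes a different route from the paper. The paper works case by case from the permutation form of the Shapley value. For each game it computes the marginal contribution $U(S\cup\{i\})-U(S)$ directly. Where this depends on $S$ (the data-side diagonal and full cases and the parameter-side full case), it is affine in a coalition partial sum such as $\sum_{m\in S}\mathbf{g}_m$ or $\sum_{l\in S}H_{dl}G_l$. A half-averaging lemma then replaces that partial sum by half the sum over all other players, using that each $j\neq i$ precedes $i$ in a random permutation with probability $1/2$.

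You instead prove once, axiomatically, that any game $\sum_{i\in S}a_i+\sum_{i,j\in S}w_{ij}$ with symmetric $w$ has $\phi_i=a_i+\sum_{j}w_{ij}$. You do this by decomposing it into additive games and two-player unanimity games, and you then read off all six cases by plugging in $a$ and $w$. The underlying fact is the same in both: the equal split of a unanimity game on $\{i,j\}$ is the probability-$1/2$ ordering argument in another guise.

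Your version gives uniformity and transparency. One lemma covers every $\alpha\in\{1,\mathrm{diag},\mathrm{full}\}$ on both axes. It also shows directly why the curvature correction involves the full aggregate $\mathbf{G}$: the Shapley value of a pairwise game is a full row sum of the symmetric interaction kernel. The paper's version is more hands-on and stays closer to the marginal-contribution definition, at the cost of repeating the computation per case.

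One small point: in the unanimity step, cite the null-player axiom to give zero to players outside $\{i,j\}$ before symmetry and efficiency split the worth $2w_{ij}$ between $i$ and $j$.
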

\end{theorembox}
The Shapley view interprets closed-form scores as attributions: first-order localized utilities decompose additively; in diagonal and full second-order cases, permutation averaging redistributes pairwise interactions into row and column aggregations of Eq.~\eqref{eq:closed_form_scores}. Within surrogate games, the interaction matrix yields practical scores and Shapley-consistent attributions. Appendix~\ref{app:shapley} details derivations.

\vspace{-0.05in}
\subsection{Coordination Gap under Isolated Scoring}
\label{sec:coord_gap}
% The shared interaction matrix exposes the cost of isolated scoring under joint restricted updates. Let $\mathbf{G}=\sum_{n\in \mathcal{D}_{\text{train}}}\mathbf{g}_n$ and $\mathbf{G}_{S_\mathcal{D}}=\sum_{n\in S_\mathcal{D}}\mathbf{g}_n$. Under both restrictions, $\Delta\boldsymbol{\theta}(S_\mathcal{D},\mathbf{m}_\theta)=-\eta(\mathbf{m}_\theta\odot \mathbf{G}_{S_\mathcal{D}})$, inducing mask-aware first-order scores $\bar s_{\mathcal{D},n}(\mathbf{m}_\theta)=\eta\langle\mathbf{v}_{\text{val}}\odot \mathbf{m}_\theta,\mathbf{g}_n\rangle$ and $\bar s_{\theta,d}(S_\mathcal{D})=\eta(\mathbf{v}_{\text{val}})_d(\mathbf{G}_{S_\mathcal{D}})_d$.
% \begin{propositionbox}
% \begin{proposition}
% \label{prop:coord_gap}
% Let
% \(s^{\mathrm{iso}}_{\mathcal{D},n}
% =\eta\langle\mathbf{v}_{\text{val}},\mathbf{g}_n\rangle\)
% and
% \(s^{\mathrm{iso}}_{\theta,d}
% =\eta(\mathbf{v}_{\text{val}})_dG_d\)
% be the isolated first-order data and parameter scores. Let
% \(\bar S_\theta=\{d:(\mathbf{m}_\theta)_d=0\}\) and
% \(\bar S_\mathcal{D}=\mathcal{D}_{\text{train}}\setminus S_\mathcal{D}\). Then:
% \begin{equation}
% \begin{aligned}
% s^{\mathrm{iso}}_{\mathcal{D},n}
% -\bar s_{\mathcal{D},n}(\mathbf{m}_\theta)
% &=
% \eta{\textstyle\sum\nolimits_{d\in\bar S_\theta}}
% (\mathbf{v}_{\text{val}})_d g_{n,d},
% \quad
% s^{\mathrm{iso}}_{\theta,d}
% -\bar s_{\theta,d}(S_\mathcal{D})
% =
% \eta(\mathbf{v}_{\text{val}})_d
% {\textstyle\sum\nolimits_{n\in\bar S_\mathcal{D}}} g_{n,d}.
% \end{aligned}
% \label{eq:coord_gap}
% \end{equation}
% \end{proposition}
% \end{propositionbox}
The shared interaction matrix exposes the cost of isolated scoring under joint restricted updates. Let $\mathbf{G}=\sum_{n=1}^N\mathbf{g}_n$ and $\mathbf{G}_{S_\mathcal{D}}=\sum_{n\in S_\mathcal{D}}\mathbf{g}_n$. Under both restrictions, $\Delta\boldsymbol{\theta}(S_\mathcal{D},\mathbf{m}_\theta)=-\eta(\mathbf{m}_\theta\odot \mathbf{G}_{S_\mathcal{D}})$, inducing mask-aware first-order scores $\bar s_{\mathcal{D},n}(\mathbf{m}_\theta)=\eta\langle\mathbf{v}_{\text{val}}\odot \mathbf{m}_\theta,\mathbf{g}_n\rangle$ and $\bar s_{\theta,d}(S_\mathcal{D})=\eta(\mathbf{v}_{\text{val}})_d(\mathbf{G}_{S_\mathcal{D}})_d$.
\begin{propositionbox}
\begin{proposition}
\label{prop:coord_gap}
Let
\(s^{\mathrm{iso}}_{\mathcal{D},n}
=\eta\langle\mathbf{v}_{\text{val}},\mathbf{g}_n\rangle\)
and
\(s^{\mathrm{iso}}_{\theta,d}
=\eta(\mathbf{v}_{\text{val}})_dG_d\)
be the isolated first-order data and parameter scores. Let
\(\bar S_\theta=\{d\in[D]:(\mathbf{m}_\theta)_d=0\}\) and
\(\bar S_\mathcal{D}=[N]\setminus S_\mathcal{D}\). Then:
\begin{equation}
\begin{aligned}
s^{\mathrm{iso}}_{\mathcal{D},n}
-\bar s_{\mathcal{D},n}(\mathbf{m}_\theta)
&=
\eta{\textstyle\sum\nolimits_{d\in\bar S_\theta}}
(\mathbf{v}_{\text{val}})_d g_{n,d},
\quad
s^{\mathrm{iso}}_{\theta,d}
-\bar s_{\theta,d}(S_\mathcal{D})
=
\eta(\mathbf{v}_{\text{val}})_d
{\textstyle\sum\nolimits_{n\in\bar S_\mathcal{D}}} g_{n,d}.
\end{aligned}
\label{eq:coord_gap}
\end{equation}
\end{proposition}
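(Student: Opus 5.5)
The plan is to expand both sides coordinatewise using the definitions stated just before Proposition~\ref{prop:coord_gap}. Then I split each full sum according to the partition induced by the mask or the selected subset. No earlier theorem is needed; the statement is a direct consequence of linearity of the inner product and of summation. It is also consistent with the row--column picture of $\mathbf{M}^{(1)}$ from Eq.~\eqref{eq:interaction_matrices}.

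\textbf{Data side.} I would write $s^{\mathrm{iso}}_{\mathcal{D},n}=\eta\sum_{d=1}^D(\mathbf{v}_{\text{val}})_d g_{n,d}$, which is exactly the row sum $\sum_d M^{(1)}_{n,d}$. Next, since $(\mathbf{v}_{\text{val}}\odot\mathbf{m}_\theta)_d=(\mathbf{v}_{\text{val}})_d(\mathbf{m}_\theta)_d$, the mask-aware score becomes $\bar s_{\mathcal{D},n}(\mathbf{m}_\theta)=\eta\sum_{d=1}^D(\mathbf{m}_\theta)_d(\mathbf{v}_{\text{val}})_d g_{n,d}$. Subtracting gives $\eta\sum_d\bigl(1-(\mathbf{m}_\theta)_d\bigr)(\mathbf{v}_{\text{val}})_d g_{n,d}$. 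Because $\mathbf{m}_\theta$ is binary, $1-(\mathbf{m}_\theta)_d$ is the indicator of $d\in\bar S_\theta$, so the difference is $\eta\sum_{d\in\bar S_\theta}(\mathbf{v}_{\text{val}})_d g_{n,d}$. This is the first identity.

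\textbf{Parameter side.} I would use $G_d=\sum_{n=1}^N g_{n,d}$ and $(\mathbf{G}_{S_\mathcal{D}})_d=\sum_{n\in S_\mathcal{D}}g_{n,d}$. Since $[N]$ is the disjoint union of $S_\mathcal{D}$ and $\bar S_\mathcal{D}$, we have $G_d-(\mathbf{G}_{S_\mathcal{D}})_d=\sum_{n\in\bar S_\mathcal{D}}g_{n,d}$. Multiplying by $\eta(\mathbf{v}_{\text{val}})_d$ gives the second identity. It can equivalently be read as the column sum of $\mathbf{M}^{(1)}$ restricted to the unselected rows.

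\textbf{Main obstacle.} There is no real analytic obstacle. The only point needing care is to confirm that the mask-aware scores are indeed the first-order localized quantities induced by the joint update $\Delta\boldsymbol{\theta}(S_\mathcal{D},\mathbf{m}_\theta)=-\eta(\mathbf{m}_\theta\odot\mathbf{G}_{S_\mathcal{D}})$.

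\begin{itemize}
\item Its first-order utility is $\eta\,\mathbf{v}_{\text{val}}^\top(\mathbf{m}_\theta\odot\mathbf{G}_{S_\mathcal{D}})=\eta\sum_{n\in S_\mathcal{D}}\sum_{d}(\mathbf{m}_\theta)_d(\mathbf{v}_{\text{val}})_d g_{n,d}$.
\item This utility is additive over both $n$ and $d$, exactly as in the first-order case of Theorem~\ref{thm:dual_shapley}.
\item The per-sample marginals of this utility are $\bar s_{\mathcal{D},n}$, and the per-coordinate marginals are $\bar s_{\theta,d}$.
\end{itemize}

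With that interpretation fixed, the rest is bookkeeping of the partitions $[D]=S_\theta\sqcup\bar S_\theta$ and $[N]=S_\mathcal{D}\sqcup\bar S_\mathcal{D}$.
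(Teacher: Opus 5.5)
Your proposal is correct and matches the paper's proof essentially step for step. The paper likewise first identifies $\bar s_{\mathcal D,n}(\mathbf{m}_\theta)$ and $\bar s_{\theta,d}(S_\mathcal{D})$ as the marginals of the first-order utility of the joint update $-\eta(\mathbf{m}_\theta\odot\mathbf{G}_{S_\mathcal{D}})$. It then subtracts, using $\mathbf{1}_D-\mathbf{m}_\theta$ as the indicator of the frozen coordinates and $G_d-G_{S_\mathcal{D},d}=\sum_{n\in[N]\setminus S_\mathcal{D}}g_{n,d}$.
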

\end{propositionbox}
Gaps in Eq.~\eqref{eq:coord_gap} arise from opposite sides of the joint restriction: data scores use $\mathbf{m}_\theta$-frozen coordinates; parameter scores use $S_\mathcal{D}$-excluded samples. Combining independent selections mixes mutually unverified signals. DualSFT mitigates this by extracting marginals from one validation-improvement projection, aligning axes before the joint update. Appendix~\ref{app:coord_gap} proves Proposition~\ref{prop:coord_gap}.

\vspace{-0.05in}
\section{The DualSFT Algorithm}
\label{sec:algorithm}

\vspace{-0.05in}
\subsection{Pilot Warmup and Global Dual Scoring}
\label{subsec:surrogates}

Sections~\ref{sec:blo_framework} and~\ref{sec:shared_scores} define diagonal-curvature scores at $\bar{\boldsymbol{\theta}}$, which DualSFT extracts. To stabilize scoring and reduce gradient noise~\citep{xia2024less,min2026gist,liang2025boosting}, DualSFT obtains $\bar{\boldsymbol{\theta}}$ via $\mathcal{D}_{\text{warm}}$ warmup, using a damped AdamW RMS-gradient scale as curvature proxy $\hat{\mathbf{c}}$~\citep{zhang2025adammini,kalra2025when,bai2025adaptive,robert2025ldadam,vyas2025soap}. Given AdamW second-moment buffer $\mathbf{r}_{t_w}$, set $\hat{\mathbf{r}}_{t_w}=\mathbf{r}_{t_w}/(1-\beta_2^{t_w})$ and $\hat{\mathbf{c}}=\sqrt{\hat{\mathbf{r}}_{t_w}}+\epsilon_{\mathrm{adam}}$ elementwise. Appendix~\ref{app:adam_compatibility} relates this proxy to our analyzed local surrogate. Let accumulated scoring-pool gradient $\mathbf{G}=\sum_{n=1}^{N_{\mathrm{sc}}}\mathbf{g}_n$; set $\eta_{\mathrm{sc}}=\eta_{\mathrm{ft}}/N_{\mathrm{sc}}$ via $\eta_{\mathrm{ft}}$. With $\mathbf{G}$, $\mathbf{v}_{\text{new}}$, and CWSD-based $\mathbf{v}_{\text{prior}}$, DualSFT forms the validation-improvement projection:
\begin{equation}
\mathbf{u}\triangleq \eta_{\mathrm{sc}} \mathbf{v}_{\text{val}}-\frac{\eta_{\mathrm{sc}}^2}{2}\hat{\mathbf{c}}\odot \mathbf{G}
=
\eta_{\mathrm{sc}}(\mathbf{v}_{\text{new}}+\lambda \mathbf{v}_{\text{prior}})-\frac{\eta_{\mathrm{sc}}^2}{2}\hat{\mathbf{c}}\odot \mathbf{G},
\label{eq:u_vector}
\end{equation}
which combines the first-order validation gradient with the optimizer-aware diagonal correction. The diagonal-curvature interaction score then reduces to a rank-one form along $\mathbf{u}$:
\begin{equation}
M^{(\mathrm{diag})}_{n,d}=u_d g_{n,d},\qquad
s_{\theta,d}=\textstyle \sum_{n=1}^{N_{\mathrm{sc}}} M_{n,d}^{(\mathrm{diag})}=u_d G_d,\qquad
s_{\mathcal D,n}=\textstyle \sum_{d=1}^D M_{n,d}^{(\mathrm{diag})}=\langle \mathbf{u},\mathbf{g}_n\rangle.
\label{eq:practical_scores}
\end{equation}

Both scores share $\mathbf{u}$: $\mathbf{s}_{\theta}\triangleq[s_{\theta,1},\ldots,s_{\theta,D}]^\top=\mathbf{u}\odot\mathbf{G}$ by columns and $\mathbf{s}_{\mathcal D}\triangleq[s_{\mathcal D,1},\ldots,s_{\mathcal D,N_{\mathrm{sc}}}]^\top$ by rows. Top-$k$/$b$ use largest signed, not absolute, scores. Appendix~\ref{app:error_propagation} shows the diagonal proxy induces only $O(\eta_{\mathrm{sc}}^2)$ perturbations and preserves Top-$k$/$b$ selections when boundary gaps exceed them. While accumulating $\mathbf{G}$, we stream $s_{\mathcal D,n}$ with Ghost Dot Product~\citep{wang2024greats,wang2025data}, avoiding per-sample-gradient materialization. For affine layers, $s_{\mathcal D,n}=\sum_{\text{layers}}\boldsymbol{\epsilon}_n^\top \mathbf{U}_{\text{layer}}\mathbf{a}_n$, with activation $\mathbf{a}_n$, pre-activation gradient $\boldsymbol{\epsilon}_n$, and $\mathbf{U}_{\text{layer}}$ the corresponding $\mathbf{u}$ segment reshaped layer-wise.

Pretraining-data-free CWSD instantiates $\mathbf{v}_{\text{prior}}$ at $\bar{\boldsymbol{\theta}}$. With $\mathcal{D}_{\text{anchor}}\cap\mathcal{D}_{\text{train}}=\emptyset$, let $\mathcal{X}_{\text{anchor}}$ be its inputs:
\begin{equation}
\label{eq:cwsd}
\mathbf{v}_{\text{prior}} \approx \nabla_{\bar{\boldsymbol{\theta}}} \frac{1}{|\mathcal{X}_{\text{anchor}}|}\textstyle \sum_{x\in \mathcal{X}_{\text{anchor}}} \omega(x)\tau^2 \mathrm{KL}\!\Big( P_\tau(\cdot \mid \boldsymbol{\theta}_{\text{old}}, x) \,\big\|\, P_\tau(\cdot \mid \bar{\boldsymbol{\theta}}, x) \Big),
\end{equation}
where $\tau$ is the distillation temperature and $\omega(x)=1-H_{\boldsymbol{\theta}_{\text{old}}}(x)/\log|\mathcal{V}|$ upweights old-model-confident inputs, with $H_{\boldsymbol{\theta}_{\text{old}}}(x)$ the mean per-token predictive entropy. The KL term is implemented as the mean token-wise KL over non-padding response tokens, then averaged over anchor examples. Forward KL emphasizes old-model high-probability regions, aligning $\mathbf{v}_{\text{prior}}$ with confident behaviors. Appendix~\ref{app:cwsd_derivation} shows that CWSD induces the logit-level gradient $\omega(x)\tau(\mathbf{p}-\mathbf{q})$ and obtains the $D$-dimensional preservation direction through the student Jacobian at $\bar{\boldsymbol{\theta}}$.

\subsection{Restricted Fine-Tuning and Complexity}
\label{subsec:unified_algorithm}
In Algorithm~\ref{alg:dualsft}, pilot warmup yields $\bar{\boldsymbol{\theta}}$, $\hat{\mathbf{c}}$; one-shot dual scoring via $\mathbf{u}$ selects $\mathcal{D}_{\text{sel}}$, $\mathbf{m}_{\theta}$; restricted fine-tuning updates selected parameters on $\mathcal{D}_{\text{sel}}$. Restarting from $\boldsymbol{\theta}_{\text{old}}$ decouples scoring from adaptation, following warmup-then-select-from-scratch pipelines~\citep{xia2024less,min2026gist,liang2025boosting}. Final AdamW fine-tuning, $\textsc{MaskedFT}_{\eta_{\mathrm{ft}}}(\boldsymbol{\theta}_{\text{old}},\mathcal{D}_{\text{sel}},\mathbf{m}_{\theta})$, masks gradients, adaptive moments, and decoupled weight decay.

\begin{wrapfigure}{R}{0.5\columnwidth}
\vspace{-0.35in}
\begin{minipage}{\linewidth}
\begin{algorithm}[H]
\caption{DualSFT}
\label{alg:dualsft}
\begingroup
\small
\linespread{1}\selectfont
\begin{algorithmic}[1]
\Require $\boldsymbol{\theta}_{\text{old}}, \mathcal{D}_{\text{train}}, \mathcal{D}_{\text{val}}, \mathcal{D}_{\text{warm}}, \mathcal{D}_{\text{anchor}}, \eta_{\mathrm{ft}}, \lambda, \tau, b, k$
\State $(\bar{\boldsymbol{\theta}}, \hat{\mathbf{c}}) \leftarrow \textsc{Warmup}(\boldsymbol{\theta}_{\text{old}}, \mathcal{D}_{\text{warm}})$
\State $\mathcal{D}_{\text{pool}}\leftarrow \mathcal{D}_{\text{train}}\setminus \mathcal{D}_{\text{warm}}$
\State $\mathbf{G} \leftarrow \textsc{GradSum}(\bar{\boldsymbol{\theta}},\mathcal{D}_{\text{pool}})$
\State $N_{\mathrm{sc}}\leftarrow |\mathcal{D}_{\text{pool}}|,\quad \eta_{\mathrm{sc}}\leftarrow \eta_{\mathrm{ft}}/N_{\mathrm{sc}}$
\State $\mathbf{v}_{\text{new}} \leftarrow \nabla_{\bar{\boldsymbol{\theta}}}\mathcal{L}_{\text{new}}(\bar{\boldsymbol{\theta}})$
\State $\mathbf{v}_{\text{prior}} \leftarrow \textsc{CWSD}(\bar{\boldsymbol{\theta}},\boldsymbol{\theta}_{\text{old}},\mathcal{D}_{\text{anchor}},\tau)$
\State $\mathbf{u} \leftarrow \eta_{\mathrm{sc}}(\mathbf{v}_{\text{new}}+\lambda\mathbf{v}_{\text{prior}})-(\eta_{\mathrm{sc}}^2/2)\hat{\mathbf{c}}\odot\mathbf{G}$
\State $\mathbf{s}_{\theta}\leftarrow \mathbf{u}\odot\mathbf{G},\quad \mathbf{s}_{\mathcal D}\leftarrow \textsc{GhostDot}(\mathbf{u},\bar{\boldsymbol{\theta}},\mathcal{D}_{\text{pool}})$
\State $\mathbf{m}_{\theta}\leftarrow \textsc{TopK}_{\mathrm{signed}}(\mathbf{s}_{\theta},k)$
\State $\mathbf{m}_{\mathcal D}\leftarrow \textsc{TopB}_{\mathrm{signed}}(\mathbf{s}_{\mathcal D},b)$
\State $\mathcal{D}_{\text{sel}}\leftarrow\{(x_n,y_n)\in\mathcal{D}_{\text{pool}}:(\mathbf{m}_{\mathcal D})_n=1\}$
\State $\boldsymbol{\theta}^{\star}\leftarrow\textsc{MaskedFT}_{\eta_{\mathrm{ft}}}(\boldsymbol{\theta}_{\text{old}},\mathcal{D}_{\text{sel}},\mathbf{m}_{\theta})$
\State \textbf{return} $\mathbf{m}_{\mathcal D},\mathbf{m}_{\theta},\boldsymbol{\theta}^{\star}$
\end{algorithmic}
\endgroup
\end{algorithm}
\end{minipage}
\vspace{-15pt}
\end{wrapfigure}
DualSFT avoids materializing the $\mathcal{O}(N_{\mathrm{sc}}\times D)$ interaction matrix, confining selection to lightweight warmup and two streaming passes. First accumulates $\mathbf{G}$, constructing $\mathbf{u}$; second computes sample scores, maintaining the Top-$b$ subset. Parameter scoring uses full-model vectors ($\mathbf{G}$, $\mathbf{u}$, $\hat{\mathbf{c}}$), incurring $\mathcal{O}(D)$ auxiliary memory, linear in model size, implementable block-wise or via sharding. For data scoring, Ghost Dot Product avoids materializing per-sample gradients $\mathbf{g}_n\in\mathbb{R}^D$. Restricted fine-tuning proceeds on $\mathcal{D}_{\text{sel}}$ masked by $\mathbf{m}_{\theta}$. Thus, DualSFT reduces scoring redundancy and avoids interaction-matrix materialization, with parameter scoring requiring linear auxiliary state.

% \vspace{-0.1in}
\section{Experiments}
\label{sec:experiment}

% We evaluate DualSFT from four aspects. \circnum{1} Does DualSFT outperform standard, parameter-only, data-only, and sequential-hybrid baselines under matched budgets? \circnum{2} Which components contribute most, and how sensitive is performance to key hyperparameters? \circnum{3} Do DualSFT's algorithmic claims hold under diagnostic analysis of one-shot sufficiency, selection structure, and score faithfulness? \circnum{4} How does DualSFT compare against recent forgetting-specific baselines?

We evaluate DualSFT on matched-budget performance, component and hyperparameter sensitivity, diagnostic faithfulness, and comparison with forgetting-specific baselines.

\vspace{-0.075in}
\subsection{Experimental Setup}

% \textbf{Baselines.} We group baselines by comparison dimension. Main efficient fine-tuning considers four categories: \circnum{1} standard (Standard SFT, LoRA~\citep{hu2022lora}); \circnum{2} parameter-only sparse (S$^2$FT~\citep{yang2024s2ft}, SMT~\citep{he2025smt}, LIFT~\citep{liu2025lift}, NanoAdam~\citep{zhou2025pay}); \circnum{3} data-only (Random, LESS~\citep{xia2024less}, FisherSFT~\citep{deb2025fishersft}, SPICE~\citep{chang2026spice}); and \circnum{4} sequential hybrid (LESS + S$^2$FT, SPICE + NanoAdam). Auxiliary forgetting-mitigation adds recent forgetting-aware baselines: DavIR~\citep{zhou2025davir}, FLOW~\citep{sanyal2025upweighting}, STM~\citep{wu2025mitigating}, FAPM~\citep{huang2025mitigating}, and TALR~\citep{lin2026sft}.

\textbf{Baselines.} We compare against \circnum{1} standard methods (Standard SFT, LoRA~\citep{hu2022lora}); \circnum{2} parameter-only sparse methods (S$^2$FT~\citep{yang2024s2ft}, SMT~\citep{he2025smt}, LIFT~\citep{liu2025lift}, NanoAdam~\citep{zhou2025pay}); \circnum{3} data-only methods (Random, LESS~\citep{xia2024less}, FisherSFT~\citep{deb2025fishersft}, SPICE~\citep{chang2026spice}); \circnum{4} sequential hybrids (LESS + S$^2$FT, SPICE + NanoAdam); and \circnum{5} forgetting-aware baselines (DavIR~\citep{zhou2025davir}, FLOW~\citep{sanyal2025upweighting}, STM~\citep{wu2025mitigating}, FAPM~\citep{huang2025mitigating}, TALR~\citep{lin2026sft}).

\textbf{Datasets \& Metrics.} We evaluate on Magicoder~\citep{wei2024magicoder} and MetaMathQA~\citep{yu2024metamath}, each reserving 1{,}024 validation samples and 2{,}048 disjoint anchor samples for CWSD. Following prior studies~\citep{sanyal2025upweighting, wang2025on, yuan2026differential, lin2026sft}, we assess both plasticity and stability. For code, plasticity is measured by HumanEval~\citep{chen2021evaluating} and stability by the four out-of-domain benchmarks PIQA~\citep{bisk2020piqa}, ARC-C~\citep{clark2018think}, MMLU~\citep{hendrycks2021measuring}, and GSM8K~\citep{cobbe2021training}; for math, GSM8K and HumanEval swap roles. We report Plasticity, Stability (four-benchmark mean), and Overall (their arithmetic mean); see Appendix~\ref{app:eval_details} for partitioning.

\textbf{Implementation.} Base models are Llama-3.2-3B~\citep{grattafiori2024llama}, Gemma-3-4B-PT~\citep{gemma2025}, and Qwen-3.5-9B-Base~\citep{qwen35blog}. Training uses LlamaFactory~\citep{zheng2024llamafactory} with AdamW and cosine decay (Magicoder: 3 epochs, batch 64; MetaMathQA: 2 epochs, batch 128). DualSFT uses $\lambda=0.8$, $\tau=1.0$, a 1-epoch warmup with fraction $r_{\mathrm{warm}}=5\%$, and budgets $b=10\%$ for data and $k=5\%$ for parameters. For scoring, $\eta_{\mathrm{sc}}=\eta_{\mathrm{ft}}/N_{\mathrm{sc}}$, with $\eta_{\mathrm{ft}}$ reported in Appendix~\ref{app:impl_details}; budget-constrained baselines use matched budgets within their comparison groups. We report 3-seed means and stds. Appendices~\ref{app:impl_details} and~\ref{app:resource_accounting} detail hyperparameters, separating adaptation budgets from scoring-only validation/anchors.

\vspace{-0.075in}
\subsection{Main Results}
\label{sec:main_results}

\begin{table*}[t]
\setlength{\aboverulesep}{1.5pt}
\setlength{\belowrulesep}{1.5pt}
\centering
\caption{Fine-tuning results on Magicoder (\%). Pre-trained, Standard SFT, and LoRA (gray) are excluded from group ranking. In each group, \colorbox{red!10}{\textbf{best}} and \colorbox{orange!15}{second-best} results are highlighted.}
\renewcommand{\arraystretch}{0.1}
\label{tab:main_results_magicoder}
\vspace{-0.1in}
\footnotesize
\setlength{\fboxsep}{0.5pt} % 严格压缩高亮框内边距
\resizebox{\textwidth}{!}{
\begin{tabular}{c | c | ccccc | c | c}
\toprule
\multirow{2}{*}{\textbf{Model}} & \multirow{2}{*}{\textbf{Method}} & \multicolumn{5}{c|}{\textbf{Stability} $\uparrow$} & \textbf{Plasticity} $\uparrow$ & \textbf{Overall} $\uparrow$ \\
\cmidrule(lr){3-7} \cmidrule(lr){8-8} \cmidrule(lr){9-9}
& & \textbf{ARC-C} & \textbf{PIQA} & \textbf{MMLU} & \textbf{GSM8K} & \textbf{Avg} & \textbf{HE} & \textbf{Score} \\
\midrule
\multirow{16}{*}{\cellcolor{white}\rotatebox{90}{\textbf{Llama-3.2-3B\qquad\qquad}}}
& \textcolor{gray}{Pre-trained} & \textcolor{gray}{43.00$_{\pm 1.45}$} & \textcolor{gray}{76.61$_{\pm 0.99}$} & \textcolor{gray}{56.52$_{\pm 0.40}$} & \textcolor{gray}{27.45$_{\pm 0.73}$} & \textcolor{gray}{50.90$_{\pm 0.49}$} & \textcolor{gray}{28.66$_{\pm 0.54}$} & \textcolor{gray}{39.78$_{\pm 0.36}$} \\
& \textcolor{gray}{Standard SFT} & \textcolor{gray}{39.99$_{\pm 1.46}$} & \textcolor{gray}{72.09$_{\pm 0.98}$} & \textcolor{gray}{48.18$_{\pm 0.39}$} & \textcolor{gray}{20.69$_{\pm 0.85}$} & \textcolor{gray}{45.24$_{\pm 0.50}$} & \textcolor{gray}{44.68$_{\pm 0.56}$} & \textcolor{gray}{44.96$_{\pm 0.37}$} \\
& \textcolor{gray}{LoRA}\venuetag{ICLR'22} & \textcolor{gray}{42.61$_{\pm 1.45}$} & \textcolor{gray}{74.43$_{\pm 0.99}$} & \textcolor{gray}{51.87$_{\pm 0.41}$} & \textcolor{gray}{23.29$_{\pm 0.83}$} & \textcolor{gray}{48.05$_{\pm 0.50}$} & \textcolor{gray}{41.46$_{\pm 0.56}$} & \textcolor{gray}{44.76$_{\pm 0.37}$} \\
\cmidrule(lr){2-9}
& S$^2$FT\venuetag{NIPS'24} & 41.80$_{\pm 1.45}$ & \colorbox{red!10}{\textbf{76.26$_{\pm 0.98}$}} & 52.10$_{\pm 0.40}$ & 22.58$_{\pm 0.72}$ & 48.19$_{\pm 0.48}$ & 42.78$_{\pm 0.76}$ & 45.48$_{\pm 0.45}$ \\
& SMT\venuetag{ICLR'25} & 40.97$_{\pm 1.45}$ & 74.15$_{\pm 0.98}$ & 52.59$_{\pm 0.40}$ & 23.89$_{\pm 0.75}$ & 47.90$_{\pm 0.49}$ & 43.09$_{\pm 0.61}$ & 45.50$_{\pm 0.39}$ \\
& LIFT\venuetag{ICML'25} & 41.22$_{\pm 1.45}$ & 72.50$_{\pm 0.99}$ & 52.14$_{\pm 0.40}$ & 23.23$_{\pm 0.73}$ & 47.27$_{\pm 0.49}$ & 42.21$_{\pm 0.52}$ & 44.74$_{\pm 0.36}$ \\
& NanoAdam\venuetag{NIPS'25} & \colorbox{red!10}{\textbf{42.66$_{\pm 1.44}$}} & 74.68$_{\pm 0.98}$ & \colorbox{orange!15}{52.71$_{\pm 0.40}$} & \colorbox{orange!15}{24.39$_{\pm 0.73}$} & \colorbox{orange!15}{48.61$_{\pm 0.48}$} & \colorbox{orange!15}{43.54$_{\pm 0.87}$} & \colorbox{orange!15}{46.08$_{\pm 0.50}$} \\
& \textbf{DualSFT-Param (Ours)} & \colorbox{orange!15}{42.54$_{\pm 1.46}$} & \colorbox{orange!15}{76.19$_{\pm 0.99}$} & \colorbox{red!10}{\textbf{54.01$_{\pm 0.41}$}} & \colorbox{red!10}{\textbf{26.27$_{\pm 0.75}$}} & \colorbox{red!10}{\textbf{49.75$_{\pm 0.49}$}} & \colorbox{red!10}{\textbf{44.52$_{\pm 0.72}$}} & \colorbox{red!10}{\textbf{47.14$_{\pm 0.44}$}} \\
\cmidrule(lr){2-9}
& Random & \colorbox{red!10}{\textbf{42.99$_{\pm 1.46}$}} & \colorbox{orange!15}{76.09$_{\pm 0.98}$} & \colorbox{red!10}{\textbf{55.31$_{\pm 0.40}$}} & \colorbox{red!10}{\textbf{26.99$_{\pm 0.78}$}} & \colorbox{red!10}{\textbf{50.35$_{\pm 0.49}$}} & 38.59$_{\pm 0.77}$ & 44.47$_{\pm 0.46}$ \\
& LESS\venuetag{ICML'24} & 42.18$_{\pm 1.45}$ & 75.66$_{\pm 0.99}$ & 53.97$_{\pm 0.40}$ & 23.46$_{\pm 0.63}$ & 48.82$_{\pm 0.48}$ & \colorbox{orange!15}{42.66$_{\pm 0.82}$} & \colorbox{orange!15}{45.74$_{\pm 0.47}$} \\
& FisherSFT\venuetag{ICML'25} & 41.55$_{\pm 1.44}$ & 74.17$_{\pm 0.99}$ & 52.83$_{\pm 0.40}$ & 22.92$_{\pm 0.62}$ & 47.87$_{\pm 0.47}$ & 41.37$_{\pm 0.74}$ & 44.62$_{\pm 0.44}$ \\
& SPICE\venuetag{ICLR'26} & 40.54$_{\pm 1.45}$ & \colorbox{orange!15}{76.09$_{\pm 0.98}$} & 53.13$_{\pm 0.40}$ & 24.52$_{\pm 0.73}$ & 48.57$_{\pm 0.48}$ & 42.32$_{\pm 0.66}$ & 45.45$_{\pm 0.41}$ \\
& \textbf{DualSFT-Data (Ours)} & \colorbox{orange!15}{42.66$_{\pm 1.44}$} & \colorbox{red!10}{\textbf{76.44$_{\pm 0.99}$}} & \colorbox{orange!15}{55.28$_{\pm 0.40}$} & \colorbox{orange!15}{26.54$_{\pm 0.73}$} & \colorbox{orange!15}{50.23$_{\pm 0.48}$} & \colorbox{red!10}{\textbf{44.05$_{\pm 0.74}$}} & \colorbox{red!10}{\textbf{47.14$_{\pm 0.44}$}} \\
\cmidrule(lr){2-9}
& LESS + S$^2$FT & \colorbox{orange!15}{42.57$_{\pm 1.44}$} & \colorbox{red!10}{\textbf{76.68$_{\pm 0.96}$}} & 53.55$_{\pm 0.39}$ & 22.86$_{\pm 0.71}$ & 48.92$_{\pm 0.48}$ & 38.85$_{\pm 0.66}$ & 43.88$_{\pm 0.41}$ \\
& SPICE + NanoAdam & 42.16$_{\pm 1.46}$ & \colorbox{orange!15}{76.61$_{\pm 0.98}$} & \colorbox{orange!15}{54.87$_{\pm 0.40}$} & \colorbox{orange!15}{25.93$_{\pm 0.71}$} & \colorbox{orange!15}{49.89$_{\pm 0.48}$} & \colorbox{orange!15}{39.52$_{\pm 0.68}$} & \colorbox{orange!15}{44.71$_{\pm 0.42}$} \\
& \textbf{DualSFT (Ours)} & \colorbox{red!10}{\textbf{42.94$_{\pm 1.45}$}} & 76.23$_{\pm 0.98}$ & \colorbox{red!10}{\textbf{56.09$_{\pm 0.40}$}} & \colorbox{red!10}{\textbf{26.88$_{\pm 0.70}$}} & \colorbox{red!10}{\textbf{50.54$_{\pm 0.48}$}} & \colorbox{red!10}{\textbf{43.67$_{\pm 0.71}$}} & \colorbox{red!10}{\textbf{47.10$_{\pm 0.43}$}} \\
\midrule

\multirow{16}{*}{\rotatebox{90}{\textbf{Gemma-3-4B-PT\qquad\qquad}}}
& \textcolor{gray}{Pre-trained} & \textcolor{gray}{51.45$_{\pm 1.46}$} & \textcolor{gray}{79.16$_{\pm 0.95}$} & \textcolor{gray}{59.61$_{\pm 0.39}$} & \textcolor{gray}{37.00$_{\pm 0.83}$} & \textcolor{gray}{56.81$_{\pm 0.49}$} & \textcolor{gray}{35.37$_{\pm 0.74}$} & \textcolor{gray}{46.09$_{\pm 0.44}$} \\
& \textcolor{gray}{Standard SFT} & \textcolor{gray}{47.17$_{\pm 1.46}$} & \textcolor{gray}{76.39$_{\pm 0.99}$} & \textcolor{gray}{53.87$_{\pm 0.40}$} & \textcolor{gray}{24.03$_{\pm 0.81}$} & \textcolor{gray}{50.37$_{\pm 0.50}$} & \textcolor{gray}{58.10$_{\pm 0.89}$} & \textcolor{gray}{54.23$_{\pm 0.51}$} \\
& \textcolor{gray}{LoRA}\venuetag{ICLR'22} & \textcolor{gray}{49.15$_{\pm 1.46}$} & \textcolor{gray}{75.57$_{\pm 1.00}$} & \textcolor{gray}{56.71$_{\pm 0.39}$} & \textcolor{gray}{31.63$_{\pm 0.82}$} & \textcolor{gray}{53.27$_{\pm 0.50}$} & \textcolor{gray}{54.66$_{\pm 0.91}$} & \textcolor{gray}{53.96$_{\pm 0.52}$} \\
\cmidrule(lr){2-9}
& S$^2$FT\venuetag{NIPS'24} & 49.74$_{\pm 1.46}$ & \colorbox{orange!15}{78.29$_{\pm 0.96}$} & 55.75$_{\pm 0.40}$ & 31.09$_{\pm 0.60}$ & 53.72$_{\pm 0.47}$ & 55.29$_{\pm 0.88}$ & 54.50$_{\pm 0.50}$ \\
& SMT\venuetag{ICLR'25} & 49.92$_{\pm 1.46}$ & 77.98$_{\pm 0.93}$ & 56.44$_{\pm 0.39}$ & 32.74$_{\pm 0.84}$ & 54.27$_{\pm 0.49}$ & \colorbox{orange!15}{56.44$_{\pm 0.86}$} & 55.36$_{\pm 0.50}$ \\
& LIFT\venuetag{ICML'25} & 50.02$_{\pm 1.46}$ & 77.16$_{\pm 0.95}$ & 54.56$_{\pm 0.40}$ & 30.61$_{\pm 0.63}$ & 53.09$_{\pm 0.47}$ & 55.61$_{\pm 0.63}$ & 54.35$_{\pm 0.39}$ \\
& NanoAdam\venuetag{NIPS'25} & \colorbox{orange!15}{50.78$_{\pm 1.45}$} & 78.20$_{\pm 0.94}$ & \colorbox{red!10}{\textbf{58.33$_{\pm 0.39}$}} & \colorbox{orange!15}{33.12$_{\pm 0.84}$} & \colorbox{orange!15}{55.11$_{\pm 0.49}$} & 56.38$_{\pm 0.90}$ & \colorbox{orange!15}{55.74$_{\pm 0.51}$} \\
& \textbf{DualSFT-Param (Ours)} & \colorbox{red!10}{\textbf{51.13$_{\pm 1.45}$}} & \colorbox{red!10}{\textbf{78.54$_{\pm 0.98}$}} & \colorbox{orange!15}{58.14$_{\pm 0.39}$} & \colorbox{red!10}{\textbf{36.34$_{\pm 0.83}$}} & \colorbox{red!10}{\textbf{56.04$_{\pm 0.49}$}} & \colorbox{red!10}{\textbf{57.52$_{\pm 0.81}$}} & \colorbox{red!10}{\textbf{56.78$_{\pm 0.47}$}} \\
\cmidrule(lr){2-9}
& Random & \colorbox{red!10}{\textbf{51.25$_{\pm 1.46}$}} & \colorbox{orange!15}{78.89$_{\pm 0.95}$} & \colorbox{orange!15}{56.91$_{\pm 0.40}$} & \colorbox{orange!15}{34.65$_{\pm 0.67}$} & \colorbox{orange!15}{55.43$_{\pm 0.48}$} & 49.24$_{\pm 0.80}$ & 52.33$_{\pm 0.47}$ \\
& LESS\venuetag{ICML'24} & 50.92$_{\pm 1.46}$ & \colorbox{red!10}{\textbf{78.92$_{\pm 0.93}$}} & 54.14$_{\pm 0.39}$ & 31.58$_{\pm 0.84}$ & 53.89$_{\pm 0.49}$ & \colorbox{red!10}{\textbf{57.09$_{\pm 0.91}$}} & \colorbox{orange!15}{55.49$_{\pm 0.52}$} \\
& FisherSFT\venuetag{ICML'25} & 50.25$_{\pm 1.45}$ & 74.51$_{\pm 0.94}$ & 53.71$_{\pm 0.38}$ & 32.17$_{\pm 0.77}$ & 52.66$_{\pm 0.48}$ & 53.47$_{\pm 0.75}$ & 53.07$_{\pm 0.45}$ \\
& SPICE\venuetag{ICLR'26} & 50.47$_{\pm 1.44}$ & 78.16$_{\pm 0.95}$ & 55.12$_{\pm 0.40}$ & 32.63$_{\pm 0.72}$ & 54.10$_{\pm 0.48}$ & 56.82$_{\pm 0.77}$ & 55.46$_{\pm 0.45}$ \\
& \textbf{DualSFT-Data (Ours)} & \colorbox{orange!15}{51.01$_{\pm 1.45}$} & 78.52$_{\pm 0.98}$ & \colorbox{red!10}{\textbf{58.33$_{\pm 0.40}$}} & \colorbox{red!10}{\textbf{36.55$_{\pm 0.79}$}} & \colorbox{red!10}{\textbf{56.10$_{\pm 0.49}$}} & \colorbox{orange!15}{56.94$_{\pm 0.68}$} & \colorbox{red!10}{\textbf{56.52$_{\pm 0.42}$}} \\
\cmidrule(lr){2-9}
& LESS + S$^2$FT & \colorbox{orange!15}{51.01$_{\pm 1.46}$} & \colorbox{orange!15}{78.66$_{\pm 0.93}$} & 55.68$_{\pm 0.39}$ & 31.14$_{\pm 0.78}$ & 54.12$_{\pm 0.48}$ & \colorbox{orange!15}{53.12$_{\pm 0.85}$} & 53.62$_{\pm 0.49}$ \\
& SPICE + NanoAdam & 50.78$_{\pm 1.45}$ & 78.54$_{\pm 0.94}$ & \colorbox{orange!15}{57.29$_{\pm 0.40}$} & \colorbox{orange!15}{33.89$_{\pm 0.82}$} & \colorbox{orange!15}{55.13$_{\pm 0.49}$} & 52.65$_{\pm 0.82}$ & \colorbox{orange!15}{53.89$_{\pm 0.48}$} \\
& \textbf{DualSFT (Ours)} & \colorbox{red!10}{\textbf{51.14$_{\pm 1.44}$}} & \colorbox{red!10}{\textbf{79.01$_{\pm 0.95}$}} & \colorbox{red!10}{\textbf{59.16$_{\pm 0.38}$}} & \colorbox{red!10}{\textbf{35.62$_{\pm 0.64}$}} & \colorbox{red!10}{\textbf{56.23$_{\pm 0.47}$}} & \colorbox{red!10}{\textbf{56.76$_{\pm 0.71}$}} & \colorbox{red!10}{\textbf{56.50$_{\pm 0.43}$}} \\
\midrule

\multirow{16}{*}{\rotatebox{90}{\textbf{Qwen-3.5-9B-Base\qquad\qquad}}}
& \textcolor{gray}{Pre-trained} & \textcolor{gray}{54.27$_{\pm 1.46}$} & \textcolor{gray}{80.09$_{\pm 0.93}$} & \textcolor{gray}{72.62$_{\pm 0.37}$} & \textcolor{gray}{86.20$_{\pm 0.83}$} & \textcolor{gray}{73.30$_{\pm 0.49}$} & \textcolor{gray}{60.98$_{\pm 0.81}$} & \textcolor{gray}{67.14$_{\pm 0.47}$} \\
& \textcolor{gray}{Standard SFT} & \textcolor{gray}{49.22$_{\pm 1.46}$} & \textcolor{gray}{74.31$_{\pm 0.98}$} & \textcolor{gray}{66.14$_{\pm 0.40}$} & \textcolor{gray}{78.64$_{\pm 0.75}$} & \textcolor{gray}{67.08$_{\pm 0.49}$} & \textcolor{gray}{75.56$_{\pm 0.85}$} & \textcolor{gray}{71.32$_{\pm 0.49}$} \\
& \textcolor{gray}{LoRA}\venuetag{ICLR'22} & \textcolor{gray}{51.51$_{\pm 1.46}$} & \textcolor{gray}{76.62$_{\pm 0.95}$} & \textcolor{gray}{69.47$_{\pm 0.39}$} & \textcolor{gray}{81.94$_{\pm 0.73}$} & \textcolor{gray}{69.89$_{\pm 0.48}$} & \textcolor{gray}{70.29$_{\pm 0.87}$} & \textcolor{gray}{70.09$_{\pm 0.50}$} \\
\cmidrule(lr){2-9}
& S$^2$FT\venuetag{NIPS'24} & 51.74$_{\pm 1.45}$ & \colorbox{orange!15}{78.96$_{\pm 0.93}$} & \colorbox{red!10}{\textbf{70.35$_{\pm 0.37}$}} & 82.05$_{\pm 0.45}$ & 70.78$_{\pm 0.45}$ & 72.51$_{\pm 0.81}$ & 71.64$_{\pm 0.46}$ \\
& SMT\venuetag{ICLR'25} & 50.33$_{\pm 1.45}$ & 76.51$_{\pm 0.95}$ & 68.41$_{\pm 0.39}$ & \colorbox{orange!15}{82.64$_{\pm 0.74}$} & 69.47$_{\pm 0.48}$ & \colorbox{orange!15}{73.68$_{\pm 0.78}$} & 71.58$_{\pm 0.46}$ \\
& LIFT\venuetag{ICML'25} & 51.23$_{\pm 1.44}$ & 77.06$_{\pm 0.96}$ & 68.75$_{\pm 0.40}$ & 82.14$_{\pm 0.72}$ & 69.80$_{\pm 0.48}$ & 71.74$_{\pm 0.72}$ & 70.77$_{\pm 0.43}$ \\
& NanoAdam\venuetag{NIPS'25} & \colorbox{orange!15}{52.87$_{\pm 1.45}$} & \colorbox{red!10}{\textbf{79.03$_{\pm 0.95}$}} & 70.22$_{\pm 0.39}$ & \colorbox{red!10}{\textbf{84.22$_{\pm 0.80}$}} & \colorbox{orange!15}{71.59$_{\pm 0.49}$} & 73.28$_{\pm 0.77}$ & \colorbox{orange!15}{72.43$_{\pm 0.46}$} \\
& \textbf{DualSFT-Param (Ours)} & \colorbox{red!10}{\textbf{53.52$_{\pm 1.44}$}} & 78.72$_{\pm 0.95}$ & \colorbox{orange!15}{70.26$_{\pm 0.39}$} & \colorbox{red!10}{\textbf{84.22$_{\pm 0.70}$}} & \colorbox{red!10}{\textbf{71.68$_{\pm 0.48}$}} & \colorbox{red!10}{\textbf{75.13$_{\pm 0.71}$}} & \colorbox{red!10}{\textbf{73.41$_{\pm 0.43}$}} \\
\cmidrule(lr){2-9}
& Random & \colorbox{red!10}{\textbf{54.10$_{\pm 1.44}$}} & \colorbox{orange!15}{79.64$_{\pm 0.97}$} & \colorbox{red!10}{\textbf{71.55$_{\pm 0.40}$}} & \colorbox{red!10}{\textbf{85.47$_{\pm 0.77}$}} & \colorbox{red!10}{\textbf{72.69$_{\pm 0.49}$}} & 65.20$_{\pm 0.79}$ & 68.95$_{\pm 0.46}$ \\
& LESS\venuetag{ICML'24} & 50.27$_{\pm 1.45}$ & 79.02$_{\pm 0.97}$ & 69.65$_{\pm 0.41}$ & 81.28$_{\pm 0.80}$ & 70.06$_{\pm 0.49}$ & \colorbox{orange!15}{72.77$_{\pm 0.64}$} & \colorbox{orange!15}{71.41$_{\pm 0.40}$} \\
& FisherSFT\venuetag{ICML'25} & 50.94$_{\pm 1.46}$ & 76.16$_{\pm 0.93}$ & 68.49$_{\pm 0.37}$ & 81.36$_{\pm 0.72}$ & 69.24$_{\pm 0.48}$ & 71.61$_{\pm 0.69}$ & 70.42$_{\pm 0.42}$ \\
& SPICE\venuetag{ICLR'26} & 50.85$_{\pm 1.44}$ & 76.16$_{\pm 0.98}$ & 69.22$_{\pm 0.40}$ & 83.01$_{\pm 0.63}$ & 69.81$_{\pm 0.47}$ & 72.12$_{\pm 0.72}$ & 70.97$_{\pm 0.43}$ \\
& \textbf{DualSFT-Data (Ours)} & \colorbox{orange!15}{53.66$_{\pm 1.45}$} & \colorbox{red!10}{\textbf{80.03$_{\pm 0.95}$}} & \colorbox{orange!15}{70.75$_{\pm 0.39}$} & \colorbox{orange!15}{85.04$_{\pm 0.70}$} & \colorbox{orange!15}{72.37$_{\pm 0.48}$} & \colorbox{red!10}{\textbf{74.21$_{\pm 0.65}$}} & \colorbox{red!10}{\textbf{73.29$_{\pm 0.40}$}} \\
\cmidrule(lr){2-9}
& LESS + S$^2$FT & \colorbox{orange!15}{50.98$_{\pm 1.45}$} & \colorbox{orange!15}{79.54$_{\pm 0.96}$} & 69.14$_{\pm 0.40}$ & \colorbox{orange!15}{81.28$_{\pm 0.75}$} & \colorbox{orange!15}{70.24$_{\pm 0.48}$} & 67.86$_{\pm 0.70}$ & 69.05$_{\pm 0.43}$ \\
& SPICE + NanoAdam & 50.55$_{\pm 1.44}$ & 79.35$_{\pm 0.95}$ & \colorbox{orange!15}{70.09$_{\pm 0.39}$} & 80.74$_{\pm 0.72}$ & 70.18$_{\pm 0.48}$ & \colorbox{orange!15}{69.24$_{\pm 0.68}$} & \colorbox{orange!15}{69.71$_{\pm 0.42}$} \\
& \textbf{DualSFT (Ours)} & \colorbox{red!10}{\textbf{54.01$_{\pm 1.43}$}} & \colorbox{red!10}{\textbf{79.64$_{\pm 0.94}$}} & \colorbox{red!10}{\textbf{70.33$_{\pm 0.38}$}} & \colorbox{red!10}{\textbf{84.61$_{\pm 0.68}$}} & \colorbox{red!10}{\textbf{72.15$_{\pm 0.47}$}} & \colorbox{red!10}{\textbf{73.75$_{\pm 0.65}$}} & \colorbox{red!10}{\textbf{72.95$_{\pm 0.40}$}} \\
\bottomrule
\end{tabular}
}
\vspace{-0.2in}
\end{table*}

\textbf{Matched-Budget Target-Task Performance.} Under matched budgets, DualSFT improves target-task performance within each variant's comparison group. On Llama-3.2-3B with Magicoder, the single-axis variants improve over their counterparts: DualSFT-Param raises HumanEval over NanoAdam from $43.54\%$ to $44.52\%$, and DualSFT-Data raises HumanEval over LESS from $42.66\%$ to $44.05\%$. In the harder joint-constrained regime, where both axes are restricted, full DualSFT reaches $43.67\%$ HumanEval, $4.15$ points above SPICE+NanoAdam under the same joint budget.

\textbf{Retention and Overall Performance.} DualSFT raises Overall beyond target-task performance, preserving general capabilities with less plasticity loss. Single-axis DualSFT-Param/Data lead their groups in Overall; full DualSFT is compared to joint-budget hybrids. Here, it keeps Stability Avg.\ near pre-trained references, maintains strong HumanEval, and leads joint-budget methods in Stability Avg.\ and Overall, outperforming the strongest sequential hybrid by $2.39$, $2.61$, and $3.24$ points on Llama/Gemma/Qwen. Appendix~\ref{app:sequential_grid} corroborates this advantage. Appendix~\ref{app:results_metamathqa} details MetaMathQA trends; Sections~\ref{sec:coord_gap} and~\ref{sec:diagnostic_analysis} examine coordination gaps and one-shot selection.

\textbf{Efficiency and Trade-offs.} 
Fig.~\ref{fig:efficiency_main} compares Overall against time and peak memory. For parameters, DualSFT-Param achieves the highest Overall with the lowest peak fine-tuning memory ($19.3$\,GB) among parameter-selection methods, despite not being the fastest. On the data and joint side, DualSFT-Data attains the best Overall, and full DualSFT maintains $47.10\%$ with $28.5$\,GB peak selection memory and $7.4$ GPU-hours. Under the same joint budget, full DualSFT improves Overall over sequential hybrids, trading modest selection cost for stronger stability-plasticity outcomes.

\begin{figure*}[h]
    \vspace{-0.1in}
    \centering
    \begin{subfigure}{0.49\linewidth}
        \centering
        \includegraphics[width=\linewidth]{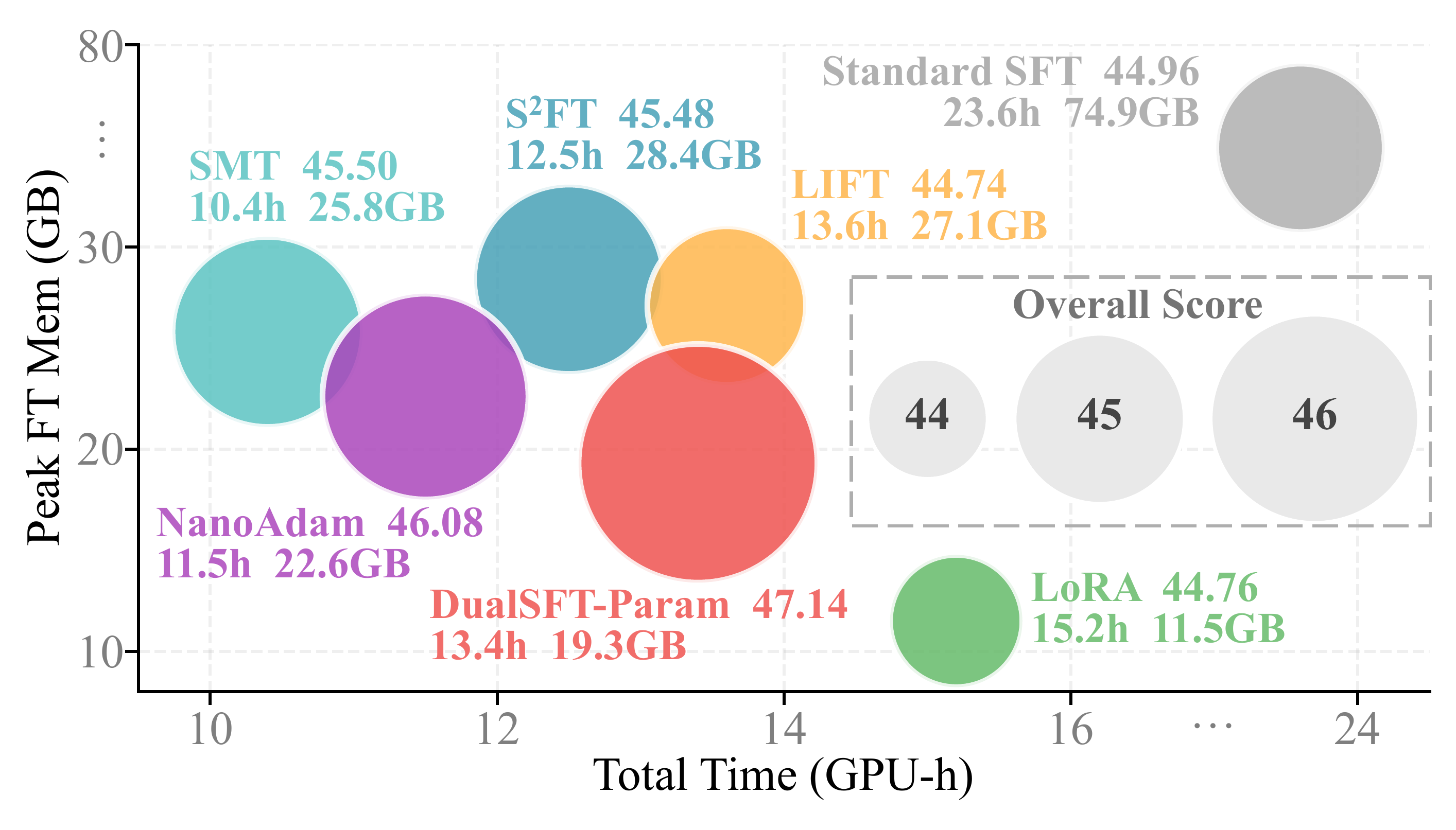}
        \vspace{-0.25in}
        \caption{Parameter selection efficiency frontier.}
        \label{fig:efficiency_param}
    \end{subfigure}\hfill
    \begin{subfigure}{0.49\linewidth}
        \centering
        \includegraphics[width=\linewidth]{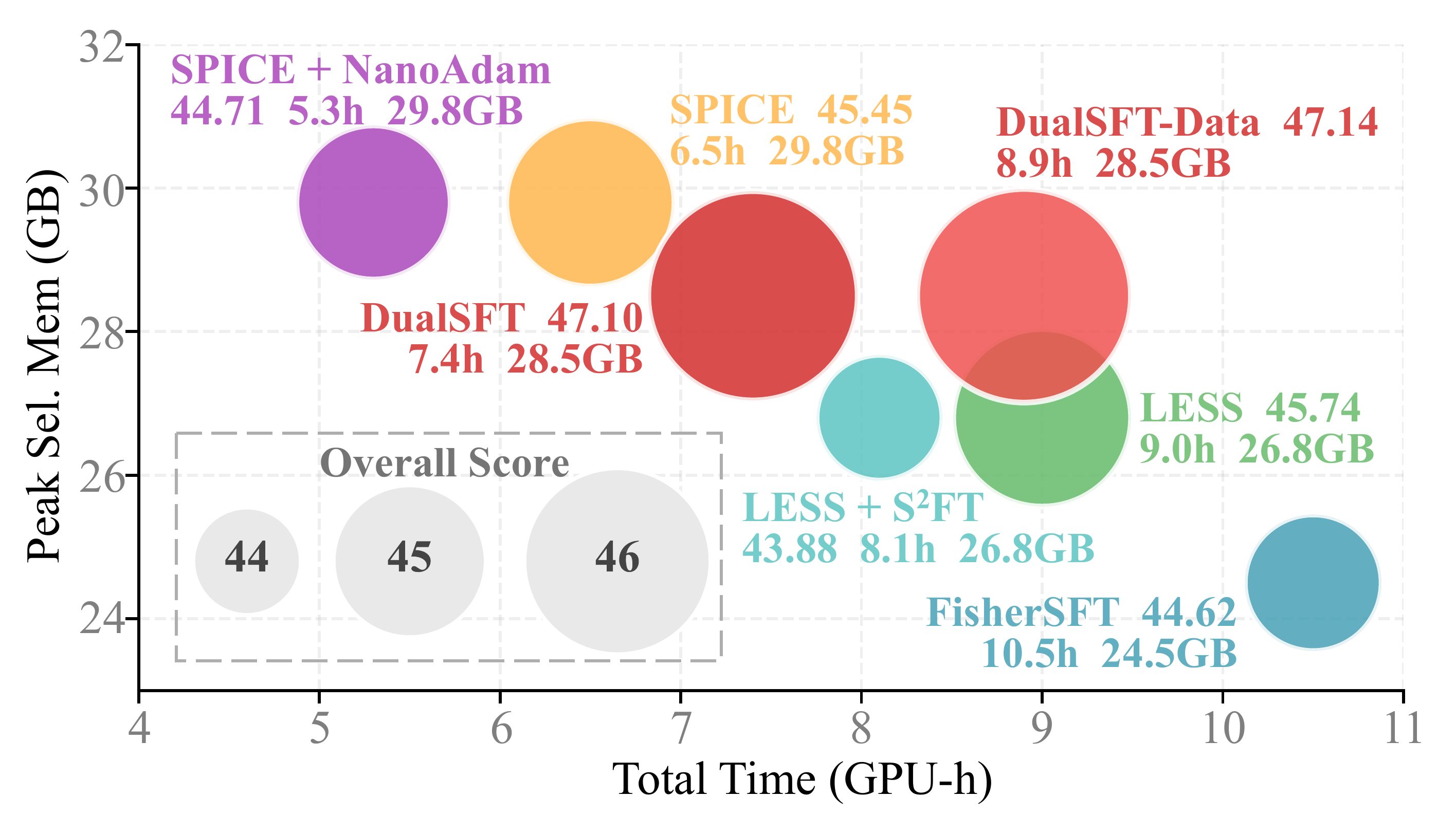} 
        \vspace{-0.25in}
        \caption{Data selection efficiency frontier.}
        \label{fig:efficiency_data}
    \end{subfigure}
    \vspace{-0.05in}
    \caption{Efficiency frontiers comparing against various baselines on Magicoder with Llama-3.2-3B.}
    \label{fig:efficiency_main}
    \vspace{-0.2in}
\end{figure*}

\subsection{Ablation \& Sensitivity Analysis}
\label{sec:ablation}
We address two questions: \circnum{1} Which DualSFT components contribute most under joint data--parameter restrictions? \circnum{2} How sensitive is DualSFT to budgets and scoring hyperparameters?

\begin{wrapfigure}{r}{0.5\linewidth}
    \vspace{-0.2in}
    \centering
    \includegraphics[width=1\linewidth, trim=2mm 0mm 2mm 2mm, clip]{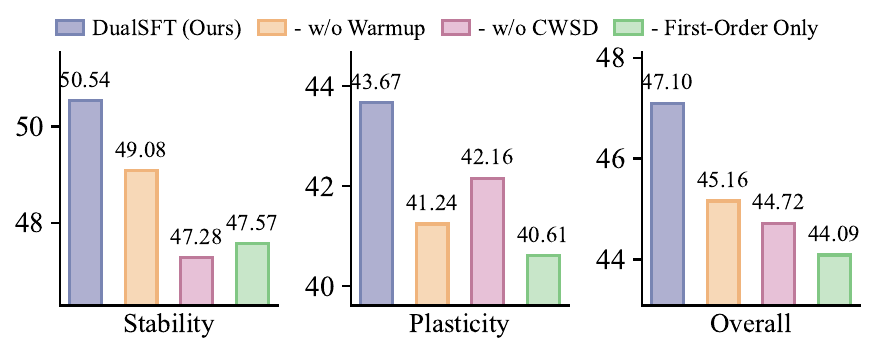}
    \vspace{-0.3in}
    \caption{Component ablation on Llama-3.2-3B.}
    \label{fig:ablation_components}
    \vspace{-10pt}
\end{wrapfigure}
\textbf{Component Ablation.} Fig.~\ref{fig:ablation_components} shows complementary roles under joint data--parameter restrictions: removing warmup hurts plasticity ($43.67\%\!\to\!41.24\%$), omitting CWSD hurts stability ($50.54\%\!\to\!47.28\%$), and first-order scoring causes the largest Overall drop ($47.10\%\!\to\!44.09\%$), supporting diagonal second-order correction. Together, the three components improve different axes. Appendix~\ref{app:cwsd_ablation} further compares preservation directions and anchor sizes.

\textbf{Hyperparameter Sensitivity.} Fig.~\ref{fig:hyperparameter_sensitivity} indicates that DualSFT is robust to moderate hyperparameter variations and peaks under balanced settings. Overall follows an inverted-U trend over the data and parameter budgets, peaking at $b=10\%$ and $k=5\%$, consistent with the stability--plasticity trade-off as adaptation capacity grows. The prior-preservation weight stays near-optimal across $\lambda=0.7$--$1.3$ with a peak at $\lambda=0.8$, and the CWSD temperature shows a mild arch with optimum at $\tau=1.0$, suggesting that overly sharp or overly smooth teacher targets are suboptimal.

\begin{figure}
    \centering
    \includegraphics[width=\linewidth]{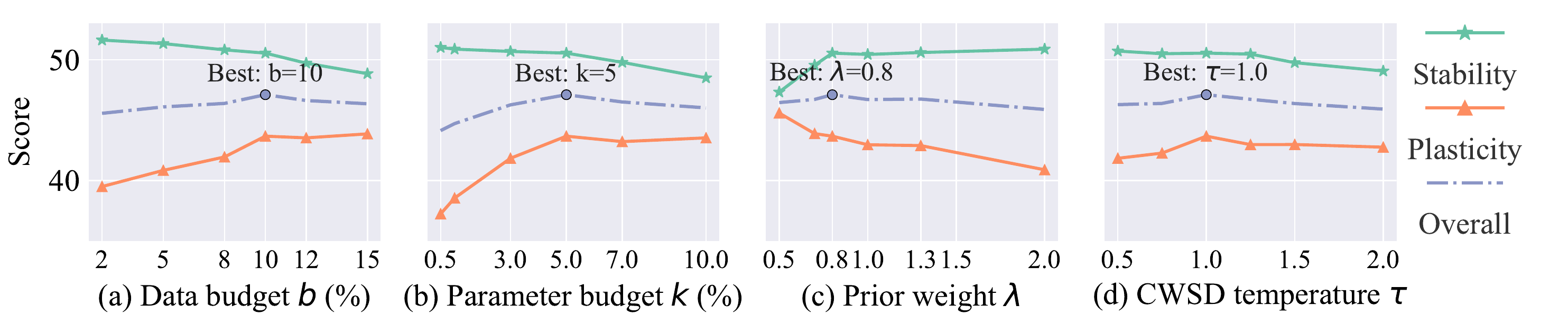}
    \vspace{-0.25in}
    \caption{Sensitivity to key hyperparameters on Llama-3.2-3B (Magicoder).}
    \label{fig:hyperparameter_sensitivity}
    \vspace{-0.2in}
\end{figure}

\subsection{Diagnostic Analysis}
\label{sec:diagnostic_analysis}
Unless specified, all diagnostics in this section are conducted on Llama-3.2-3B with Magicoder.

\begin{wraptable}{r}{0.48\columnwidth}
\vspace{-0.20in}
\centering
\caption{Same-surrogate local-regret diagnostic.}
\label{tab:local_regret}
\vspace{-0.10in}
\footnotesize
\setlength{\tabcolsep}{2.2pt}
\begin{tabular*}{\linewidth}{@{\extracolsep{\fill}}llccc@{}}
\toprule
\textbf{Side} & \textbf{Ckpt.} & \textbf{Jacc.} & \textbf{Ratio} & \textbf{Reg. (\%)} \\
\midrule
\multirow{2}{*}{Data ($b=10\%$)}
& Init   & 0.71 & 0.972 & 2.8 \\
& Warmup & 0.62 & 0.948 & 5.2 \\
\midrule
\multirow{2}{*}{Param ($k=5\%$)}
& Init   & 0.76 & 0.981 & 1.9 \\
& Warmup & 0.68 & 0.964 & 3.6 \\
\bottomrule
\end{tabular*}
\vspace{-0.2in}
\end{wraptable}
\textbf{Same-Surrogate Local-Regret Diagnostic.} Table~\ref{tab:local_regret} examines whether one-shot scores lose aware utility once the other-side restriction is imposed under the shared surrogate $\mathbf{u}$. Jacc., Ratio, and Reg.\ denote the aware-unaware overlap, the aware-utility ratio, and the relative drop. We compare $\langle\mathbf{u}\odot \mathbf{m}_\theta,\mathbf{g}_n\rangle$ vs.\ $\langle\mathbf{u},\mathbf{g}_n\rangle$ for data, and $u_dG_{S_\mathcal{D},d}$ vs.\ $u_dG_d$ for parameters. At warmup the regret is bounded ($5.2\%$ on data, $3.6\%$ on parameters) with Jaccard $0.62$--$0.68$; the smaller init regret is consistent with noisier early gradients. Together with the fixed-side re-selection results below, this is consistent with one-shot dual scoring acting as low-cost ex-ante coordination. Appendix~\ref{app:local_regret} gives definitions and protocol.

\begin{figure*}[h]
\vspace{-0.08in}
\centering
% ======== Left column: two stacked tables =========
\begin{minipage}[t]{0.42\textwidth}
\vspace{0pt}
\centering
% ---------- Top-left table ----------
\setlength{\tabcolsep}{2pt}
\captionsetup{type=table}
\caption{Conditional re-selection. 
% Rel.~Sel.\ and Rel.~E2E report relative selection-stage and end-to-end costs.
}
\vspace{-0.08in}
\label{tab:conditional_reselection_main}
\resizebox{\linewidth}{!}{
\begin{tabular}{lccccc}
\toprule
\textbf{Method} & \textbf{Stab.} & \textbf{Plas.} & \textbf{Overall} & \textbf{Re. Sel.} & \textbf{Re. E2E} \\
\midrule
DualSFT & 50.54 & 43.67 & 47.10 & 1.00$\times$ & 1.00$\times$ \\
D. $\rightarrow$ P. & 50.48 & 43.84 & 47.16 & 1.06$\times$ & 1.05$\times$ \\
P. $\rightarrow$ D. & 50.61 & 43.82 & 47.22 & 1.45$\times$ & 1.41$\times$ \\
\bottomrule
\end{tabular}
}
\vspace{-0.05in}
% ---------- Bottom-left table ----------
\footnotesize
\footnotesize
\setlength{\tabcolsep}{2pt}
\captionsetup{type=table}
\caption{Selected data statistics.}
\label{tab:selected_data_main}
\resizebox{\linewidth}{!}{
\begin{tabular}{lcccc}
\toprule
\textbf{Data statistic} & \textbf{Sel.} & \textbf{Rand.} & \textbf{Top-loss} & \textbf{Pool} \\
\midrule
Val-grad alignment & 0.42 & 0.18 & 0.31 & 0.15 \\
Avg. training loss & 1.45 & 1.39 & 2.83 & 1.38 \\
Diversity score    & 0.82 & 0.84 & 0.71 & 0.86 \\
\bottomrule
\end{tabular}
}
\vspace{-0.15in}

\end{minipage}
\hfill
% ================== Right column: heatmap ==================
\begin{minipage}[t]{0.56\textwidth}
\vspace{0pt}
\centering
\includegraphics[width=\linewidth]{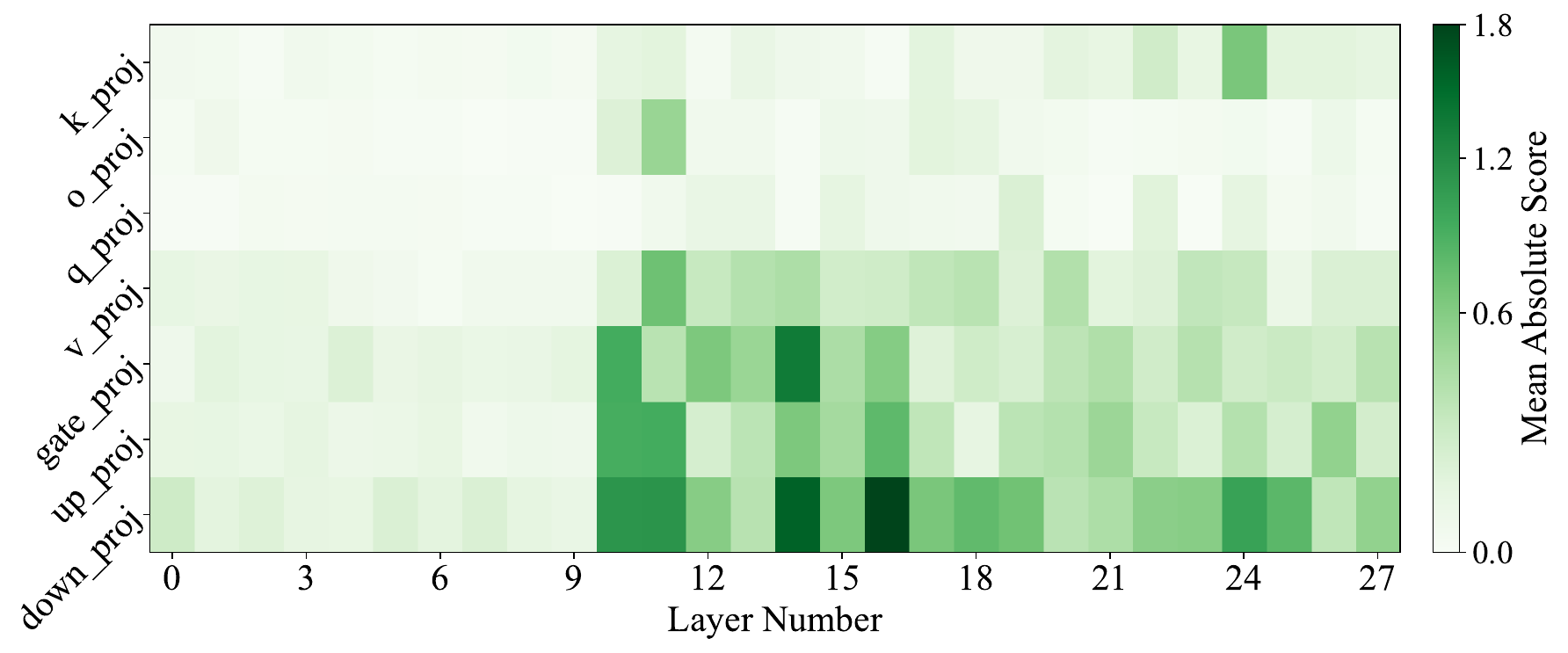}
\vspace{-0.25in}
\captionsetup{type=figure}
\caption{Layer-wise parameter selection on Llama-3.2-3B (Magicoder). Cells show mean absolute block scores. Appendix~\ref{app:selection_statistics} reports exact aggregates.}
\label{fig:layerwise_param_heatmap}
\end{minipage}
\vspace{-0.1in}
\end{figure*}

\textbf{Conditional Re-selection Analysis.} Table~\ref{tab:conditional_reselection_main} compares DualSFT with fixed-side re-selection (D./P.: data/parameter; Rel.~Sel./Rel.~E2E: relative selection/E2E costs). Re-selection improves Overall by at most $0.12$, while increasing selection and E2E costs by $6\%$--$45\%$ and $5\%$--$41\%$. This is consistent with one-shot scoring already capturing most of the joint structure, with fixed-side passes mainly refining boundary cases after one side is fixed. The cost asymmetry follows from granularity: D.\,$\to$\,P.\ aggregates selected-example gradients, whereas P.\,$\to$\,D.\ rescores candidates under a fixed mask. Appendix~\ref{app:conditional_reselection} reports multi-round saturation, overlap, and cost breakdowns.

\textbf{Analysis of Selected Data and Parameters.} Table~\ref{tab:selected_data_main} shows DualSFT-selected data have higher validation-gradient alignment than random, top-loss, and pool baselines, maintaining pool-like loss and diversity, proving selection avoids mere high loss or redundancy. Fig.~\ref{fig:layerwise_param_heatmap} reveals concentrated parameter scores: top five layers contain $42.4\%$ of selected parameters ($2.38\times$ enrichment), whereas early layers (L0 to L9) contain $8.2\%$ ($0.23\times$). Module shares closely match the pool ($0.91\times$ attention, $1.03\times$ FFN), except over-selected FFN down\_proj ($1.44\times$). Reverse matched-budget selections underperform random controls, reflecting score directionality beyond magnitude or sparsity effects.

\begin{figure*}[t]
\centering
% ===== Left column: two stacked diagnostic tables ====
\begin{minipage}[t]{0.5\textwidth}
\vspace{0pt}
\centering
% ---------- Top-left table ----------
\scriptsize
\setlength{\tabcolsep}{2.4pt}
\captionsetup{type=table}
\caption{Score fidelity diagnostic.}
\label{tab:score_consistency_main}
\vspace{-0.08in}
\resizebox{\linewidth}{!}{
\begin{tabular}{l|cc|cc}
\toprule
\multirow{2}{*}{\textbf{Setting}}
& \multicolumn{2}{c|}{\textbf{Data}}
& \multicolumn{2}{c}{\textbf{Param}} \\
\cmidrule(lr){2-3}\cmidrule(lr){4-5}
& $\rho$ & Pair
& $\rho$ & Pair \\
\midrule
Init + First-Order        & 0.28 & 58.6 & 0.21 & 56.3 \\
Init + Diag-Second-Order  & 0.37 & 64.1 & 0.30 & 60.7 \\
Warmup + First-Order      & 0.56 & 73.1 & 0.49 & 69.1 \\
\rowcolor{rowblue}
\textbf{Warmup + Diag-Second-Order} & \textbf{0.82} & \textbf{79.8} & \textbf{0.79} & \textbf{75.3} \\
\bottomrule
\end{tabular}
}

% \vspace{0.07in}

% ---------- Bottom-left table ----------
\scriptsize
\setlength{\tabcolsep}{2.0pt}
\captionsetup{type=table}
\caption{Diagonal-vs-full rank agreement.}
\label{tab:diag_full_main}
\resizebox{\linewidth}{!}{
\begin{tabular}{l|ccc|ccc}
\toprule
\multirow{2}{*}{\textbf{Checkpoint}}
& \multicolumn{3}{c|}{\textbf{Data}}
& \multicolumn{3}{c}{\textbf{Param}} \\
\cmidrule(lr){2-4}\cmidrule(lr){5-7}
& $\rho$ & Pair & Top-5\%
& $\rho$ & Pair & Top-5\% \\
\midrule
Init
& 0.76 & 83.1 & 78.2
& 0.82 & 87.0 & 83.7 \\
\rowcolor{rowblue}
\textbf{Warmup}
& \textbf{0.86} & \textbf{89.8} & \textbf{86.5}
& \textbf{0.90} & \textbf{92.6} & \textbf{89.4} \\
\bottomrule
\end{tabular}
}

\end{minipage}
\hfill
% === Right column: forgetting-aware comparison figure ===
\begin{minipage}[t]{0.48\textwidth}
\vspace{0pt}
\centering
\includegraphics[width=\linewidth, trim=2mm 2mm 2mm 2mm, clip]{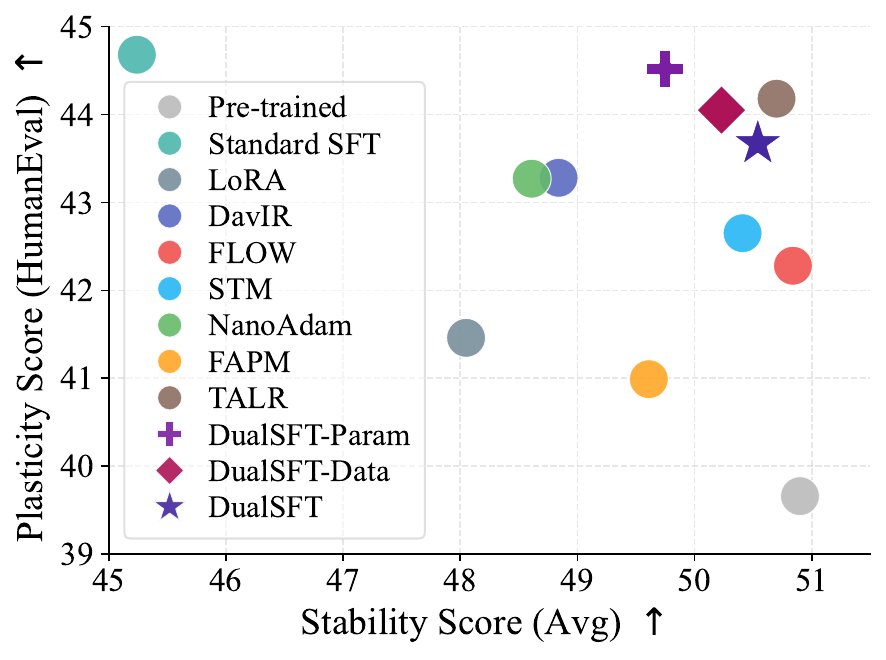}
\vspace{-0.20in}
\captionsetup{type=figure}
\caption{Forgetting-aware baseline comparison on Magicoder using Llama-3.2-3B.}
\label{fig:forgetting_comparison}
\end{minipage}
\vspace{-0.2in}
\end{figure*}

\textbf{Score Fidelity and Diagonal Approximation.} We test score reliability by comparing predicted and realized rankings of diagnostic data subsets and budget-matched parameter masks under the same validation objective and selection budgets. Table~\ref{tab:score_consistency_main} reports Init/Warmup $\times$ First/Diag-Second-Order variants averaged over $K\in\{50,200,500\}$ AdamW steps, using Spearman $\rho$ and pairwise agreement; warmup and diagonal second-order scoring improve fidelity on both axes. Appendix~\ref{app:score_fidelity} extends the analysis to longer horizons, showing the expected horizon decay of scores while one-shot scoring remains cost-effective under the studied budgets. Table~\ref{tab:diag_full_main} compares the practical diagonal score with slice-level full-Hessian rankings, supporting the optimizer-aware diagonal proxy as a local ranking approximation; Appendix~\ref{app:diag_full_rank} reports representative slice-level breakdowns.

\subsection{Comparison with Forgetting-Specific Baselines}
Fig.~\ref{fig:forgetting_comparison} compares DualSFT against forgetting-aware baselines on Llama-3.2-3B with Magicoder. The settings differ in operating points: baselines optimize retention under their update and data budgets, whereas DualSFT operates under explicit data and parameter restrictions. Here, DualSFT occupies a favorable stability-plasticity-resource frontier: among forgetting-aware methods, DualSFT-Param leads on plasticity; among DualSFT variants, full DualSFT delivers the highest stability; all DualSFT variants remain competitive on Overall while reducing at least one resource axis. Appendix~\ref{app:forgetting_baselines} provides backbone and dataset comparisons, and Appendix~\ref{app:efficiency_forgetting} reports an efficiency-aware comparison covering data fraction, trainable-parameter fraction, end-to-end time, and memory.

\section{Conclusion}
DualSFT casts parameter and data selection as two views of a shared validation-improvement surrogate, co-extracting both signals. Under first-order and diagonal/full second-order approximations, they become row-wise and column-wise aggregations of a gradient interaction matrix, with a Shapley interpretation in the localized surrogate game. This turns two historically separate procedures into an efficient one-shot dual-scoring algorithm, reducing trainable parameters and training data in restricted fine-tuning. Experiments on 3B--9B LLMs show favorable stability--plasticity--efficiency trade-offs under matched budgets. DualSFT suggests treating parameters and data as coupled adaptation resources in future work, including longer horizons and post-training.

\newpage
{
\small
\bibliographystyle{IEEEtran}
\bibliography{custom}
}

\newpage
\appendix

{\hypersetup{linkcolor=black}
\begin{center}
{\Large\bfseries Appendix Contents}
\end{center}
\vspace{0.1in}

\small
\appendixentry{app:derivations}{Detailed Theoretical Derivations and Shapley Proofs}
\appendixsubentry{app:one_step_derivation}{From Exact Inner Objectives to One-Step Surrogate Updates}
\appendixsubentry{app:local_error}{Local Truncation Error}
\appendixsubentry{app:shapley}{Analytical Shapley Values under Approximation Hierarchies}
\appendixsubentry{app:error_propagation}{Practical Diagonal Scores and Selection Stability}
\appendixsubentry{app:coord_gap}{Proof of Proposition~\ref*{prop:coord_gap}}

\vspace{0.0em}
\appendixentry{app:adam_compatibility}{Optimizer Compatibility: SGD Surrogate and Practical Curvature Proxy}

\vspace{0.0em}
\appendixentry{app:cwsd_derivation}{Detailed Derivation of Confidence-Weighted Self-Distillation}
\appendixsubentry{app:cwsd_objective_eval_point}{Objective and Evaluation Point}
\appendixsubentry{app:cwsd_gradient_derivation}{Gradient Derivation}

\vspace{0.0em}
\appendixentry{app:model_shapley_comparison}{Comparison with \textsc{Model Shapley}}
\appendixsubentry{app:model_shapley_source_objectives}{Source Objectives of Sparse Fine-Tuning and \textsc{Model Shapley}}
\appendixsubentry{app:model_shapley_formulas}{From the Canonical Masked Objective to the Original \textsc{Model Shapley} Formulas}
\appendixsubentry{app:model_shapley_implications}{Implications for Fine-Tuning}

\vspace{0.0em}
\appendixentry{app:cross_domain_perspective}{Cross-Domain Perspective on Data and Parameter Selection}

\vspace{0.0em}
\appendixentry{app:additional_experiments}{Additional Experimental Details and Results}
\appendixsubentry{app:detailed_setting}{Detailed Experimental Settings}
\appendixsubsubentry{app:detailed_baselines}{Detailed Baseline Descriptions}
\appendixsubsubentry{app:eval_details}{Datasets and Evaluation Details}
\appendixsubsubentry{app:impl_details}{Implementation Details}
\appendixsubsubentry{app:prompts}{Prompt Templates}
\appendixsubentry{app:resource_accounting}{Resource Accounting under Matched Final Budgets}
\appendixsubentry{app:sequential_grid}{Additional Sequential-Combination Baselines}
\appendixsubentry{app:results_metamathqa}{Additional Results on MetaMathQA}
\appendixsubentry{app:magicoder_efficiency}{Detailed Runtime and Memory Profiling}
\appendixsubentry{app:cwsd_ablation}{Preservation Direction and Anchor-size Ablations}
\appendixsubsubentry{app:anchor_size_sensitivity}{Anchor-size Sensitivity}
\appendixsubentry{app:local_regret}{Protocol for Same-Surrogate Local-Regret Diagnostic}
\appendixsubentry{app:selection_statistics}{Statistic Definitions and Reverse-selection Ablation}
\appendixsubentry{app:conditional_reselection}{Fixed-side Re-selection Details}
\appendixsubentry{app:score_fidelity}{Score Fidelity Protocol and Extended Horizons}
\appendixsubsubentry{app:local_surrogate}{Local Surrogate Drift and Horizon Decay}
\appendixsubsubentry{app:protocol}{Protocol}
\appendixsubentry{app:diag_full_rank}{Diagonal-vs-Full Hessian Rank Protocol and Extended Results}
\appendixsubentry{app:forgetting_baselines}{Comparison with Forgetting-Specific Baselines}
\appendixsubentry{app:efficiency_forgetting}{Efficiency-aware Forgetting Comparison}
}

% \vspace{-0.05in}
This appendix supplements the paper with theory, method details, and experiments. Appendix~\ref{app:derivations} gives derivations and proofs for the shared surrogate, Shapley scores, diagonal stability, and coordination gap. Appendix~\ref{app:adam_compatibility} clarifies optimizer compatibility. Appendix~\ref{app:cwsd_derivation} derives CWSD, and Appendix~\ref{app:model_shapley_comparison} contrasts DualSFT with \textsc{Model Shapley}. Appendix~\ref{app:cross_domain_perspective} gives the cross-domain view. Appendix~\ref{app:additional_experiments} collects experimental settings, results, diagnostics, ablations, and forgetting-aware comparisons.

\newpage

\section{Detailed Theoretical Derivations and Shapley Proofs}
\label{app:derivations}

This appendix provides four technical complements to the main text: the derivation of the shared one-step surrogate from exact bilevel problems, local truncation guarantees for the Taylor surrogate, detailed proofs of Theorem~\ref{thm:dual_shapley}, and stability analysis of the practical diagonal scores used by DualSFT.

\subsection{From Exact Inner Objectives to One-Step Surrogate Updates}
\label{app:one_step_derivation}

We derive one-step surrogate updates in Eq.~\eqref{eq:single_step_new} from exact bilevel formulations in Eqs.~\eqref{eq:blo_param_new}--\eqref{eq:blo_data_new}. Following Section~\ref{sec:blo_framework}, we linearize the inner response at common checkpoint $\bar{\boldsymbol{\theta}}$ and use local gradients
\begin{equation}
\mathbf{g}_n \triangleq \nabla_{\boldsymbol{\theta}}\ell(x_n,y_n;\bar{\boldsymbol{\theta}}),
\qquad
\mathbf{G}\triangleq \sum_{n=1}^{N}\mathbf{g}_n.
\end{equation}
For the parameter-side problem in Eq.~\eqref{eq:blo_param_new}, a single gradient step on the inner objective under the coordinate mask $\mathbf{m}_{\theta}$ gives the local update
\begin{equation}
\Delta\boldsymbol{\theta}(\mathbf{m}_{\theta})
\approx
-\eta \,\nabla_{\boldsymbol{\theta}}\mathcal{L}_{\text{train}}(\bar{\boldsymbol{\theta}};\mathcal{D}_{\text{train}})
\odot \mathbf{m}_{\theta}
=
-\eta(\mathbf{G}\odot \mathbf{m}_{\theta}).
\end{equation}
For the data-side problem in Eq.~\eqref{eq:blo_data_new}, the selected subset induces the masked empirical objective
$\mathbf{1}^{\top}\!\big(\mathbf{m}_{\mathcal D}\odot \boldsymbol{\ell}_{\text{train}}(\boldsymbol{\theta})\big)$.
A single gradient step at $\bar{\boldsymbol{\theta}}$ gives
\begin{equation}
\Delta\boldsymbol{\theta}(\mathbf{m}_{\mathcal D})
\approx
-\eta \,\nabla_{\boldsymbol{\theta}}\!\left[
\mathbf{1}^{\top}\!\big(\mathbf{m}_{\mathcal D}\odot \boldsymbol{\ell}_{\text{train}}(\bar{\boldsymbol{\theta}})\big)
\right]
=
-\eta\sum_{n=1}^{N}(\mathbf{m}_{\mathcal D})_n\mathbf{g}_n.
\end{equation}
Identifying each binary mask with its selected index set gives the localized functions in Eq.~\eqref{eq:surrogate_set_functions}. These one-step surrogates underlie the Taylor analysis in Eq.~\eqref{eq:taylor_local_utility} and shared-score derivation in Section~\ref{sec:shared_scores}.

\subsection{Local Truncation Error}
\label{app:local_error}

We now justify the first- and second-order truncations of Eq.~\eqref{eq:taylor_local_utility}. For either the parameter-side or data-side utility, define
\begin{equation}
U_{\mathrm{step}}(\mathbf{S})
\triangleq
\mathcal{L}_{\text{val}}(\bar{\boldsymbol{\theta}})
-
\mathcal{L}_{\text{val}}\!\left(\bar{\boldsymbol{\theta}}+\Delta\boldsymbol{\theta}(\mathbf{S})\right),
\qquad
\Delta\boldsymbol{\theta}(\mathbf{S})=-\eta\,\mathbf{g}(\mathbf{S}).
\end{equation}

\begin{proposition}[Local truncation error]
\label{prop:local_truncation}
Assume $\mathcal{L}_{\text{val}}$ is $\beta$-smooth in a neighborhood of $\bar{\boldsymbol{\theta}}$. Then the first-order truncation
\begin{equation}
U_{\mathrm{1st}}(\mathbf{S})
=
-\mathbf{v}_{\text{val}}^{\top}\Delta\boldsymbol{\theta}(\mathbf{S})
\end{equation}
satisfies
\begin{equation}
\big|U_{\mathrm{step}}(\mathbf{S})-U_{\mathrm{1st}}(\mathbf{S})\big|
\le
\frac{\beta}{2}\eta^{2}\|\mathbf{g}(\mathbf{S})\|_{2}^{2}.
\end{equation}
If, in addition, $\nabla^{2}\mathcal{L}_{\text{val}}$ is $\rho$-Lipschitz on a neighborhood containing the segment between $\bar{\boldsymbol{\theta}}$ and $\bar{\boldsymbol{\theta}}+\Delta\boldsymbol{\theta}(\mathbf{S})$, then the second-order truncation
\begin{equation}
U_{\mathrm{2nd}}(\mathbf{S})
=
-\mathbf{v}_{\text{val}}^{\top}\Delta\boldsymbol{\theta}(\mathbf{S})
-\frac{1}{2}\Delta\boldsymbol{\theta}(\mathbf{S})^{\top}\mathbf{H}\Delta\boldsymbol{\theta}(\mathbf{S})
\end{equation}
satisfies
\begin{equation}
\big|U_{\mathrm{step}}(\mathbf{S})-U_{\mathrm{2nd}}(\mathbf{S})\big|
\le
\frac{\rho}{6}\eta^{3}\|\mathbf{g}(\mathbf{S})\|_{2}^{3}.
\end{equation}
\end{proposition}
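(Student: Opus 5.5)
The plan is to write both truncation errors as Taylor remainders along the segment from $\bar{\boldsymbol{\theta}}$ to $\bar{\boldsymbol{\theta}}+\Delta\boldsymbol{\theta}(\mathbf{S})$. To do this, set $\Delta\triangleq\Delta\boldsymbol{\theta}(\mathbf{S})=-\eta\,\mathbf{g}(\mathbf{S})$ and introduce the scalar function $\varphi(t)\triangleq\mathcal{L}_{\text{val}}(\bar{\boldsymbol{\theta}}+t\Delta)$ for $t\in[0,1]$. This gives $\varphi'(0)=\mathbf{v}_{\text{val}}^\top\Delta$ and $\varphi''(0)=\Delta^\top\mathbf{H}\Delta$. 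Consequently $U_{\mathrm{step}}(\mathbf{S})=\varphi(0)-\varphi(1)$, $U_{\mathrm{1st}}(\mathbf{S})=-\varphi'(0)$, and $U_{\mathrm{2nd}}(\mathbf{S})=-\varphi'(0)-\tfrac12\varphi''(0)$. Both differences in the statement therefore reduce to bounding $|\varphi(1)-\varphi(0)-\varphi'(0)|$ and $|\varphi(1)-\varphi(0)-\varphi'(0)-\tfrac12\varphi''(0)|$.

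For the first-order bound I will use the integral form
\begin{equation*}
\varphi(1)-\varphi(0)-\varphi'(0)=\int_0^1\big(\nabla\mathcal{L}_{\text{val}}(\bar{\boldsymbol{\theta}}+t\Delta)-\mathbf{v}_{\text{val}}\big)^\top\Delta\,dt .
\end{equation*}
By $\beta$-smoothness the integrand is at most $\beta t\|\Delta\|_2^2$ in absolute value. Integrating gives $\tfrac{\beta}{2}\|\Delta\|_2^2=\tfrac{\beta}{2}\eta^2\|\mathbf{g}(\mathbf{S})\|_2^2$. This is the standard descent-lemma argument, applied two-sided to get the absolute value.

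For the second-order bound I will write
\begin{equation*}
\varphi(1)-\varphi(0)-\varphi'(0)-\tfrac12\varphi''(0)=\int_0^1(1-t)\big(\varphi''(t)-\varphi''(0)\big)\,dt,
\end{equation*}
where $\varphi''(t)=\Delta^\top\nabla^2\mathcal{L}_{\text{val}}(\bar{\boldsymbol{\theta}}+t\Delta)\Delta$. The $\rho$-Lipschitz Hessian gives $|\varphi''(t)-\varphi''(0)|\le\rho t\|\Delta\|_2^3$. Since $\int_0^1(1-t)t\,dt=\tfrac16$, this yields $\tfrac{\rho}{6}\eta^3\|\mathbf{g}(\mathbf{S})\|_2^3$.

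The main obstacle is the neighborhood hypothesis, which needs some care. The integral representations require that the whole segment lie inside the region where the smoothness and Lipschitz assumptions hold. For the first bound the statement only says ``a neighborhood of $\bar{\boldsymbol{\theta}}$'', so I will read it as containing the segment, which holds for $\eta$ small enough. The second-order bound also needs $\mathcal{L}_{\text{val}}$ to be twice differentiable there, so that $\varphi''$ exists and is continuous; this follows from the Lipschitz Hessian assumption. Everything else is routine calculus, and no earlier result beyond the definitions in Eq.~\eqref{eq:taylor_local_utility} is needed.
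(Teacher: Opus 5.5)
Your proposal is correct and takes the same route as the paper: both reduce the two errors to the standard Taylor remainder bounds under a $\beta$-Lipschitz gradient and a $\rho$-Lipschitz Hessian, then substitute $\Delta=-\eta\,\mathbf{g}(\mathbf{S})$. The only difference is that you derive those remainder bounds explicitly from the integral form along the segment, which also makes visible the implicit requirement that the segment lie in the neighborhood, whereas the paper simply invokes the two inequalities.
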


\begin{proof}
By $\beta$-smoothness of $\mathcal{L}_{\text{val}}$, for any $\Delta$ in a neighborhood of $\bar{\boldsymbol{\theta}}$,
\begin{equation}
\left|
\mathcal{L}_{\text{val}}(\bar{\boldsymbol{\theta}}+\Delta)
-
\mathcal{L}_{\text{val}}(\bar{\boldsymbol{\theta}})
-
\nabla \mathcal{L}_{\text{val}}(\bar{\boldsymbol{\theta}})^{\top}\Delta
\right|
\le
\frac{\beta}{2}\|\Delta\|_{2}^{2}.
\end{equation}
Substituting $\Delta=\Delta\boldsymbol{\theta}(\mathbf{S})$ gives
\begin{equation}
\big|U_{\mathrm{step}}(\mathbf{S})-U_{\mathrm{1st}}(\mathbf{S})\big|
\le
\frac{\beta}{2}\eta^{2}\|\mathbf{g}(\mathbf{S})\|_{2}^{2}.
\end{equation}
If $\nabla^{2}\mathcal{L}_{\text{val}}$ is $\rho$-Lipschitz, then
\begin{equation}
\left|
\mathcal{L}_{\text{val}}(\bar{\boldsymbol{\theta}}+\Delta)
-
\mathcal{L}_{\text{val}}(\bar{\boldsymbol{\theta}})
-
\nabla \mathcal{L}_{\text{val}}(\bar{\boldsymbol{\theta}})^{\top}\Delta
-
\frac{1}{2}\Delta^{\top}\mathbf{H}\Delta
\right|
\le
\frac{\rho}{6}\|\Delta\|_{2}^{3}.
\end{equation}
Substituting $\Delta=\Delta\boldsymbol{\theta}(\mathbf{S})$ gives
\begin{equation}
\big|U_{\mathrm{step}}(\mathbf{S})-U_{\mathrm{2nd}}(\mathbf{S})\big|
\le
\frac{\rho}{6}\eta^{3}\|\mathbf{g}(\mathbf{S})\|_{2}^{3}.
\end{equation}
\end{proof}

\subsection{Analytical Shapley Values under Approximation Hierarchies}
\label{app:shapley}

We now prove Theorem~\ref{thm:dual_shapley}. For each $\alpha\in\{1,\mathrm{diag},\mathrm{full}\}$, let
\begin{equation}
\mathcal{G}_{\theta}^{(\alpha)}=([D],U_{\theta}^{(\alpha)}),
\qquad
\mathcal{G}_{\mathcal D}^{(\alpha)}=([N],U_{\mathcal D}^{(\alpha)})
\end{equation}
be the localized surrogate games defined in Section~\ref{sec:shared_scores}, and write
\begin{equation}
\phi_{\theta,d}^{(\alpha)}\triangleq \phi_d(\mathcal{G}_{\theta}^{(\alpha)}),
\qquad
\phi_{\mathcal D,n}^{(\alpha)}\triangleq \phi_n(\mathcal{G}_{\mathcal D}^{(\alpha)}).
\end{equation}
For a finite player set $\mathcal{P}$, utility function $v:2^{\mathcal{P}}\to\mathbb{R}$, and player $i\in\mathcal{P}$, the Shapley value is
\begin{equation}
\label{eq:shapley_def}
\phi_i(v)
=
\sum_{S\subseteq \mathcal{P}\setminus\{i\}}
\frac{|S|!(|\mathcal{P}|-|S|-1)!}{|\mathcal{P}|!}
\left[v(S\cup\{i\})-v(S)\right].
\end{equation}
Equivalently, if $\pi$ is a random permutation of $\mathcal{P}$ and $P_i^\pi$ denotes players preceding $i$ in $\pi$,
\begin{equation}
\label{eq:shapley_perm}
\phi_i(v)
=
\mathbb{E}_{\pi}\!\left[v(P_i^\pi\cup\{i\})-v(P_i^\pi)\right].
\end{equation}

\begin{lemma}[Half-Averaging under Permutations]
\label{lem:half_avg}
Fix $i\in\mathcal{P}$. For any coefficients $\{a_j\}_{j\in\mathcal{P}\setminus\{i\}}$,
\begin{equation}
\mathbb{E}_{\pi}\!\left[\sum_{j\in P_i^\pi} a_j\right]
=
\frac{1}{2}\sum_{j\neq i} a_j.
\end{equation}
The same identity holds componentwise for vector-valued $a_j$.
\end{lemma}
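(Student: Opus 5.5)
The argument reduces the claim to counting, for a fixed $j\neq i$, how often $j$ lands in $P_i^\pi$ under a uniformly random permutation $\pi$ of $\mathcal{P}$.

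\textbf{Step 1 (linearity).} Rewrite the random sum with indicators:
\[
\sum_{j\in P_i^\pi} a_j=\sum_{j\neq i} a_j\,\mathbf{1}[j\in P_i^\pi].
\]
Since this is a finite sum, linearity of expectation gives
\[
\mathbb{E}_\pi\Big[\sum_{j\in P_i^\pi} a_j\Big]=\sum_{j\neq i} a_j\,\Pr_\pi[j\in P_i^\pi].
\]

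\textbf{Step 2 (symmetry).} Fix $j\neq i$ and let $\sigma_{ij}$ be the transposition swapping $i$ and $j$. The map $\pi\mapsto\sigma_{ij}\circ\pi$ is a bijection on permutations, so it preserves the uniform distribution. It also exchanges the events $\{j \text{ precedes } i\}$ and $\{i \text{ precedes } j\}$. These two events are disjoint and together exhaust all permutations, because $i\neq j$ forces exactly one of them to hold. Hence each has probability $\tfrac12$, and in particular $\Pr_\pi[j\in P_i^\pi]=\tfrac12$. Substituting into Step 1 yields the scalar identity.

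\textbf{Step 3 (vector case).} For vector-valued $a_j$, apply the scalar identity to each coordinate separately; equivalently, linearity of expectation holds for vectors directly.

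None of these steps is a real obstacle. The only point needing care is Step 2, where one must confirm that the transposition really is measure-preserving and really swaps the two events.
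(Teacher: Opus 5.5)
Your proposal is correct and follows essentially the same route as the paper: write the sum over $P_i^\pi$ with indicators, apply linearity of expectation, and use $\Pr_\pi[j \text{ precedes } i]=\tfrac12$ for each $j\neq i$. The only difference is that you justify the $\tfrac12$ with the $i\leftrightarrow j$ swap bijection, whereas the paper simply asserts it; the swap must act on player labels, so on whichever side of $\pi$ your convention requires.
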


\begin{proof}
For each fixed $j\neq i$, under a uniformly random permutation, the probability that $j$ appears before $i$ is $1/2$. Applying linearity of expectation gives the result.
\end{proof}

\begin{proof}[Proof of Theorem~\ref{thm:dual_shapley}]
Let $M_{n,d}^{(1)}$, $M_{n,d}^{(\mathrm{diag})}$, and $M_{n,d}^{(\mathrm{full})}$ be defined as in Eq.~\eqref{eq:interaction_matrices}.

\textbf{First-order case.}
Using Eq.~\eqref{eq:taylor_local_utility}, the first-order parameter-side utility is
\begin{equation}
U_{\theta}^{(1)}(S)
=
-\mathbf{v}_{\text{val}}^\top \Delta\boldsymbol{\theta}_{\theta}(S)
=
\sum_{d\in S}\sum_{n=1}^{N} M_{n,d}^{(1)}.
\end{equation}
For any $S\subseteq [D]\setminus\{d\}$,
\begin{equation}
U_{\theta}^{(1)}(S\cup\{d\})-U_{\theta}^{(1)}(S)
=
\sum_{n=1}^{N} M_{n,d}^{(1)}
=
\phi_{\theta,d}^{(1)}
=
s_{\theta,d}^{(1)}.
\end{equation}
\begin{equation}
U_{\mathcal D}^{(1)}(S)
=
-\mathbf{v}_{\text{val}}^\top \Delta\boldsymbol{\theta}_{\mathcal D}(S)
=
\sum_{n\in S}\sum_{d=1}^{D} M_{n,d}^{(1)}.
\end{equation}
\begin{equation}
U_{\mathcal D}^{(1)}(S\cup\{n\})-U_{\mathcal D}^{(1)}(S)
=
\sum_{d=1}^{D} M_{n,d}^{(1)}
=
\phi_{\mathcal D,n}^{(1)}
=
s_{\mathcal D,n}^{(1)}.
\end{equation}

\textbf{Diagonal-curvature case.}
For the parameter-side game,
\begin{equation}
U_{\theta}^{(\mathrm{diag})}(S)
=
\sum_{d\in S}\sum_{n=1}^{N} M_{n,d}^{(1)}
-\frac{\eta^2}{2}\sum_{d\in S} c_d G_d^2,
\end{equation}
\begin{equation}
U_{\theta}^{(\mathrm{diag})}(S\cup\{d\})-U_{\theta}^{(\mathrm{diag})}(S)
=
\sum_{n=1}^{N} M_{n,d}^{(\mathrm{diag})}
=
\phi_{\theta,d}^{(\mathrm{diag})}
=
s_{\theta,d}^{(\mathrm{diag})}.
\end{equation}
For the data-side game,
\begin{equation}
U_{\mathcal D}^{(\mathrm{diag})}(S)
=
\sum_{n\in S}\sum_{d=1}^{D} M_{n,d}^{(1)}
-\frac{\eta^2}{2}\sum_{d=1}^{D} c_d\left(\sum_{m\in S} g_{m,d}\right)^2.
\end{equation}
\begin{equation}
U_{\mathcal D}^{(\mathrm{diag})}(S\cup\{n\})-U_{\mathcal D}^{(\mathrm{diag})}(S)
=
\sum_{d=1}^{D} M_{n,d}^{(1)}
-\frac{\eta^2}{2}\sum_{d=1}^{D} c_d\left[g_{n,d}^{2}+2g_{n,d}\sum_{m\in S}g_{m,d}\right].
\end{equation}
Applying Eq.~\eqref{eq:shapley_perm} and Lemma~\ref{lem:half_avg} coordinatewise replaces $\sum_{m\in S} g_{m,d}$ by $\frac{1}{2}\sum_{m\neq n}g_{m,d}$ under Shapley averaging, yielding
\begin{equation}
\phi_{\mathcal D,n}^{(\mathrm{diag})}
=
\sum_{d=1}^{D}\left(M_{n,d}^{(1)}-\frac{\eta^2}{2}c_d G_d g_{n,d}\right)
=
\sum_{d=1}^{D} M_{n,d}^{(\mathrm{diag})}
=
s_{\mathcal D,n}^{(\mathrm{diag})}.
\end{equation}

\textbf{Full-Hessian case.}
For the parameter-side game,
\begin{equation}
U_{\theta}^{(\mathrm{full})}(S)
=
\sum_{d\in S}\sum_{n=1}^{N} M_{n,d}^{(1)}
-\frac{\eta^2}{2}\sum_{i\in S}\sum_{j\in S} G_i H_{ij} G_j.
\end{equation}
For any $S\subseteq [D]\setminus\{d\}$, by symmetry of $\mathbf{H}$,
\begin{equation}
U_{\theta}^{(\mathrm{full})}(S\cup\{d\})-U_{\theta}^{(\mathrm{full})}(S)
=
\sum_{n=1}^{N} M_{n,d}^{(1)}
-\frac{\eta^2}{2}\left[G_d H_{dd} G_d + 2G_d\sum_{l\in S}H_{dl}G_l\right].
\end{equation}
Applying Eq.~\eqref{eq:shapley_perm} and Lemma~\ref{lem:half_avg} with $a_l=H_{dl}G_l$ gives
\begin{equation}
\phi_{\theta,d}^{(\mathrm{full})}
=
\sum_{n=1}^{N} M_{n,d}^{(1)}-\frac{\eta^2}{2}(\mathbf{H}\mathbf{G})_d G_d
=
\sum_{n=1}^{N} M_{n,d}^{(\mathrm{full})}
=
s_{\theta,d}^{(\mathrm{full})}.
\end{equation}
For the data-side game,
\begin{equation}
U_{\mathcal D}^{(\mathrm{full})}(S)
=
\sum_{n\in S}\sum_{d=1}^{D} M_{n,d}^{(1)}
-\frac{\eta^2}{2}\left(\sum_{m\in S}\mathbf{g}_m\right)^\top \mathbf{H}\left(\sum_{m\in S}\mathbf{g}_m\right).
\end{equation}
\begin{equation}
U_{\mathcal D}^{(\mathrm{full})}(S\cup\{n\})-U_{\mathcal D}^{(\mathrm{full})}(S)
=
\sum_{d=1}^{D} M_{n,d}^{(1)}
-\frac{\eta^2}{2}\left[\mathbf{g}_n^\top \mathbf{H}\mathbf{g}_n + 2\mathbf{g}_n^\top \mathbf{H}\sum_{m\in S}\mathbf{g}_m\right].
\end{equation}
Applying Eq.~\eqref{eq:shapley_perm} and Lemma~\ref{lem:half_avg} in vector form yields
\begin{equation}
\phi_{\mathcal D,n}^{(\mathrm{full})}
=
\sum_{d=1}^{D}\left(M_{n,d}^{(1)}-\frac{\eta^2}{2}(\mathbf{H}\mathbf{G})_d g_{n,d}\right)
=
\sum_{d=1}^{D} M_{n,d}^{(\mathrm{full})}
=
s_{\mathcal D,n}^{(\mathrm{full})}.
\end{equation}
This completes the proof.
\end{proof}

\subsection{Proof of Proposition~\ref{prop:coord_gap}}
\label{app:coord_gap}

Under the joint restrictions, the first-order update is \(\Delta\boldsymbol{\theta}(S_\mathcal{D},\mathbf{m}_\theta)=-\eta(\mathbf{m}_\theta\odot \mathbf{G}_{S_\mathcal{D}})\), where \(\mathbf{G}_{S_\mathcal{D}}=\sum\nolimits_{n\in S_\mathcal{D}}\mathbf{g}_n\) and \(S_\theta=\{d\in[D]:(\mathbf{m}_\theta)_d=1\}\). The corresponding first-order utility decomposes as
\begin{equation}
-\mathbf{v}_{\text{val}}^\top \Delta\boldsymbol{\theta}(S_\mathcal{D},\mathbf{m}_\theta)
=
\eta\sum\nolimits_{n\in S_\mathcal{D}}\langle\mathbf{v}_{\text{val}}\odot \mathbf{m}_\theta,\mathbf{g}_n\rangle
=
\eta\sum\nolimits_{d\in S_\theta}(\mathbf{v}_{\text{val}})_dG_{S_\mathcal{D},d}.
\end{equation}
Thus, the mask-aware marginal scores compatible with the joint restricted update are \(\bar s_{\mathcal{D},n}(\mathbf{m}_\theta)=\eta\langle\mathbf{v}_{\text{val}}\odot \mathbf{m}_\theta,\mathbf{g}_n\rangle\) and \(\bar s_{\theta,d}(S_\mathcal{D})=\eta(\mathbf{v}_{\text{val}})_dG_{S_\mathcal{D},d}\).

The isolated first-order scores are \(s^{\mathrm{iso}}_{\mathcal{D},n}=\eta\langle\mathbf{v}_{\text{val}},\mathbf{g}_n\rangle\) and \(s^{\mathrm{iso}}_{\theta,d}=\eta(\mathbf{v}_{\text{val}})_dG_d\), where \(\mathbf{G}=\sum\nolimits_{n=1}^{N}\mathbf{g}_n\). Subtracting the mask-aware data score from the isolated data score gives
\begin{equation}
s^{\mathrm{iso}}_{\mathcal{D},n}
-\bar s_{\mathcal{D},n}(\mathbf{m}_\theta)
=
\eta\langle\mathbf{v}_{\text{val}}\odot(\mathbf{1}_D-\mathbf{m}_\theta),\mathbf{g}_n\rangle
=
\eta\sum\nolimits_{d:(\mathbf{m}_\theta)_d=0}(\mathbf{v}_{\text{val}})_d g_{n,d}.
\end{equation}
Similarly, subtracting the mask-aware parameter score from the isolated parameter score gives
\begin{equation}
s^{\mathrm{iso}}_{\theta,d}
-\bar s_{\theta,d}(S_\mathcal{D})
=
\eta(\mathbf{v}_{\text{val}})_d(G_d-G_{S_\mathcal{D},d})
=
\eta(\mathbf{v}_{\text{val}})_d\sum\nolimits_{n\in[N]\setminus S_\mathcal{D}} g_{n,d}.
\end{equation}
These two identities prove Proposition~\ref{prop:coord_gap}. They show that the data-side gap is supported on frozen coordinates, whereas the parameter-side gap is supported on samples excluded by the data selection. This characterizes the isolated-scoring gap, but it does not mean that DualSFT performs fully conditional re-scoring. DualSFT instead extracts both marginal scores once from the same local surrogate, leaving fixed-side re-selection as an optional refinement analyzed in Section~\ref{sec:diagnostic_analysis}.

\vspace{-0.05in}

\subsection{Practical Diagonal Scores and Selection Stability}
\label{app:error_propagation}

We analyze scores in Eqs.~\eqref{eq:u_vector}--\eqref{eq:practical_scores}. Let $\widehat{D}\triangleq \operatorname{Diag}(\hat{\mathbf{c}})$, and write $\widehat{s}_{\theta,d}$ and $\widehat{s}_{\mathcal D,n}$ for scores in Eq.~\eqref{eq:practical_scores}. For comparison, write $s_{\theta,d}^{(\mathrm{full})}$ and $s_{\mathcal D,n}^{(\mathrm{full})}$ for the local full-Hessian counterparts obtained from Eq.~\eqref{eq:closed_form_scores} with the step size instantiated as $\eta_{\mathrm{sc}}$.

\begin{proposition}[Diagonal-Proxy Score Perturbation]
\label{prop:direct_score_gap}
For parameter $d\in[D]$ and sample $n\in[N]$,
\begin{equation}
\widehat{s}_{\theta,d}
-
s_{\theta,d}^{(\mathrm{full})}
=
\frac{\eta_{\mathrm{sc}}^2}{2}\big((\mathbf{H}-\widehat{D})\mathbf{G}\big)_d G_d,
\qquad
\widehat{s}_{\mathcal D,n}
-
s_{\mathcal D,n}^{(\mathrm{full})}
=
\frac{\eta_{\mathrm{sc}}^2}{2}\mathbf{g}_n^\top (\mathbf{H}-\widehat{D})\mathbf{G}.
\end{equation}
Consequently,
\begin{equation}
\|\widehat{\mathbf{s}}_{\theta}-\mathbf{s}_{\theta}^{(\mathrm{full})}\|_{\infty}
\le
\frac{\eta_{\mathrm{sc}}^2}{2}\|\mathbf{H}-\widehat{D}\|_{\mathrm{op}}\|\mathbf{G}\|_2^2,
\qquad
\|\widehat{\mathbf{s}}_{\mathcal D}-\mathbf{s}_{\mathcal D}^{(\mathrm{full})}\|_{\infty}
\le
\frac{\eta_{\mathrm{sc}}^2}{2}B_g\|\mathbf{H}-\widehat{D}\|_{\mathrm{op}}\|\mathbf{G}\|_2
\end{equation}
if $\|\mathbf{g}_n\|_2\le B_g$ for all $n$.
\end{proposition}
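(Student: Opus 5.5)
The plan is to compute both differences directly from the definitions and then bound them with Cauchy--Schwarz.

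\textbf{Step 1: the exact gaps.} By Eq.~\eqref{eq:u_vector}, the practical score uses $u_d=\eta_{\mathrm{sc}}(\mathbf{v}_{\text{val}})_d-\frac{\eta_{\mathrm{sc}}^2}{2}\hat c_d G_d$, so $\widehat{s}_{\theta,d}=u_dG_d$ and $\widehat{s}_{\mathcal D,n}=\langle\mathbf{u},\mathbf{g}_n\rangle$ by Eq.~\eqref{eq:practical_scores}. For the full-Hessian counterpart I sum the entries of $\mathbf{M}^{(\mathrm{full})}$ in Eq.~\eqref{eq:interaction_matrices}, with $\eta=\eta_{\mathrm{sc}}$.
\begin{itemize}[leftmargin=*]
\item Column sums give $s_{\theta,d}^{(\mathrm{full})}=\eta_{\mathrm{sc}}(\mathbf{v}_{\text{val}})_dG_d-\frac{\eta_{\mathrm{sc}}^2}{2}(\mathbf{H}\mathbf{G})_dG_d$, using $\sum_n g_{n,d}=G_d$.
\item Row sums give $s_{\mathcal D,n}^{(\mathrm{full})}=\eta_{\mathrm{sc}}\langle\mathbf{v}_{\text{val}},\mathbf{g}_n\rangle-\frac{\eta_{\mathrm{sc}}^2}{2}\mathbf{g}_n^\top\mathbf{H}\mathbf{G}$.
\end{itemize}
Subtracting, the first-order terms cancel identically. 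Since $\hat c_dG_d=(\widehat{D}\mathbf{G})_d$, the remaining terms are exactly $\frac{\eta_{\mathrm{sc}}^2}{2}((\mathbf{H}-\widehat{D})\mathbf{G})_dG_d$ and $\frac{\eta_{\mathrm{sc}}^2}{2}\mathbf{g}_n^\top(\mathbf{H}-\widehat{D})\mathbf{G}$. One bookkeeping point: the first-order term must be the same $\mathbf{v}_{\text{val}}$ in both scores, evaluated at the same $\bar{\boldsymbol{\theta}}$ and over the same scoring pool. I will state this explicitly, with $N$ identified with $N_{\mathrm{sc}}$.

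\textbf{Step 2: parameter-side bound.} Write $\mathbf{A}=\mathbf{H}-\widehat{D}$, which is symmetric. Then
\[
|(\mathbf{A}\mathbf{G})_d G_d|\le \|\mathbf{A}\mathbf{G}\|_\infty |G_d|\le \|\mathbf{A}\mathbf{G}\|_2\|\mathbf{G}\|_2\le\|\mathbf{A}\|_{\mathrm{op}}\|\mathbf{G}\|_2^2,
\]
using $|G_d|\le\|\mathbf{G}\|_2$ and $\|\cdot\|_\infty\le\|\cdot\|_2$. Taking the maximum over $d$ gives the stated bound.

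\textbf{Step 3: data-side bound.} By Cauchy--Schwarz,
\[
|\mathbf{g}_n^\top\mathbf{A}\mathbf{G}|\le\|\mathbf{g}_n\|_2\|\mathbf{A}\|_{\mathrm{op}}\|\mathbf{G}\|_2\le B_g\|\mathbf{A}\|_{\mathrm{op}}\|\mathbf{G}\|_2,
\]
and taking the maximum over $n$ finishes the proof.

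There is no real obstacle here; the only care needed is in Step 1, matching the conventions ($\eta=\eta_{\mathrm{sc}}$ and the same $\mathbf{v}_{\text{val}}$) so that the first-order terms cancel exactly.
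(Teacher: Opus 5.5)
Your proposal is correct and follows the same route as the paper: subtract the practical diagonal scores from the column and row sums of $\mathbf{M}^{(\mathrm{full})}$ so that the first-order terms cancel, then bound the residuals using Cauchy--Schwarz and the operator-norm inequality. Your explicit bookkeeping (the same $\mathbf{v}_{\text{val}}$ in both scores, $\eta=\eta_{\mathrm{sc}}$, and $N$ identified with $N_{\mathrm{sc}}$) makes precise what the paper's one-line proof leaves implicit.
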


\begin{proof}
Subtracting the definitions of the diagonal scores and the full-Hessian scores gives the two identities directly. The $\ell_{\infty}$ bounds follow from Cauchy--Schwarz and the operator-norm inequality.
\end{proof}

\begin{corollary}[Top-$k$ parameter selection stability]
\label{cor:topk_param_stability}
Let $\widehat{s}_{\theta,(1)}\ge\cdots\ge\widehat{s}_{\theta,(D)}$ denote the sorted practical parameter scores, and define
\begin{equation}
\Delta_k^{\theta}
\triangleq
\widehat{s}_{\theta,(k)}-\widehat{s}_{\theta,(k+1)}.
\end{equation}
If
\begin{equation}
\Delta_k^{\theta}
>
\eta_{\mathrm{sc}}^2\|\mathbf{H}-\widehat{D}\|_{\mathrm{op}}\|\mathbf{G}\|_2^2,
\end{equation}
then the Top-$k$ parameter set induced by $\widehat{\mathbf{s}}_{\theta}$ is identical to that induced by $\mathbf{s}_{\theta}^{(\mathrm{full})}$.
\end{corollary}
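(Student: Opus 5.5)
The statement to prove is Corollary~\ref{cor:topk_param_stability}. The plan is a standard margin argument built on the uniform perturbation bound of Proposition~\ref{prop:direct_score_gap}. Set
\[
\varepsilon \triangleq \frac{\eta_{\mathrm{sc}}^2}{2}\|\mathbf{H}-\widehat{D}\|_{\mathrm{op}}\|\mathbf{G}\|_2^2 .
\]
That proposition gives $|\widehat{s}_{\theta,d}-s^{(\mathrm{full})}_{\theta,d}|\le\varepsilon$ for every $d\in[D]$. The hypothesis of the corollary reads exactly $\Delta_k^{\theta}>2\varepsilon$, so the factor $2$ in the corollary is what makes a two-sided perturbation argument close.

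Let $T\subseteq[D]$ be the Top-$k$ index set under $\widehat{\mathbf{s}}_{\theta}$, with signed scores as in Algorithm~\ref{alg:dualsft}. Because $\Delta_k^{\theta}>0$, this set is uniquely defined with no tie at the boundary. For any $i\in T$ and $j\notin T$ we have $\widehat{s}_{\theta,i}\ge\widehat{s}_{\theta,(k)}$ and $\widehat{s}_{\theta,j}\le\widehat{s}_{\theta,(k+1)}$. Combining these with the $\varepsilon$-closeness on both indices gives
\[
s^{(\mathrm{full})}_{\theta,i}-s^{(\mathrm{full})}_{\theta,j}\ \ge\ \widehat{s}_{\theta,i}-\widehat{s}_{\theta,j}-2\varepsilon\ \ge\ \Delta_k^{\theta}-2\varepsilon\ >\ 0 .
\]
So under the full-Hessian scores, every element of $T$ strictly dominates every element outside $T$.

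It remains to check that this forces $T$ to be the Top-$k$ set of $\mathbf{s}^{(\mathrm{full})}_{\theta}$. Since $|T|=k$ and each member of $T$ strictly exceeds each non-member, the $k$ largest full scores are exactly those indexed by $T$. Strictness also rules out any tie-breaking ambiguity at position $k$ for the full scores, so the two induced selections coincide.

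I expect no real obstacle. The only care points are the factor of two (absorbed by the hypothesis) and making sure ties are excluded on both sides. On the practical side, strict positivity of $\Delta_k^{\theta}$ excludes them; on the full-Hessian side, the strict inequality above excludes them. The same argument, using the data-side bound with $B_g$, would give the analogous Top-$b$ statement for $\widehat{\mathbf{s}}_{\mathcal D}$.
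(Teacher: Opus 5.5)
Your proposal is correct and follows the paper's proof: both take the $\ell_\infty$ bound $\varepsilon_\theta=\frac{\eta_{\mathrm{sc}}^2}{2}\|\mathbf{H}-\widehat{D}\|_{\mathrm{op}}\|\mathbf{G}\|_2^2$ from Proposition~\ref{prop:direct_score_gap} and note that the hypothesis is exactly $\Delta_k^{\theta}>2\varepsilon_\theta$. The only difference is that you write out the boundary-gap argument, including ruling out ties on both sides, which the paper simply cites as standard.
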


\begin{proof}
By Proposition~\ref{prop:direct_score_gap},
\begin{equation}
\|\widehat{\mathbf{s}}_{\theta}-\mathbf{s}_{\theta}^{(\mathrm{full})}\|_{\infty}
\le
\varepsilon_{\theta}
\triangleq
\frac{\eta_{\mathrm{sc}}^2}{2}\|\mathbf{H}-\widehat{D}\|_{\mathrm{op}}\|\mathbf{G}\|_2^2.
\end{equation}
$\Delta_k^{\theta}>2\varepsilon_{\theta}$ implies Top-$k$ stability by the standard boundary-gap argument.
\end{proof}

\begin{corollary}[Top-$b$ data selection stability]
\label{cor:topb_data_stability}
Let $\widehat{s}_{\mathcal D,(1)}\ge\cdots\ge\widehat{s}_{\mathcal D,(N)}$ denote the sorted practical data scores, and define
\begin{equation}
\Delta_b^{\mathcal D}
\triangleq
\widehat{s}_{\mathcal D,(b)}-\widehat{s}_{\mathcal D,(b+1)}.
\end{equation}
If
\begin{equation}
\Delta_b^{\mathcal D}
>
\eta_{\mathrm{sc}}^2 B_g\,\|\mathbf{H}-\widehat{D}\|_{\mathrm{op}}\,\|\mathbf{G}\|_2,
\end{equation}
then the Top-$b$ data set induced by $\widehat{\mathbf{s}}_{\mathcal D}$ is identical to that induced by $\mathbf{s}_{\mathcal D}^{(\mathrm{full})}$.
\end{corollary}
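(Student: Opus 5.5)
The plan mirrors the proof of Corollary~\ref{cor:topk_param_stability}, now applied to the data-side scores. First, invoke Proposition~\ref{prop:direct_score_gap} under the standing assumption $\|\mathbf{g}_n\|_2\le B_g$ for all $n$. This gives the uniform perturbation bound
\begin{equation*}
\|\widehat{\mathbf{s}}_{\mathcal D}-\mathbf{s}_{\mathcal D}^{(\mathrm{full})}\|_{\infty}\le\varepsilon_{\mathcal D}\triangleq\frac{\eta_{\mathrm{sc}}^2}{2}B_g\|\mathbf{H}-\widehat{D}\|_{\mathrm{op}}\|\mathbf{G}\|_2 .
\end{equation*}
The bound itself comes from the identity $\widehat{s}_{\mathcal D,n}-s^{(\mathrm{full})}_{\mathcal D,n}=\frac{\eta_{\mathrm{sc}}^2}{2}\mathbf{g}_n^\top(\mathbf{H}-\widehat{D})\mathbf{G}$, followed by Cauchy--Schwarz and the operator-norm inequality. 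The hypothesis of the corollary then reads exactly $\Delta_b^{\mathcal D}>2\varepsilon_{\mathcal D}$.

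Second, carry out the boundary-gap argument explicitly, since the parameter-side proof only cites it. Let $T$ be the Top-$b$ index set under $\widehat{\mathbf{s}}_{\mathcal D}$. For any $n\in T$ and $m\notin T$ we have $\widehat{s}_{\mathcal D,n}\ge\widehat{s}_{\mathcal D,(b)}$ and $\widehat{s}_{\mathcal D,m}\le\widehat{s}_{\mathcal D,(b+1)}$, so
\begin{equation*}
s^{(\mathrm{full})}_{\mathcal D,n}-s^{(\mathrm{full})}_{\mathcal D,m}\ge\widehat{s}_{\mathcal D,n}-\widehat{s}_{\mathcal D,m}-2\varepsilon_{\mathcal D}\ge\Delta_b^{\mathcal D}-2\varepsilon_{\mathcal D}>0 .
\end{equation*}
Thus every member of $T$ strictly dominates every non-member under the full-Hessian scores. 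Since $|T|=b$, $T$ is exactly the set of the $b$ largest full-Hessian scores.

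Third, I will note a point about ties. The strict gap $\Delta_b^{\mathcal D}>0$ guarantees that $T$ is well defined under $\widehat{\mathbf{s}}_{\mathcal D}$, with no tie at the boundary. The strict inequality derived above guarantees the same under $\mathbf{s}_{\mathcal D}^{(\mathrm{full})}$. So the Top-$b$ set is unambiguous on both sides, regardless of any signed-versus-absolute convention, since both selections use signed scores.

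The main obstacle is only bookkeeping. One must check that the data-side bound from Proposition~\ref{prop:direct_score_gap} carries a single factor $\|\mathbf{G}\|_2$ and the factor $B_g$, rather than the $\|\mathbf{G}\|_2^2$ of the parameter side. One must also track the factor of $2$ that converts the $\ell_\infty$ radius $\varepsilon_{\mathcal D}$ into the stated gap threshold $\eta_{\mathrm{sc}}^2B_g\|\mathbf{H}-\widehat{D}\|_{\mathrm{op}}\|\mathbf{G}\|_2$.
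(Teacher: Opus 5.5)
Your proposal is correct and takes the paper's route: apply Proposition~\ref{prop:direct_score_gap} (with $\|\mathbf{g}_n\|_2\le B_g$) to get the $\ell_\infty$ radius $\varepsilon_{\mathcal D}$, note that the hypothesis is exactly $\Delta_b^{\mathcal D}>2\varepsilon_{\mathcal D}$, and conclude with the boundary-gap argument. The only difference is that you write out that boundary-gap step, showing every selected index strictly dominates every unselected one under $\mathbf{s}_{\mathcal D}^{(\mathrm{full})}$, whereas the paper only cites it as standard.
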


\begin{proof}
By Proposition~\ref{prop:direct_score_gap},
\begin{equation}
\|\widehat{\mathbf{s}}_{\mathcal D}-\mathbf{s}_{\mathcal D}^{(\mathrm{full})}\|_{\infty}
\le
\varepsilon_{\mathcal D}
\triangleq
\frac{\eta_{\mathrm{sc}}^2}{2}B_g\,\|\mathbf{H}-\widehat{D}\|_{\mathrm{op}}\,\|\mathbf{G}\|_2.
\end{equation}
$\Delta_b^{\mathcal D}>2\varepsilon_{\mathcal D}$ implies Top-$b$ stability by the same argument.
\end{proof}

\begin{remark}[Scope of the comparison]
\label{rem:reference_utility}
This subsection compares practical diagonal scores with local full-Hessian counterparts. It does not claim to recover exact bilevel optimum but shows diagonal approximation is controlled and, under a boundary-gap condition, preserves Top-$k$/Top-$b$ selection.
\end{remark}

\section{Optimizer Compatibility: SGD Surrogate and Practical Curvature Proxy}
\vspace{-0.05in}
\label{app:adam_compatibility}

Sections~\ref{sec:blo_framework} to \ref{sec:shared_scores} analyze a localized single-step surrogate, for which Theorem~\ref{thm:dual_shapley} gives closed-form Shapley decompositions of its surrogate games. In practice, DualSFT retains this surrogate utility and instantiates its diagonal second-order coefficient with an optimizer-derived proxy. Let $\mathbf{r}_{t_w}$ denote AdamW's second-moment buffer after $t_w$ warmup updates, with decay $\beta_2$ and stabilizer $\epsilon_{\mathrm{adam}}$. Define $\hat{\mathbf{c}}=\sqrt{\mathbf{r}_{t_w}/(1-\beta_2^{t_w})}+\epsilon_{\mathrm{adam}}$ and keep it fixed during one-shot scoring. This is a positive damped RMS-gradient scale, not an exact Hessian diagonal. It aligns with recent analyses of Adam-family preconditioning and memory-reduced adaptive optimizers~\citep{kunstner2024heavytailed,zhang2025adammini,robert2025ldadam,vyas2025soap,orvieto2025in,kalra2025when,bai2025adaptive}. Accordingly, the exact row-column Shapley decomposition remains that of the SGD-defined localized surrogate, while the practical second-order score should be interpreted as its optimizer-aware instantiation.

Accordingly, any implementation-level mismatch introduced by the practical diagonal proxy is absorbed into the curvature/proxy utility term analyzed in Appendix~\ref{app:error_propagation}, so that the resulting score perturbation and selection stability can be controlled within the same error-propagation framework.

\section{Detailed Derivation of Confidence-Weighted Self-Distillation}
\label{app:cwsd_derivation}

Section~\ref{sec:algorithm} introduces CWSD in Eq.~\eqref{eq:cwsd} as a tractable surrogate for prior-preserving direction $\mathbf{v}_{\text{prior}}$ without the pre-training corpus. This appendix proves CWSD is non-degenerate at warmup checkpoint $\bar{\boldsymbol{\theta}}$, and its gradient reduces to a vector-Jacobian product.

\subsection{Objective and Evaluation Point}
\label{app:cwsd_objective_eval_point}

Let $\mathcal{D}_{\text{anchor}}$ be the held-out anchor set with $\mathcal{D}_{\text{anchor}}\cap\mathcal{D}_{\text{train}}=\emptyset$, and let $\mathcal{X}_{\text{anchor}}$ denote its input set.
\begin{equation}
\label{eq:app_cwsd_objective}
\mathcal{L}_{\text{CWSD}}(\bar{\boldsymbol{\theta}})
=
\frac{1}{|\mathcal{X}_{\text{anchor}}|}
\textstyle \sum_{x\in \mathcal{X}_{\text{anchor}}}
\omega(x)\tau^2
\mathrm{KL}\!\Big(
P_\tau(\cdot \mid \boldsymbol{\theta}_{\text{old}}, x)
\,\big\|\,
P_\tau(\cdot \mid \bar{\boldsymbol{\theta}}, x)
\Big).
\end{equation}
The surrogate preservation direction is
\begin{equation}
\label{eq:app_cwsd_prior_direction}
\mathbf{v}_{\text{prior}}
\triangleq
\nabla_{\bar{\boldsymbol{\theta}}} \mathcal{L}_{\text{CWSD}}(\bar{\boldsymbol{\theta}}).
\end{equation}

CWSD is evaluated at the warmup checkpoint $\bar{\boldsymbol{\theta}}$ rather than $\boldsymbol{\theta}_{\text{old}}$. If evaluated at $\boldsymbol{\theta}_{\text{old}}$, teacher and student distributions would coincide, and the KL term and its gradient would both be zero. Using $\bar{\boldsymbol{\theta}}$ avoids this degeneracy and yields a non-trivial surrogate preservation direction for one-shot scoring.

\subsection{Gradient Derivation}
\label{app:cwsd_gradient_derivation}

For one anchor example $x$, let
\begin{equation}
\mathbf{q}=P_\tau(\cdot \mid \boldsymbol{\theta}_{\text{old}}, x),
\qquad
\mathbf{p}=P_\tau(\cdot \mid \bar{\boldsymbol{\theta}}, x),
\end{equation}
and let $\mathbf{z}$ denote the student logits at $\bar{\boldsymbol{\theta}}$. The per-example CWSD loss is
\begin{equation}
\ell_{\text{CWSD}}(x)
=
\omega(x)\tau^2 \mathrm{KL}(\mathbf{q}\,\|\,\mathbf{p})
=
\omega(x)\tau^2 \sum\nolimits_i q_i(\log q_i-\log p_i).
\end{equation}
Since $\mathbf{q}$ is produced by the frozen reference model $\boldsymbol{\theta}_{\text{old}}$, gradients flow only through $\mathbf{p}$.

Differentiating with respect to the student logit $z_j$ gives
\begin{equation}
\frac{\partial \ell_{\text{CWSD}}(x)}{\partial z_j}
=
-\omega(x)\tau^2 \sum\nolimits_i q_i \frac{\partial \log p_i}{\partial z_j}.
\end{equation}
Using
\begin{equation}
\frac{\partial \log p_i}{\partial z_j}
=
\frac{1}{\tau}(\delta_{ij}-p_j),
\end{equation}
we obtain
\begin{equation}
\frac{\partial \ell_{\text{CWSD}}(x)}{\partial z_j}
=
\omega(x)\tau(p_j-q_j).
\end{equation}
Therefore, in column-vector form,
\begin{equation}
\label{eq:app_cwsd_logit_grad}
\nabla_{\mathbf{z}}\ell_{\text{CWSD}}(x)
=
\omega(x)\tau(\mathbf{p}-\mathbf{q})
\in\mathbb{R}^{|\mathcal{V}|}.
\end{equation}

Let
$\mathbf{J}_{z}(x;\bar{\boldsymbol{\theta}})
\triangleq
\frac{\partial \mathbf{z}}{\partial \bar{\boldsymbol{\theta}}}
\in\mathbb{R}^{|\mathcal{V}|\times D}$.
Applying the chain rule from logits to parameters gives
\begin{equation}
\nabla_{\bar{\boldsymbol{\theta}}} \ell_{\text{CWSD}}(x)
=
\mathbf{J}_{z}(x;\bar{\boldsymbol{\theta}})^\top
\omega(x)\tau(\mathbf{p}-\mathbf{q})
\in\mathbb{R}^{D}.
\end{equation}
Summing over anchor inputs yields
\begin{equation}
\label{eq:app_cwsd_final_grad}
\mathbf{v}_{\text{prior}}
=
\nabla_{\bar{\boldsymbol{\theta}}} \mathcal{L}_{\text{CWSD}}(\bar{\boldsymbol{\theta}})
=
\frac{1}{|\mathcal{X}_{\text{anchor}}|}
\sum\nolimits_{x\in \mathcal{X}_{\text{anchor}}}
\mathbf{J}_{z}(x;\bar{\boldsymbol{\theta}})^\top
\omega(x)\tau(\mathbf{p}-\mathbf{q}).
\end{equation}
Equation~\eqref{eq:app_cwsd_logit_grad} shows CWSD induces the logit-level gradient $\omega(x)\tau(\mathbf{p}-\mathbf{q})$, and Eq.~\eqref{eq:app_cwsd_final_grad} shows $\mathbf{v}_{\text{prior}}$ is obtained by backpropagating this logit gradient through the student Jacobian. In practice, this gradient is obtained exactly with one backward pass at $\bar{\boldsymbol{\theta}}$, without the pre-training corpus. CWSD is used once at the warmup checkpoint to define the one-shot scoring geometry; it is not used as an online regularizer during final restricted fine-tuning.

\section{Comparison with MODEL SHAPLEY}
\label{app:model_shapley_comparison}

This appendix compares DualSFT with \textsc{Model Shapley}~\citep{chu2025model} at the source-objective level. The key distinction is the scored intervention: DualSFT scores train-time masked updates, whereas \textsc{Model Shapley} scores static retention or deletion on a fixed checkpoint.

\subsection{Source Objectives}
\label{app:model_shapley_source_objectives}

Sparse fine-tuning is an update-selection problem. Starting from $\theta_{\mathrm{old}}\in\mathbb{R}^{D}$, it selects coordinates opened for task-induced updates:
\begin{equation}
\label{eq:cmp6_ours_bilevel}
\min_{\|\mathbf{S}\|_{0}\le k}\mathcal{L}_{\mathrm{val}}\!\left(\theta_{\mathrm{old}}+\mathbf{S}\odot\delta^{*}\right)
\quad \text{s.t.} \quad
\delta^{*}=\arg\min_{\delta}\mathcal{L}_{\mathrm{train}}\!\left(\theta_{\mathrm{old}}+\mathbf{S}\odot\delta;\mathcal{B}\right),
\end{equation}
where $\mathbf{S}\in\{0,1\}^{D}$ is the update mask and $\mathcal{L}_{\mathrm{val}}=\mathcal{L}_{\mathrm{new}}+\lambda\mathcal{L}_{\mathrm{prior}}$. Under the one-step surrogate,
\begin{equation}
\label{eq:cmp6_ours_one_step}
\Delta\theta(\mathbf{S}) \approx -\eta\bigl(\nabla_{\theta}\mathcal{L}_{\mathrm{train}}(\theta_{\mathrm{old}};\mathcal{B})\odot\mathbf{S}\bigr).
\end{equation}

By contrast, \textsc{Model Shapley} is naturally a masked-checkpoint objective. For fixed $\theta\in\mathbb{R}^{D}$ and retention mask $\mathbf{M}\in\{0,1\}^{D}$, it solves
\begin{equation}
\label{eq:cmp6_ms_masked_opt}
\min_{\mathbf{M}\in\{0,1\}^{D},\ \|\mathbf{M}\|_{0}=k}\mathcal{L}(x,\theta\odot\mathbf{M},y),
\end{equation}
or equivalently scores $U_{\mathrm{MS}}(\mathbf{M};x,y,\theta)\triangleq-\mathcal{L}(x,\theta\odot\mathbf{M},y)$. Thus, sparse fine-tuning scores which coordinates should be updated, whereas \textsc{Model Shapley} scores which parameters should be kept or removed from a fixed checkpoint.

\subsection{Relation to the Original MODEL SHAPLEY Formulas}
\label{app:model_shapley_formulas}

Let $S\subseteq\{1,\dots,D\}$ be a removed subset and define
\begin{equation}
\label{eq:cmp6_removed_model_v4}
\theta^{-S}\triangleq \theta-\sum\nolimits_{i\in S}\theta_i e_i,
\qquad
\Delta\mathcal{L}_{\mathrm{mask}}(\Theta^{S})\triangleq \mathcal{L}(x,\theta^{-S},y)-\mathcal{L}(x,\theta,y).
\end{equation}
With $g\triangleq\nabla_{\theta}\mathcal{L}(x,\theta,y)$ and $H\triangleq\nabla_{\theta}^{2}\mathcal{L}(x,\theta,y)$, the second-order expansion gives
\begin{equation}
\label{eq:cmp6_subset_delta_loss_final_v4}
\Delta\mathcal{L}_{\mathrm{mask}}(\Theta^{S}) \approx -\sum\nolimits_{i\in S} g_i\theta_i + \frac{1}{2}\sum\nolimits_{i,j\in S}H_{ij}\theta_i\theta_j.
\end{equation}
Therefore, adding $i\notin S$ changes the deletion loss by
\begin{equation}
\label{eq:cmp6_subset_increment_v4}
\Delta\mathcal{L}_{\mathrm{mask}}(\Theta^{S\cup\{i\}})-\Delta\mathcal{L}_{\mathrm{mask}}(\Theta^{S}) \approx -g_i\theta_i+\frac{1}{2}H_{ii}\theta_i^2+\sum\nolimits_{j\in S}H_{ij}\theta_i\theta_j.
\end{equation}

The original \textsc{Model Shapley} derivation writes the subset-removal loss as
\begin{equation}
\label{eq:cmp6_original_subset_v4}
\Delta L(\Theta^{S})
\approx
-\sum\nolimits_{i\in S} g_i^{\tau}\theta_i^{\tau}
+
\frac{1}{2}
\sum\nolimits_{i,j\in S}
w_{ij}^{(S)}H_{ij}^{\tau}\theta_i^{\tau}\theta_j^{\tau}.
\end{equation}
Both forms use the same second-order ingredients: gradient terms, diagonal curvature, and cross-parameter interactions. The only extra factor is the path-dependent coefficient $w_{ij}^{(S)}$, which the original implementation sets to $1$~\citep{chu2025model}. Under this simplification, Eq.~\eqref{eq:cmp6_original_subset_v4} reduces to the same unweighted second-order deletion structure as Eq.~\eqref{eq:cmp6_subset_delta_loss_final_v4}.

\subsection{Implications for Fine-Tuning}
\label{app:model_shapley_implications}

Even under this canonical masked form, \textsc{Model Shapley} remains a static retention/deletion objective, while sparse fine-tuning is a training-induced update-selection problem:
\begin{equation}
\label{eq:cmp6_transition_contrast}
\theta_{\mathrm{old}}\longrightarrow\theta_{\mathrm{old}}+\Delta\theta(\mathbf{S}),
\qquad
\theta\longrightarrow\theta\odot\mathbf{M}.
\end{equation}
Although \textsc{Model Shapley} includes a fine-tuning setting, it ranks parameters by Shapley value and then updates the top $10\%$ while freezing the rest~\citep{chu2025model}. This changes the intervention but keeps the score fixed. At the source level, \textsc{Model Shapley} measures removal harm, whereas sparse fine-tuning asks update desirability.

This mismatch appears in local fine-tuning utility. Let
\begin{equation}
\label{eq:cmp6_local_ft_objective_rev}
J(\Delta)\triangleq \mathcal{L}_{\mathrm{new}}(\theta_{\mathrm{old}}+\Delta)+\lambda\,\mathcal{L}_{\mathrm{prior}}(\theta_{\mathrm{old}}+\Delta),
\qquad
v\triangleq\nabla_{\Delta}J(0),\quad Q\triangleq\nabla_{\Delta}^{2}J(0).
\end{equation}
Under $Q\approx\mathrm{diag}(q_1,\dots,q_D)$, opening coordinate $d$ gives
\begin{equation}
\label{eq:cmp6_ft_single_coord_utility}
\widetilde{U}_{d}^{\mathrm{FT}}(\Delta_d)
\triangleq J(0)-J(\Delta_d e_d)
\approx -v_d\Delta_d-\frac{1}{2}q_d\Delta_d^2.
\end{equation}
For $q_d>0$, maximizing over $\Delta_d$ yields $\Delta_d^{*}=-v_d/q_d$ and
\begin{equation}
\label{eq:cmp6_ft_update_worthiness_rev}
\widetilde{U}_{d}^{\mathrm{FT},*}
=
\frac{v_d^2}{2q_d}
\approx
\frac{(g_d^{\mathrm{new}}+\lambda g_d^{\mathrm{prior}})^2}{2c_d},
\end{equation}
where $c_d$ is DualSFT's positive damped AdamW second-moment proxy.

By contrast, the masked-checkpoint deletion importance for $S=\{d\}$ is
\begin{equation}
\label{eq:cmp6_ms_presence_importance_rev}
\Delta\mathcal{L}_{\mathrm{mask}}(\Theta^{\{d\}})
\approx
-g_d\theta_d+\frac{1}{2}H_{dd}\theta_d^2,
\qquad
I_d^{\mathrm{MS}}\approx \frac{1}{2}H_{dd}\theta_d^2
\end{equation}
near a stationary checkpoint. Even if $H_{dd}=c_d$, the curvature dependence is opposite:
\begin{equation}
\label{eq:cmp6_opposite_dependence_rev}
\widetilde{U}_{d}^{\mathrm{FT},*}\propto
\frac{(g_d^{\mathrm{new}}+\lambda g_d^{\mathrm{prior}})^2}{c_d},
\qquad
I_{d}^{\mathrm{MS}}\propto c_d\theta_d^2.
\end{equation}
Thus, a parameter may be important to preserve precisely because updating it is risky. This source-level mismatch means \textsc{Model Shapley}'s parameter compensation better aligns with checkpoint-dependent static importance than with stable update desirability along a non-stationary fine-tuning trajectory.

In summary, \textsc{Model Shapley} is a masked-checkpoint objective whose Taylor expansion recovers the original static subset-removal structure. DualSFT instead scores train-time update selection over training-induced parameter increments, matching sparse fine-tuning more directly.

\section{Cross-Domain Perspective on Data and Parameter Selection}
\vspace{-0.05in}
\label{app:cross_domain_perspective}

Sections~\ref{sec:blo_framework}-\ref{sec:shared_scores} show data and parameter selection share a surrogate-level structure. Though selecting different objects, both address the stability--plasticity trade-off: data selection filters input-space signals, while parameter selection restricts parameter-space updates. Both use value quantification, constrained filtering, and optional refinement. The difference is object, not optimization logic.

The gradient interaction matrix $\mathbf{M}\in\mathbb{R}^{N\times D}$ is the shared algebraic carrier. Rows denote samples and columns parameters; row aggregation gives data utility, while column aggregation gives parameter importance. This explains why gradient alignment, curvature-aware scoring, redundancy control, and structured constraints appear in both domains. Thus, data utility and parameter importance instantiate one value score, with budgets $b,k$ constraining axes of the same interaction structure.

This perspective also suggests viewing parameters as an internal form of data. Data carry external task information, whereas parameters store internal model knowledge. Once equipped with gradient features, curvature proxies, and structural identifiers, parameters can be analyzed through a data-selection-like lens without being reduced to ordinary data. Conversely, data-selection ideas such as validation-aligned scoring, curvature-aware filtering, and diversity control suggest parameter-side analogues, while structured sparsity and hierarchical budgeting suggest data-side analogues.

Overall, DualSFT does not just combine heuristics. It exploits a shared surrogate scoring data and parameters through one interaction matrix, enabling one-shot co-extraction of both utilities.

\vspace{-0.05in}
\section{Additional Experimental Details and Results}
\label{app:additional_experiments}

\subsection{Detailed Experimental Settings}
\label{app:detailed_setting}

\subsubsection{Detailed Baseline Descriptions}
\label{app:detailed_baselines}

To provide a comprehensive evaluation, we compare DualSFT against four groups of leading methods spanning standard, parameter-sparse, data-efficient, and hybrid fine-tuning paradigms.

\textbf{Standard Fine-Tuning Baselines.} 
\begin{itemize}[leftmargin=*]
    \item \textbf{Standard SFT}: Full parameter updating without constraints, defined as $\theta_{t+1} = \theta_t - \eta \nabla_{\theta} \mathcal{L}_{\text{train}}$.
    \item \textbf{LoRA}~\citep{hu2022lora}: Constrains weight updates within a low-rank subspace. For a pre-trained weight matrix $W_0$, the update is $W = W_0 + BA$, where $B \in \mathbb{R}^{d \times r}$ and $A \in \mathbb{R}^{r \times k}$ ($r \ll \min(d, k)$).
\end{itemize}

\textbf{Data-Efficient Fine-Tuning Baselines.} These methods filter the candidate pool under a strict budget to identify samples yielding the highest contribution to target objectives.
\begin{itemize}[leftmargin=*]
    \item \textbf{Random}: A baseline uniformly sampling a subset of size $b$ from $\mathcal{D}_{\text{train}}$ without heuristic scoring.
    \item \textbf{LESS}~\citep{xia2024less}: Uses optimizer-trajectory influence approximations. A training sample $x_n$ is scored by gradient matching with validation gradients: $s_n \approx \nabla_{\theta} \mathcal{L}_{\text{train}}(x_n)^\top \nabla_{\theta} \mathcal{L}_{\text{val}}$, selecting samples helpful to the validation objective. DualSFT instead uses the same projection vector for data and parameter scores, sharing second-order and preservation-aware corrections across both axes.
    \item \textbf{FisherSFT}~\citep{deb2025fishersft}: Formulates sample selection as maximizing information gain. It scores samples using the Fisher Information Matrix (FIM), approximately calculated as $s_n \approx \text{Tr}(\hat{F}^{-1} \nabla l_n \nabla l_n^\top)$, prioritizing data that optimally resolves uncertainty.
    \item \textbf{SPICE}~\citep{chang2026spice}: Constructs a submodular optimization objective to mitigate gradient conflict. It maximizes alignment while penalizing redundancy within the selected subset $S$, formulated as $\max_{S} \sum_{i \in S} u_i - \mu_{\mathrm{sp}} \sum_{i,j \in S} C_{i,j}$, where $C_{i,j}$ denotes inter-sample gradient conflicts.
\end{itemize}

\textbf{Sequential Baselines.} These represent decoupled, pipeline-based data and parameter selection.
\begin{itemize}[leftmargin=*]
    \item \textbf{LESS + S$^2$FT} and \textbf{SPICE + NanoAdam}: These methods stack independent scoring mechanisms. A subset $\mathcal{D}_{\text{sel}}$ is first extracted using the respective data-side metric. Subsequently, parameter-side sparsification is computed solely based on gradients derived from $\mathcal{D}_{\text{sel}}$. This decoupled nature serves as the primary contrast to our joint, one-shot extraction mechanism in DualSFT.
\end{itemize}

\subsubsection{Datasets and Evaluation Details}
\label{app:eval_details}

\textbf{Datasets.}
We evaluate across two domains: Code Generation using Magicoder~\cite{wei2024magicoder} and Mathematical Reasoning using MetaMathQA~\cite{yu2024metamath}. We denote each full corpus as $\mathcal{D}_{\text{full}}$ and use it as the common budget base so that data-side budgets are directly comparable across all methods.

For each corpus, we reserve a 1{,}024-sample validation set $\mathcal{D}_{\text{val}}$ that provides the meta-validation signal for scoring, and final performance is evaluated only on the benchmark suites described below, then draw a disjoint 2{,}048-sample anchor set $\mathcal{D}_{\text{anchor}}$ for the prior-preserving direction $\mathbf{v}_{\text{prior}}$ via CWSD. The remainder forms the candidate training pool $\mathcal{D}_{\text{train}} = \mathcal{D}_{\text{full}} \setminus (\mathcal{D}_{\text{val}} \cup \mathcal{D}_{\text{anchor}})$, further partitioned during scoring as follows:
\begin{itemize}[leftmargin=*]
    \item \textbf{Warmup Set ($\mathcal{D}_{\text{warm}}$):} A random $5\% \cdot |\mathcal{D}_{\text{full}}|$ subset of $\mathcal{D}_{\text{train}}$, used for a 1-epoch pilot warmup to establish the scoring checkpoint $\bar{\boldsymbol{\theta}}$ and diagonal curvature proxy $\hat{\mathbf{c}}$. After warmup, $\mathcal{D}_{\text{warm}}$ is excluded from the scoring pool to avoid bias from already-updated samples.
    \item \textbf{Scoring Pool ($\mathcal{D}_{\text{pool}} = \mathcal{D}_{\text{train}} \setminus \mathcal{D}_{\text{warm}}$):} The remainder of the candidate pool, over which one-shot shared-surrogate scoring is performed.
    \item \textbf{Selected Subset ($\mathcal{D}_{\text{sel}}$):} Top-$b$ samples drawn from $\mathcal{D}_{\text{pool}}$, with $b = 10\% \cdot |\mathcal{D}_{\text{full}}|$.
\end{itemize}

The anchor size is fixed in absolute terms because CWSD estimates a preservation direction rather than a training distribution. Appendix~\ref{app:anchor_size_sensitivity} shows the CWSD direction and induced selections stabilize around 2{,}048 anchors, well below corpus scale. The reserved $|\mathcal{D}_{\mathrm{val}}|+|\mathcal{D}_{\mathrm{anchor}}|=3{,}072$ samples ($\sim$2.8\% of Magicoder, $\sim$0.8\% of MetaMathQA) are used only for scoring and preservation-direction estimation, and never enter final restricted fine-tuning. All methods use identically sized final adaptation subsets; Appendix~\ref{app:resource_accounting} separately reports validation and anchor usage as scoring resources.

\textbf{Evaluation Protocol.}
All evaluations use \texttt{lm-evaluation-harness}~\citep{gao2021framework} with the following per-benchmark settings. HumanEval~\citep{chen2021evaluating} is evaluated with 0-shot pass@1 (temperature $0$, greedy decoding, up to 512 generation tokens). PIQA~\citep{bisk2020piqa} and ARC-Challenge~\citep{clark2018think} are evaluated with 0-shot accuracy using log-likelihood scoring over answer options. MMLU~\citep{hendrycks2021measuring} is evaluated with 5-shot accuracy, averaged across all 57 subjects. GSM8K~\citep{cobbe2021training} is evaluated with 5-shot chain-of-thought prompting and exact-match accuracy on the final numeric answer. We report Stability as the arithmetic mean of the four out-of-domain benchmarks, Plasticity as the in-domain benchmark score (HumanEval for code, GSM8K for math), and Overall as the arithmetic mean of Stability and Plasticity. All reported numbers are mean $\pm$ standard deviation over three random seeds ($\{42, 123, 2026\}$).

\subsubsection{Implementation Details}
\label{app:impl_details}

For reproducibility, we distinguish shared optimization settings from method-specific configurations. All methods use the same dataset split, evaluation protocol, and reporting convention, run on four NVIDIA A100 (80GB) GPUs, and report the mean and standard deviation over three random seeds. Standard SFT, LoRA, and the supervised fine-tuning stage of data-selection baselines are implemented with LlamaFactory, whereas parameter-sparse baselines, including S$^2$FT, SMT, LIFT, and NanoAdam, follow their original codebases. Optimization hyper-parameters are selected separately for each method, base model, and dataset, and are reported in Tables~\ref{tab:hyperparams_magicoder} and~\ref{tab:hyperparams_metamath}.

For sparse baselines, we also report the structural hyper-parameters that define their update spaces and sparsity patterns; these settings are summarized in Table~\ref{tab:structural_hparams}. Specifically, LoRA is defined by rank, scaling, dropout, and target modules; S$^2$FT by structured selection ratios over attention and FFN subspaces; SMT by matrix/block sparsity and warm-start configuration; LIFT by mask type, no-gradient module setting, rank-reduction configuration, and projection update interval; and NanoAdam by density budget, mask interval, dynamic-density schedule, and mask criterion.

For DualSFT, the reported results use a 1-epoch warmup with $r_{\mathrm{warm}}=5\%$, balance factor $\lambda=0.8$, CWSD temperature $\tau=1.0$, data budget $b=10\%$, and parameter budget $k=5\%$. The final fine-tuning learning rate $\eta_{\mathrm{ft}}$ is listed in Tables~\ref{tab:hyperparams_magicoder}--\ref{tab:hyperparams_metamath}; scoring uses $\eta_{\mathrm{sc}}=\eta_{\mathrm{ft}}/N_{\mathrm{sc}}$, where $N_{\mathrm{sc}}$ is the number of examples accumulated in $\mathbf{G}$.

% Table 1: Magicoder
\begin{table}[ht]
\vspace{-0.1in}
\centering
\caption{Fine-Tuning Hyper-parameters for Magicoder.}
\label{tab:hyperparams_magicoder}
\resizebox{\linewidth}{!}{
\begin{tabular}{lcccccccc}
\toprule
Hyper-parameter & Std. SFT & LoRA & Data-Only & S$^2$FT & SMT & LIFT & NanoAdam & DualSFT (Ours) \\
\midrule
LR (Llama-3.2-3B)   & 2e-5 & 2e-4 & 2e-5 & 5e-5 & 4e-5 & 4e-5 & 3e-5 & 2e-5 \\
LR (Gemma-3-4B)     & 2e-5 & 2e-4 & 2e-5 & 5e-5 & 4e-5 & 4e-5 & 3e-5 & 2e-5 \\
LR (Qwen-3.5-9B)    & 1e-5 & 2e-4 & 1e-5 & 2e-5 & 2e-5 & 1.5e-5 & 2e-5 & 1e-5 \\
\midrule
\multicolumn{9}{l}{\footnotesize Shared settings: AdamW, cosine decay, global batch size 64, max sequence length 4096, 3 epochs, LR warmup ratio 0.03, weight decay 0.0.} \\
\bottomrule
\end{tabular}
}
\vspace{-0.2in}
\end{table}

% Table 2: MetaMathQA
\begin{table}[ht]
\centering
\caption{Fine-Tuning Hyper-parameters for MetaMathQA.}
\label{tab:hyperparams_metamath}
\resizebox{\linewidth}{!}{
\begin{tabular}{lcccccccc}
\toprule
Hyper-parameter & Std. SFT & LoRA & Data-Only & S$^2$FT & SMT & LIFT & NanoAdam & DualSFT (Ours) \\
\midrule
LR (Llama-3.2-3B)   & 1e-5 & 2e-4 & 1e-5 & 5e-5 & 4e-5 & 3e-5 & 3e-5 & 1e-5 \\
LR (Gemma-3-4B)     & 1e-5 & 2e-4 & 1e-5 & 5e-5 & 4e-5 & 3e-5 & 3e-5 & 1e-5 \\
\midrule
\multicolumn{9}{l}{\footnotesize Shared settings: AdamW, cosine decay, global batch size 128, max sequence length 1024, 2 epochs, LR warmup ratio 0.03, weight decay 0.0.} \\
\bottomrule
\end{tabular}
}
\vspace{-0.2in}
\end{table}

\begin{table}[ht]
\centering
\caption{Method-specific structural hyper-parameters.}
\label{tab:structural_hparams}
\renewcommand{\arraystretch}{1.1}
\footnotesize
\begin{tabular}{lp{11.6cm}}
\toprule
Method & Reported method-specific hyper-parameters \\
\midrule
LoRA & rank $r=64$, scaling $\alpha=128$, dropout $=0.05$, target modules = all linear layers \\
S$^2$FT & structured selection ratios: $d\_ratio=0.03$, $u\_ratio=o\_ratio=v\_ratio=0$ \\
SMT & warm-start steps $=100$, attention block ratio $=0.25$, MLP block ratio $=0.10$ \\
LIFT & mask type = \texttt{principle}, no-gradient modules = \texttt{gateupdown}, LoRA rank $=64$, filter rank $=64$, update interval $=200$ \\
NanoAdam & mask interval $=600$, dynamic density = \texttt{False}, density interval $=1000$, mask criterion = \texttt{weights} \\
DualSFT & warmup epochs $=1$, warmup fraction $r_{\mathrm{warm}}=5\%$ of $|\mathcal{D}_{\text{full}}|$, balance factor $\lambda=0.8$, CWSD temperature $\tau=1.0$, data budget $b=10\%$, parameter budget $k=5\%$ \\
\bottomrule
\end{tabular}
\vspace{-0.1in}
\end{table}

\subsubsection{Prompt Templates}
\label{app:prompts}

To ensure complete reproducibility, we provide the exact instruction templates used for fine-tuning and evaluation. We adhere to the standard Alpaca-style formats.

For Magicoder, we utilize a specialized template with a coding-focused system preamble:
\begin{tcolorbox}[colback=white,   
        colframe=gray!50!black, title=Magicoder Instruction Template]
\texttt{You are a proficient coding assistant. Below is an instruction that describes a task. Write a response that appropriately completes the request.}

\vspace{0.2cm}
\texttt{\#\#\# Instruction:} \\
\texttt{\{instruction\}}

\vspace{0.2cm}
\texttt{\#\#\# Response:}
\end{tcolorbox}

For MetaMathQA, we use the standard instruction template:
\begin{tcolorbox}[colback=white,   
        colframe=gray!50!black, title=MetaMathQA Instruction Template]
\texttt{Below is an instruction that describes a task. Write a response that appropriately completes the request.}

\vspace{0.2cm}
\texttt{\#\#\# Instruction:} \\
\texttt{\{instruction\}}

\vspace{0.2cm}
\texttt{\#\#\# Response:}
\end{tcolorbox}

\subsection{Resource Accounting under Matched Final Budgets}
\label{app:resource_accounting}

We distinguish final adaptation resources from scoring resources. Final adaptation resources refer to the examples and parameters used during restricted fine-tuning. Scoring resources include warmup, validation-gradient computation, CWSD, and streaming score extraction. This distinction is important because DualSFT uses $\mathcal{D}_{\mathrm{val}}$ and $\mathcal{D}_{\mathrm{anchor}}$ only for scoring and preservation-direction estimation, not as final fine-tuning data. Table~\ref{tab:resource_accounting} reports the corresponding costs.

\begin{table}[t]
\centering
\caption{Resource accounting under matched final budgets on Magicoder with Llama-3.2-3B. Final adaptation counts only data and parameters used during restricted fine-tuning. Peak memory denotes maximum single-GPU peak memory during final fine-tuning, not selection-stage memory.}
\label{tab:resource_accounting}
\footnotesize
\setlength{\tabcolsep}{4pt}
\begin{tabular}{lcccccc}
\toprule
\textbf{Method} & \textbf{Data \%} & \textbf{Param \%} & \textbf{Scoring passes} & \textbf{Selection time} & \textbf{E2E time} & \textbf{Peak memory} \\
\midrule
S$^{2}$FT & 100 & 5 & 0 & 0.0h & 12.5h & 28.4GB \\
NanoAdam & 100 & 5 & in-train & 0.0h & 11.5h & 22.6GB \\
LESS & 10 & 100 & 2 & 6.6h & 9.0h & 74.9GB \\
SPICE & 10 & 100 & 2 & 4.1h & 6.5h & 74.9GB \\
LESS + S$^{2}$FT & 10 & 5 & 2 & 6.6h & 8.1h & 28.4GB \\
SPICE + NanoAdam & 10 & 5 & 2 & 4.1h & 5.3h & 22.6GB \\
\midrule
DualSFT & 10 & 5 & 2 & 6.5h & 7.4h & 19.3GB \\
\bottomrule
\end{tabular}
\vspace{-0.2in}
\end{table}

\subsection{Additional Sequential-Combination Baselines}
\label{app:sequential_grid}

To examine pipeline-style hybrids, we evaluate a $2\times2$ grid combining two data selectors, LESS and SPICE, with two parameter selectors, S$^2$FT and NanoAdam. All combinations use the same final data budget ($10\%$) and parameter budget ($5\%$) as DualSFT. This comparison tests whether sequentially composing strong single-axis selectors can close the gap to coordinated dual scoring.

\begin{table*}[t]
\centering
\caption{Additional $2\times2$ sequential-combination baselines on Magicoder (\%). Best results are \textbf{bold}, and second-best results are \underline{underlined}. All methods use the same final data and parameter budgets.}
\label{tab:sequential_grid_magicoder}
\vspace{-0.1in}
\renewcommand{\arraystretch}{0.8}
\footnotesize
\resizebox{\textwidth}{!}{
\begin{tabular}{c | c | ccccc | c | c}
\toprule
\multirow{2}{*}{\textbf{Model}} & \multirow{2}{*}{\textbf{Method}} & \multicolumn{5}{c|}{\textbf{Stability} $\uparrow$} & \textbf{Plasticity} $\uparrow$ & \textbf{Overall} $\uparrow$ \\
\cmidrule(lr){3-7} \cmidrule(lr){8-8} \cmidrule(lr){9-9}
& & \textbf{ARC-C} & \textbf{PIQA} & \textbf{MMLU} & \textbf{GSM8K} & \textbf{Avg} & \textbf{HE} & \textbf{Score} \\
\midrule

\multirow{5}{*}{\cellcolor{white}\rotatebox{90}{\textbf{Llama-3.2}}}
& LESS + S$^2$FT      & 42.57$_{\pm 1.44}$ & \textbf{76.68$_{\pm 0.96}$} & 53.55$_{\pm 0.39}$ & 22.86$_{\pm 0.71}$ & 48.92$_{\pm 0.48}$ & 38.85$_{\pm 0.66}$ & 43.88$_{\pm 0.41}$ \\
& LESS + NanoAdam     & \underline{42.74$_{\pm 1.46}$} & 76.31$_{\pm 0.97}$ & 54.06$_{\pm 0.41}$ & 24.83$_{\pm 0.76}$ & 49.49$_{\pm 0.49}$ & \underline{39.78$_{\pm 0.73}$} & 44.63$_{\pm 0.44}$ \\
& SPICE + S$^2$FT     & 41.42$_{\pm 1.45}$ & 76.55$_{\pm 0.97}$ & 53.96$_{\pm 0.40}$ & 23.07$_{\pm 0.69}$ & 48.75$_{\pm 0.48}$ & 39.21$_{\pm 0.65}$ & 43.98$_{\pm 0.41}$ \\
& SPICE + NanoAdam    & 42.16$_{\pm 1.46}$ & \underline{76.61$_{\pm 0.98}$} & \underline{54.87$_{\pm 0.40}$} & \underline{25.93$_{\pm 0.71}$} & \underline{49.89$_{\pm 0.48}$} & 39.52$_{\pm 0.68}$ & \underline{44.71$_{\pm 0.42}$} \\
& \textbf{DualSFT (Ours)} & \textbf{42.94$_{\pm 1.45}$} & 76.23$_{\pm 0.98}$ & \textbf{56.09$_{\pm 0.40}$} & \textbf{26.88$_{\pm 0.70}$} & \textbf{50.54$_{\pm 0.48}$} & \textbf{43.67$_{\pm 0.71}$} & \textbf{47.10$_{\pm 0.43}$} \\
\midrule

\multirow{5}{*}{\rotatebox{90}{\textbf{Gemma-3}}}
& LESS + S$^2$FT      & \underline{51.01$_{\pm 1.46}$} & \underline{78.66$_{\pm 0.93}$} & 55.68$_{\pm 0.39}$ & 31.14$_{\pm 0.78}$ & 54.12$_{\pm 0.48}$ & \underline{53.12$_{\pm 0.85}$} & 53.62$_{\pm 0.49}$ \\
& LESS + NanoAdam     & 50.34$_{\pm 1.47}$ & 78.21$_{\pm 0.96}$ & 56.18$_{\pm 0.40}$ & 32.04$_{\pm 0.81}$ & 54.19$_{\pm 0.49}$ & 52.96$_{\pm 0.79}$ & 53.58$_{\pm 0.47}$ \\
& SPICE + S$^2$FT     & 50.46$_{\pm 1.46}$ & 78.39$_{\pm 0.95}$ & 56.04$_{\pm 0.40}$ & 32.27$_{\pm 0.84}$ & 54.29$_{\pm 0.49}$ & 52.41$_{\pm 0.86}$ & 53.35$_{\pm 0.49}$ \\
& SPICE + NanoAdam    & 50.78$_{\pm 1.45}$ & 78.54$_{\pm 0.94}$ & \underline{57.29$_{\pm 0.40}$} & \underline{33.89$_{\pm 0.82}$} & \underline{55.13$_{\pm 0.49}$} & 52.65$_{\pm 0.82}$ & \underline{53.89$_{\pm 0.48}$} \\
& \textbf{DualSFT (Ours)} & \textbf{51.14$_{\pm 1.44}$} & \textbf{79.01$_{\pm 0.95}$} & \textbf{59.16$_{\pm 0.38}$} & \textbf{35.62$_{\pm 0.64}$} & \textbf{56.23$_{\pm 0.47}$} & \textbf{56.76$_{\pm 0.71}$} & \textbf{56.50$_{\pm 0.43}$} \\
\midrule

\multirow{5}{*}{\rotatebox{90}{\textbf{Qwen-3.5}}}
& LESS + S$^2$FT      & 50.98$_{\pm 1.45}$ & \underline{79.54$_{\pm 0.96}$} & 69.14$_{\pm 0.40}$ & \underline{81.28$_{\pm 0.75}$} & \underline{70.24$_{\pm 0.48}$} & 67.86$_{\pm 0.70}$ & 69.05$_{\pm 0.43}$ \\
& LESS + NanoAdam     & \underline{51.42$_{\pm 1.46}$} & 79.16$_{\pm 0.95}$ & 69.78$_{\pm 0.39}$ & 80.61$_{\pm 0.74}$ & \underline{70.24$_{\pm 0.48}$} & 68.94$_{\pm 0.69}$ & 69.59$_{\pm 0.42}$ \\
& SPICE + S$^2$FT     & 50.83$_{\pm 1.44}$ & 79.27$_{\pm 0.96}$ & 69.61$_{\pm 0.40}$ & 80.42$_{\pm 0.71}$ & 70.03$_{\pm 0.47}$ & 67.42$_{\pm 0.74}$ & 68.73$_{\pm 0.43}$ \\
& SPICE + NanoAdam    & 50.55$_{\pm 1.44}$ & 79.35$_{\pm 0.95}$ & \underline{70.09$_{\pm 0.39}$} & 80.74$_{\pm 0.72}$ & 70.18$_{\pm 0.48}$ & \underline{69.24$_{\pm 0.68}$} & \underline{69.71$_{\pm 0.42}$} \\
& \textbf{DualSFT (Ours)} & \textbf{54.01$_{\pm 1.43}$} & \textbf{79.64$_{\pm 0.94}$} & \textbf{70.33$_{\pm 0.38}$} & \textbf{84.61$_{\pm 0.68}$} & \textbf{72.15$_{\pm 0.47}$} & \textbf{73.75$_{\pm 0.65}$} & \textbf{72.95$_{\pm 0.40}$} \\
\bottomrule
\end{tabular}
}
\vspace{-0.1in}
\end{table*}

Table~\ref{tab:sequential_grid_magicoder} complements the main comparison by covering all pairings between the selected data-side and parameter-side baselines. Across all three backbones, SPICE + NanoAdam remains the strongest sequential hybrid in Overall, while DualSFT consistently achieves higher Stability Avg., Plasticity, and Overall under the same final data and parameter budgets.

\subsection{Additional Results on MetaMathQA}
\label{app:results_metamathqa}

Table~\ref{tab:appendix_results_metamathqa} reports additional MetaMathQA results, which match the Magicoder trends. For Llama-3.2-3B and Gemma-3-4B-PT, DualSFT-Param/Data lead mean Overall within single-axis groups, while full DualSFT leads joint-budget alternatives. DualSFT scores 57.95 on Llama-3.2-3B, surpassing SPICE + NanoAdam by 1.63 points. On Gemma-3-4B-PT, it reaches 61.47, improving over the strongest sequential composition by 2.20 points. These results support the advantage of shared dual scoring over isolated or sequential selection under evaluated budgets. We omit Qwen results because its strong GSM8K prior makes downstream mathematical fine-tuning less informative.

\begin{table*}[t]
\centering
\caption{Fine-tuning results on MetaMathQA (\%). Pre-trained, Standard SFT, and LoRA (gray) are excluded from group ranking. In each group, \colorbox{red!10}{\textbf{best}} and \colorbox{orange!15}{second-best} results are highlighted. DualSFT is the joint setting; DualSFT-Param/Data are single-axis variants.}
\renewcommand{\arraystretch}{0.2}
\label{tab:appendix_results_metamathqa}
\vspace{-0.1in}
\footnotesize
\setlength{\fboxsep}{0.5pt}
\resizebox{\textwidth}{!}{
\begin{tabular}{c | c | ccccc | c | c}
\toprule
\multirow{2}{*}{\textbf{Model}} & \multirow{2}{*}{\textbf{Method}} & \multicolumn{5}{c|}{\textbf{Stability} $\uparrow$} & \textbf{Plasticity} $\uparrow$ & \textbf{Overall} $\uparrow$ \\
\cmidrule(lr){3-7} \cmidrule(lr){8-8} \cmidrule(lr){9-9}
& & \textbf{ARC-C} & \textbf{PIQA} & \textbf{MMLU} & \textbf{HE} & \textbf{Avg} & \textbf{GSM8K} & \textbf{Score} \\
\midrule
\multirow{16}{*}{\cellcolor{white}\rotatebox{90}{\textbf{Llama-3.2-3B\qquad\qquad}}}
& \textcolor{gray}{Pre-trained} & \textcolor{gray}{43.00$_{\pm 1.45}$} & \textcolor{gray}{76.61$_{\pm 0.99}$} & \textcolor{gray}{56.52$_{\pm 0.40}$} & \textcolor{gray}{28.66$_{\pm 0.54}$} & \textcolor{gray}{51.20$_{\pm 0.47}$} & \textcolor{gray}{27.45$_{\pm 0.73}$} & \textcolor{gray}{39.32$_{\pm 0.43}$} \\
& \textcolor{gray}{Standard SFT} & \textcolor{gray}{36.60$_{\pm 1.41}$} & \textcolor{gray}{74.48$_{\pm 1.02}$} & \textcolor{gray}{48.51$_{\pm 0.41}$} & \textcolor{gray}{14.98$_{\pm 0.45}$} & \textcolor{gray}{43.64$_{\pm 0.46}$} & \textcolor{gray}{65.96$_{\pm 0.81}$} & \textcolor{gray}{54.80$_{\pm 0.47}$} \\
& \textcolor{gray}{LoRA}\venuetag{ICLR'22} & \textcolor{gray}{40.61$_{\pm 1.44}$} & \textcolor{gray}{74.21$_{\pm 0.98}$} & \textcolor{gray}{52.11$_{\pm 0.39}$} & \textcolor{gray}{25.05$_{\pm 0.56}$} & \textcolor{gray}{48.00$_{\pm 0.47}$} & \textcolor{gray}{61.08$_{\pm 0.75}$} & \textcolor{gray}{54.54$_{\pm 0.44}$} \\
\cmidrule(lr){2-9}
& S$^2$FT\venuetag{NIPS'24} & 40.78$_{\pm 1.45}$ & \colorbox{red!10}{\textbf{75.96$_{\pm 0.99}$}} & 52.88$_{\pm 0.40}$ & 25.34$_{\pm 0.62}$ & 48.74$_{\pm 0.48}$ & 64.94$_{\pm 0.76}$ & 56.84$_{\pm 0.45}$ \\
& SMT\venuetag{ICLR'25} & 41.22$_{\pm 1.45}$ & 72.33$_{\pm 0.98}$ & 52.74$_{\pm 0.40}$ & 26.13$_{\pm 0.58}$ & 48.11$_{\pm 0.47}$ & 65.06$_{\pm 0.75}$ & 56.58$_{\pm 0.44}$ \\
& LIFT\venuetag{ICML'25} & 40.94$_{\pm 1.44}$ & 73.58$_{\pm 0.99}$ & 52.61$_{\pm 0.40}$ & 25.07$_{\pm 0.50}$ & 48.05$_{\pm 0.47}$ & 64.72$_{\pm 0.74}$ & 56.39$_{\pm 0.44}$ \\
& NanoAdam\venuetag{NIPS'25} & \colorbox{red!10}{\textbf{42.55$_{\pm 1.44}$}} & 74.51$_{\pm 0.98}$ & \colorbox{orange!15}{53.74$_{\pm 0.40}$} & \colorbox{orange!15}{26.61$_{\pm 0.71}$} & \colorbox{orange!15}{49.35$_{\pm 0.48}$} & \colorbox{orange!15}{65.17$_{\pm 0.78}$} & \colorbox{orange!15}{57.26$_{\pm 0.46}$} \\
& \textbf{DualSFT-Param (Ours)} & \colorbox{orange!15}{42.43$_{\pm 1.45}$} & \colorbox{orange!15}{75.82$_{\pm 0.99}$} & \colorbox{red!10}{\textbf{55.96$_{\pm 0.40}$}} & \colorbox{red!10}{\textbf{28.01$_{\pm 0.67}$}} & \colorbox{red!10}{\textbf{50.56$_{\pm 0.48}$}} & \colorbox{red!10}{\textbf{65.84$_{\pm 0.76}$}} & \colorbox{red!10}{\textbf{58.20$_{\pm 0.45}$}} \\
\cmidrule(lr){2-9}
& Random & \colorbox{orange!15}{42.19$_{\pm 1.45}$} & \colorbox{red!10}{\textbf{76.22$_{\pm 0.99}$}} & \colorbox{orange!15}{55.92$_{\pm 0.40}$} & \colorbox{red!10}{\textbf{28.66$_{\pm 0.70}$}} & \colorbox{orange!15}{50.75$_{\pm 0.48}$} & 59.67$_{\pm 0.77}$ & 55.21$_{\pm 0.45}$ \\
& LESS\venuetag{ICML'24} & 41.52$_{\pm 1.45}$ & 74.66$_{\pm 0.99}$ & 53.58$_{\pm 0.40}$ & 26.35$_{\pm 0.79}$ & 49.03$_{\pm 0.49}$ & \colorbox{orange!15}{63.57$_{\pm 0.79}$} & \colorbox{orange!15}{56.30$_{\pm 0.47}$} \\
& FisherSFT\venuetag{ICML'25} & 40.69$_{\pm 1.44}$ & 74.25$_{\pm 0.99}$ & 55.12$_{\pm 0.40}$ & 27.04$_{\pm 0.73}$ & 49.28$_{\pm 0.48}$ & 61.69$_{\pm 0.76}$ & 55.48$_{\pm 0.45}$ \\
& SPICE\venuetag{ICLR'26} & 41.26$_{\pm 1.45}$ & 74.89$_{\pm 0.98}$ & 55.42$_{\pm 0.40}$ & 27.29$_{\pm 0.65}$ & 49.72$_{\pm 0.48}$ & 62.36$_{\pm 0.75}$ & 56.04$_{\pm 0.44}$ \\
& \textbf{DualSFT-Data (Ours)} & \colorbox{red!10}{\textbf{42.88$_{\pm 1.44}$}} & \colorbox{orange!15}{76.13$_{\pm 0.99}$} & \colorbox{red!10}{\textbf{56.03$_{\pm 0.40}$}} & \colorbox{orange!15}{28.57$_{\pm 0.71}$} & \colorbox{red!10}{\textbf{50.90$_{\pm 0.48}$}} & \colorbox{red!10}{\textbf{65.12$_{\pm 0.77}$}} & \colorbox{red!10}{\textbf{58.01$_{\pm 0.45}$}} \\
\cmidrule(lr){2-9}
& LESS + S$^2$FT & 41.75$_{\pm 1.44}$ & 75.36$_{\pm 0.97}$ & 53.67$_{\pm 0.39}$ & 25.69$_{\pm 0.64}$ & 49.12$_{\pm 0.47}$ & 61.53$_{\pm 0.74}$ & 55.32$_{\pm 0.44}$ \\
& SPICE + NanoAdam & \colorbox{orange!15}{42.47$_{\pm 1.45}$} & \colorbox{orange!15}{76.02$_{\pm 0.98}$} & \colorbox{orange!15}{54.83$_{\pm 0.40}$} & \colorbox{orange!15}{26.71$_{\pm 0.66}$} & \colorbox{orange!15}{50.01$_{\pm 0.48}$} & \colorbox{orange!15}{62.64$_{\pm 0.75}$} & \colorbox{orange!15}{56.32$_{\pm 0.44}$} \\
& \textbf{DualSFT (Ours)} & \colorbox{red!10}{\textbf{42.88$_{\pm 1.45}$}} & \colorbox{red!10}{\textbf{76.42$_{\pm 0.98}$}} & \colorbox{red!10}{\textbf{56.34$_{\pm 0.40}$}} & \colorbox{red!10}{\textbf{29.88$_{\pm 0.69}$}} & \colorbox{red!10}{\textbf{51.38$_{\pm 0.48}$}} & \colorbox{red!10}{\textbf{64.51$_{\pm 0.75}$}} & \colorbox{red!10}{\textbf{57.95$_{\pm 0.45}$}} \\
\midrule

\multirow{16}{*}{\rotatebox{90}{\textbf{Gemma-3-4B-PT\qquad\qquad}}}
& \textcolor{gray}{Pre-trained} & \textcolor{gray}{51.45$_{\pm 1.46}$} & \textcolor{gray}{79.16$_{\pm 0.95}$} & \textcolor{gray}{59.61$_{\pm 0.39}$} & \textcolor{gray}{35.37$_{\pm 0.74}$} & \textcolor{gray}{56.40$_{\pm 0.48}$} & \textcolor{gray}{37.00$_{\pm 0.83}$} & \textcolor{gray}{46.70$_{\pm 0.48}$} \\
& \textcolor{gray}{Standard SFT} & \textcolor{gray}{43.36$_{\pm 1.44}$} & \textcolor{gray}{74.48$_{\pm 0.97}$} & \textcolor{gray}{53.00$_{\pm 0.40}$} & \textcolor{gray}{29.55$_{\pm 0.75}$} & \textcolor{gray}{50.10$_{\pm 0.48}$} & \textcolor{gray}{67.79$_{\pm 0.81}$} & \textcolor{gray}{58.94$_{\pm 0.47}$} \\
& \textcolor{gray}{LoRA}\venuetag{ICLR'22} & \textcolor{gray}{48.91$_{\pm 1.44}$} & \textcolor{gray}{77.94$_{\pm 0.95}$} & \textcolor{gray}{55.80$_{\pm 0.40}$} & \textcolor{gray}{30.47$_{\pm 0.74}$} & \textcolor{gray}{53.28$_{\pm 0.48}$} & \textcolor{gray}{62.48$_{\pm 0.79}$} & \textcolor{gray}{57.88$_{\pm 0.46}$} \\
\cmidrule(lr){2-9}
& S$^2$FT\venuetag{NIPS'24} & 49.51$_{\pm 1.46}$ & 77.68$_{\pm 0.96}$ & 55.73$_{\pm 0.40}$ & 31.22$_{\pm 0.82}$ & 53.54$_{\pm 0.49}$ & 64.34$_{\pm 0.80}$ & 58.94$_{\pm 0.47}$ \\
& SMT\venuetag{ICLR'25} & 49.82$_{\pm 1.46}$ & 77.29$_{\pm 0.93}$ & 56.17$_{\pm 0.39}$ & 31.94$_{\pm 0.86}$ & 53.81$_{\pm 0.49}$ & 65.51$_{\pm 0.84}$ & 59.66$_{\pm 0.49}$ \\
& LIFT\venuetag{ICML'25} & 50.14$_{\pm 1.46}$ & 76.59$_{\pm 0.95}$ & 55.21$_{\pm 0.40}$ & 30.36$_{\pm 0.63}$ & 53.08$_{\pm 0.47}$ & 64.73$_{\pm 0.78}$ & 58.90$_{\pm 0.46}$ \\
& NanoAdam\venuetag{NIPS'25} & \colorbox{orange!15}{50.67$_{\pm 1.45}$} & \colorbox{orange!15}{78.16$_{\pm 0.94}$} & \colorbox{orange!15}{58.25$_{\pm 0.39}$} & \colorbox{orange!15}{33.57$_{\pm 0.90}$} & \colorbox{orange!15}{55.16$_{\pm 0.50}$} & \colorbox{orange!15}{65.89$_{\pm 0.81}$} & \colorbox{orange!15}{60.53$_{\pm 0.48}$} \\
& \textbf{DualSFT-Param (Ours)} & \colorbox{red!10}{\textbf{51.06$_{\pm 1.45}$}} & \colorbox{red!10}{\textbf{78.88$_{\pm 0.98}$}} & \colorbox{red!10}{\textbf{59.79$_{\pm 0.39}$}} & \colorbox{red!10}{\textbf{34.98$_{\pm 0.81}$}} & \colorbox{red!10}{\textbf{56.18$_{\pm 0.49}$}} & \colorbox{red!10}{\textbf{67.14$_{\pm 0.80}$}} & \colorbox{red!10}{\textbf{61.66$_{\pm 0.47}$}} \\
\cmidrule(lr){2-9}
& Random & \colorbox{orange!15}{51.26$_{\pm 1.45}$} & \colorbox{red!10}{\textbf{79.03$_{\pm 0.99}$}} & \colorbox{orange!15}{59.37$_{\pm 0.40}$} & \colorbox{orange!15}{35.11$_{\pm 0.70}$} & \colorbox{red!10}{\textbf{56.19$_{\pm 0.48}$}} & 61.59$_{\pm 0.77}$ & 58.89$_{\pm 0.45}$ \\
& LESS\venuetag{ICML'24} & 48.65$_{\pm 1.45}$ & 77.53$_{\pm 0.99}$ & 56.07$_{\pm 0.40}$ & 30.94$_{\pm 0.79}$ & 53.30$_{\pm 0.49}$ & \colorbox{orange!15}{65.28$_{\pm 0.79}$} & 59.29$_{\pm 0.47}$ \\
& FisherSFT\venuetag{ICML'25} & 49.81$_{\pm 1.45}$ & 77.61$_{\pm 0.94}$ & 56.49$_{\pm 0.38}$ & 32.49$_{\pm 0.75}$ & 54.10$_{\pm 0.48}$ & 63.81$_{\pm 0.77}$ & 58.96$_{\pm 0.45}$ \\
& SPICE\venuetag{ICLR'26} & 50.33$_{\pm 1.44}$ & 78.42$_{\pm 0.95}$ & 57.52$_{\pm 0.40}$ & 33.94$_{\pm 0.77}$ & 55.05$_{\pm 0.48}$ & 64.73$_{\pm 0.79}$ & \colorbox{orange!15}{59.89$_{\pm 0.46}$} \\
& \textbf{DualSFT-Data (Ours)} & \colorbox{red!10}{\textbf{51.32$_{\pm 1.45}$}} & \colorbox{orange!15}{78.61$_{\pm 0.98}$} & \colorbox{red!10}{\textbf{59.44$_{\pm 0.40}$}} & \colorbox{red!10}{\textbf{35.12$_{\pm 0.68}$}} & \colorbox{orange!15}{56.12$_{\pm 0.48}$} & \colorbox{red!10}{\textbf{66.94$_{\pm 0.80}$}} & \colorbox{red!10}{\textbf{61.53$_{\pm 0.47}$}} \\
\cmidrule(lr){2-9}
& LESS + S$^2$FT & \colorbox{orange!15}{50.91$_{\pm 1.46}$} & 78.54$_{\pm 0.93}$ & 56.48$_{\pm 0.39}$ & 31.54$_{\pm 0.85}$ & 54.37$_{\pm 0.49}$ & \colorbox{orange!15}{63.85$_{\pm 0.78}$} & 59.11$_{\pm 0.46}$ \\
& SPICE + NanoAdam & 49.73$_{\pm 1.45}$ & \colorbox{orange!15}{78.66$_{\pm 0.94}$} & \colorbox{orange!15}{58.21$_{\pm 0.40}$} & \colorbox{orange!15}{33.69$_{\pm 0.82}$} & \colorbox{orange!15}{55.07$_{\pm 0.49}$} & 63.47$_{\pm 0.80}$ & \colorbox{orange!15}{59.27$_{\pm 0.47}$} \\
& \textbf{DualSFT (Ours)} & \colorbox{red!10}{\textbf{51.38$_{\pm 1.44}$}} & \colorbox{red!10}{\textbf{79.03$_{\pm 0.95}$}} & \colorbox{red!10}{\textbf{59.61$_{\pm 0.38}$}} & \colorbox{red!10}{\textbf{35.45$_{\pm 0.71}$}} & \colorbox{red!10}{\textbf{56.37$_{\pm 0.48}$}} & \colorbox{red!10}{\textbf{66.58$_{\pm 0.79}$}} & \colorbox{red!10}{\textbf{61.47$_{\pm 0.46}$}} \\
\bottomrule
\end{tabular}
}
\vspace{-0.1in}
\end{table*}

\subsection{Detailed Runtime and Memory Profiling}
\label{app:magicoder_efficiency}

Table~\ref{tab:appendix_magicoder_efficiency} profiles runtime and memory for Fig.~\ref{fig:efficiency_main}. Under split-view protocol, all methods report GPU-hours. Peak FT Mem reports maximum single-GPU peak allocated memory during fine-tuning with per-device micro-batch size $2$. Parameter-sparse methods report Peak FT Mem; data-only and joint methods report selection time and Peak Sel. Mem. Sel. Comp. summarizes asymptotic selection-stage computation, with notation defined below Table~\ref{tab:appendix_magicoder_efficiency}. Sel. Time reports realized costs.

~\ref{fig:efficiency_main} supports this frontier view. At 5\% sparsity, DualSFT-Param achieves the highest Overall and lowest Peak FT Mem ($19.3$\,GB) among sparse parameter-selection baselines, outperforming NanoAdam ($46.08$, $22.6$\,GB) and SMT ($45.50$, $25.8$\,GB). In data selection, DualSFT-Data scores highest ($47.14$) against LESS ($45.74$), SPICE ($45.45$), and FisherSFT ($44.62$) with comparable memory. Jointly, DualSFT reaches $47.10$ Overall in $7.4$ GPU-hours, surpassing LESS + S$^2$FT ($43.88$, $8.1$ GPU-hours) and SPICE + NanoAdam ($44.71$, $5.3$ GPU-hours). The complexity view is consistent with gains from a unified pipeline rather than sequential disconnected selectors.

\begin{table*}[t]
\centering
\caption{Runtime and memory profiling on Magicoder with Llama-3.2-3B. Sel. Comp. denotes selection-stage computational complexity. Peak FT Mem reports maximum single-GPU peak allocated memory with per-device micro-batch size $2$.}
\label{tab:appendix_magicoder_efficiency}
\vspace{-0.1in}
\footnotesize
\setlength{\tabcolsep}{2.8pt}
\resizebox{\textwidth}{!}{
\begin{tabular}{l|c|c|c|c|c}
\toprule
\textbf{Method} & \textbf{Sel. Comp.} $\downarrow$ & \makecell{\textbf{Peak FT Mem} \\ \textbf{(GB)} $\downarrow$} & \makecell{\textbf{Peak Sel. Mem} \\ \textbf{(GB)} $\downarrow$} & \makecell{\textbf{Sel. Time} \\ \textbf{(GPU-h)} $\downarrow$} & \makecell{\textbf{Total} \\ \textbf{(GPU-h)} $\downarrow$} \\
\midrule
Standard SFT & -- & 74.9 & - & -- & 23.6 \\
LoRA & -- & 11.5 & - & -- & 15.2 \\
S$^2$FT & $O(M)$ & 28.4 & - & -- & 12.5 \\
SMT & $O(T_w C_{\mathrm{fb}} + B\log B)$ & 25.8 & - & 0.4 & 10.4 \\
LIFT & in-train & 27.1 & - & in-train & 13.6 \\
NanoAdam & in-train & 22.6 & - & in-train & 11.5 \\
DualSFT-Param & $O(T_w C_{\mathrm{fb}} + D)$ & 19.3 & - & 4.5 & 13.4 \\
\cmidrule(lr){1-6}
Random & $O(b)$ & - & - & 0.0 & 2.4 \\
LESS & $O(|\mathcal{D}_{\text{warm}}|N_c + N_{\mathrm{sc}}N_c + N_{\mathrm{sc}}|\mathcal{D}_{\text{val}}|d_p)$ & - & 26.8 & 6.6 & 9.0 \\
FisherSFT & greedy log-det & - & 24.5 & 8.1 & 10.5 \\
SPICE & $O(bN_{\mathrm{sc}}d_p)$ & - & 29.8 & 4.1 & 6.5 \\
DualSFT-Data & $O(T_w C_{\mathrm{fb}} + N_{\mathrm{sc}})$ & - & 28.5 & 6.5 & 8.9 \\
\cmidrule(lr){1-6}
LESS + S$^2$FT & combined & 28.4 & 26.8 & 6.6 & 8.1 \\
SPICE + NanoAdam & combined & 22.6 & 29.8 & 4.1 & 5.3 \\
DualSFT & $O(T_w C_{\mathrm{fb}} + N_{\mathrm{sc}} + D)$ & 19.3 & 28.5 & 6.5 & 7.4 \\
\bottomrule
\end{tabular}
}
\vspace{0.1in}
\parbox{\textwidth}{\footnotesize
\emph{Notation.} $T_w$ is the number of warmup steps, $C_{\mathrm{fb}}$ the cost of one forward--backward pass, $D$ the parameter dimension, $N_{\mathrm{sc}}=|\mathcal{D}_{\text{pool}}|$ the scoring-pool size, $b$ the data budget, $B$ sparse parameter blocks, $M$ candidate modules or blocks, $N_c$ the number of stored checkpoints for trajectory-based scoring, and $d_p$ the projected feature dimension. Peak FT Mem is measured as maximum single-GPU peak allocated memory during fine-tuning with per-device micro-batch size $2$.
}
\vspace{-0.1in}
\end{table*}

\subsection{Preservation Direction and Anchor-size Ablations}
\label{app:cwsd_ablation}

The main ablation in Fig.~\ref{fig:ablation_components} shows that removing CWSD substantially reduces stability. We further compare CWSD with simpler preservation-direction alternatives under the same DualSFT joint setting. All variants use the same data budget, parameter budget, warmup checkpoint, and final fine-tuning hyperparameters; only the construction of $\mathbf{v}_{\mathrm{prior}}$ is changed.

\begin{table}[t]
\centering
\caption{Preservation-direction ablation on Llama-3.2-3B with Magicoder in joint DualSFT.}
\label{tab:cwsd_ablation}
\footnotesize
\setlength{\tabcolsep}{4pt}
\begin{tabular}{lccc}
\toprule
\textbf{Preservation direction} & \textbf{Stability} & \textbf{Plasticity} & \textbf{Overall} \\
\midrule
No preservation & 47.28 & 42.16 & 44.72 \\
L2-to-$\boldsymbol{\theta}_{\mathrm{old}}$ & 48.84 & 42.91 & 45.88 \\
Diagonal Fisher anchoring & 49.93 & 43.42 & 46.68 \\
CWSD & \textbf{50.54} & \textbf{43.67} & \textbf{47.10} \\
\bottomrule
\end{tabular}
\vspace{-0.2in}
\end{table}

Table~\ref{tab:cwsd_ablation} indicates that CWSD is more effective than parameter-space anchoring alternatives. L2-to-$\boldsymbol{\theta}_{\mathrm{old}}$ improves stability over removing preservation entirely but reduces plasticity, while diagonal Fisher anchoring gives a stronger stability signal but remains below CWSD. CWSD provides the best balance because it constructs a behavior-level preservation direction from high-confidence old-model predictions at the scoring checkpoint.

\subsubsection{Anchor-size sensitivity}
\label{app:anchor_size_sensitivity}

CWSD uses $\mathcal{D}_{\mathrm{anchor}}$ to estimate a preservation direction at the warmup checkpoint, rather than to provide additional fine-tuning examples. We therefore test whether increasing the anchor set changes the estimated direction and the induced selections. Using Llama-3.2-3B on Magicoder, we vary $|\mathcal{D}_{\mathrm{anchor}}|\in\{256,512,1024,2048,4096\}$ and compare each setting with the 4096-anchor reference. We report the cosine similarity of $\mathbf{v}_{\mathrm{prior}}$, Top-$b$ data overlap, Top-$k$ parameter overlap, and final performance after restricted fine-tuning. Anchor samples are never used in final restricted fine-tuning.

\begin{table}[t]
\centering
\caption{Anchor-size sensitivity for CWSD on Llama-3.2-3B with Magicoder.}
\label{tab:anchor_size_sensitivity}
\footnotesize
\setlength{\tabcolsep}{4pt}
\begin{tabular}{lccccc}
\toprule
$|\mathcal{D}_{\mathrm{anchor}}|$ & 256 & 512 & 1024 & 2048 & 4096 \\
\midrule
Cosine w.r.t. 4096-anchor $\mathbf{v}_{\mathrm{prior}}$ & 0.82 & 0.89 & 0.94 & 0.98 & 1.00 \\
Top-$b$ data overlap & 0.71 & 0.78 & 0.85 & 0.92 & 1.00 \\
Top-$k$ parameter overlap & 0.74 & 0.81 & 0.87 & 0.94 & 1.00 \\
Stability & 49.42 & 49.86 & 50.21 & 50.54 & \textbf{50.57} \\
Plasticity & 43.21 & 43.42 & 43.55 & 43.67 & \textbf{43.69} \\
Overall & 46.31 & 46.64 & 46.88 & 47.10 & \textbf{47.13} \\
\bottomrule
\end{tabular}
\vspace{-0.1in}
\end{table}

Table~\ref{tab:anchor_size_sensitivity} shows that both the CWSD direction and the induced selections stabilize as the anchor size increases. Moving from 2048 to 4096 anchors gives only marginal changes in direction similarity, selected data, selected parameters, and final performance. This supports using a moderate fixed anchor size as a scoring-only resource, rather than scaling the anchor set with the full corpus size.

\subsection{Protocol for Same-Surrogate Local-Regret Diagnostic}
\label{app:local_regret}

We compute the local-regret diagnostic at the init and warmup checkpoints without additional training. For the data side, we fix the DualSFT parameter mask $\mathbf{m}_\theta$ and define the aware and unaware scores as
\(s^{\mathrm{aware}}_{\mathcal D,n}=\langle\mathbf{u}\odot \mathbf{m}_\theta,\mathbf{g}_n\rangle\) and
\(s^{\mathrm{unaware}}_{\mathcal D,n}=\langle\mathbf{u},\mathbf{g}_n\rangle\).
Let \(S^{\mathrm{aware}}_{\mathcal D}\) and \(S^{\mathrm{unaware}}_{\mathcal D}\) be their Top-$b$ subsets. We evaluate both under the aware utility
\(U^{\mathrm{aware}}_{\mathcal D}(S)=\sum_{n\in S}s^{\mathrm{aware}}_{\mathcal D,n}\).
The data-side utility ratio and regret are
\[
\mathrm{Ratio}_{\mathcal D}=
\frac{U^{\mathrm{aware}}_{\mathcal D}(S^{\mathrm{unaware}}_{\mathcal D})}
{U^{\mathrm{aware}}_{\mathcal D}(S^{\mathrm{aware}}_{\mathcal D})+\epsilon},
\qquad
\mathrm{Regret}_{\mathcal D}=100(1-\mathrm{Ratio}_{\mathcal D}).
\]
For the parameter side, we fix the DualSFT data subset \(S_\mathcal D\) and define
\(s^{\mathrm{aware}}_{\theta,d}=u_dG_{S_\mathcal D,d}\) and
\(s^{\mathrm{unaware}}_{\theta,d}=u_dG_d\).
Let \(m^{\mathrm{aware}}_\theta\) and \(m^{\mathrm{unaware}}_\theta\) be their Top-$k$ masks. We use
\(U^{\mathrm{aware}}_\theta(m)=\sum_{d:m_d=1}s^{\mathrm{aware}}_{\theta,d}\)
and compute the same utility ratio and regret. Top-$q$ Jaccard reports the overlap between aware and unaware selections at \(q=b\) for data and \(q=k\) for parameters.

\subsection{Statistic Definitions and Reverse-selection Ablation}
\label{app:selection_statistics}

We define statistics used in Table~\ref{tab:selected_data_main}, Fig.~\ref{fig:layerwise_param_heatmap}, and Table~\ref{tab:selected_param_stats_app}. For data, Random denotes a same-size random subset, Top-loss selects the same number of examples with the largest training losses at $\bar{\boldsymbol{\theta}}$, and Pool denotes the full candidate pool. For a selected subset $\mathcal{D}_{\mathrm{sel}}$, validation-gradient alignment is
\begin{equation}
\mathrm{Align}(\mathcal{D}_{\mathrm{sel}})
=
\frac{1}{|\mathcal{D}_{\mathrm{sel}}|}
{\textstyle \sum_{n\in\mathcal{D}_{\mathrm{sel}}}}
\frac{\langle \mathbf{g}_n,\mathbf{v}_{\mathrm{val}}\rangle}
{\|\mathbf{g}_n\|_2\,\|\mathbf{v}_{\mathrm{val}}\|_2},
\end{equation}
where $\mathbf{g}_n$ and $\mathbf{v}_{\mathrm{val}}$ are computed at checkpoint $\bar{\boldsymbol{\theta}}$. The average training loss is the mean per-example loss over the corresponding subset. For diversity, let $\phi(x)$ denote the mean-pooled sentence embedding of sample $x$ extracted from the pre-trained backbone. Following diversity-aware subset diagnostics in DPP, core-set, and submodular selection methods~\citep{kulesza2012determinantal,sener2018active,wei2015submodularity}, we compute
\begin{equation}
\mathrm{Div}(\mathcal{D}_{\mathrm{sel}})
=
\frac{1}{\binom{|\mathcal{D}_{\mathrm{sel}}|}{2}}
{\textstyle \sum_{i<j}}
\left(1-\cos\bigl(\phi(x_i),\phi(x_j)\bigr)\right).
\end{equation}
This score is used only as a redundancy diagnostic; DualSFT does not optimize it directly.

For parameter statistics, all shares are computed within the trainable candidate pool. Let $S_{\theta}$ denote the selected parameter mask and $\mathcal{P}$ the trainable parameter pool. For a layer or module group $\mathcal{G}$, we define enrichment as a foreground-to-background share ratio:
\begin{equation}
\mathrm{Enrichment}(\mathcal{G})
=
\frac{|\mathcal{G}\cap S_{\theta}|/|S_{\theta}|}
{|\mathcal{G}\cap \mathcal{P}|/|\mathcal{P}|}.
\end{equation}
Enrichment above $1$ indicates over-selection relative to the pool. Top-5 layers have the largest selected-parameter shares. The early-layer group (L0--L9) is predefined only for reporting and remains in the trainable candidate pool. The down\_proj statistic compares its share among selected FFN parameters with its FFN-pool share.

\begin{figure*}[t]
\centering
\includegraphics[width=\textwidth]{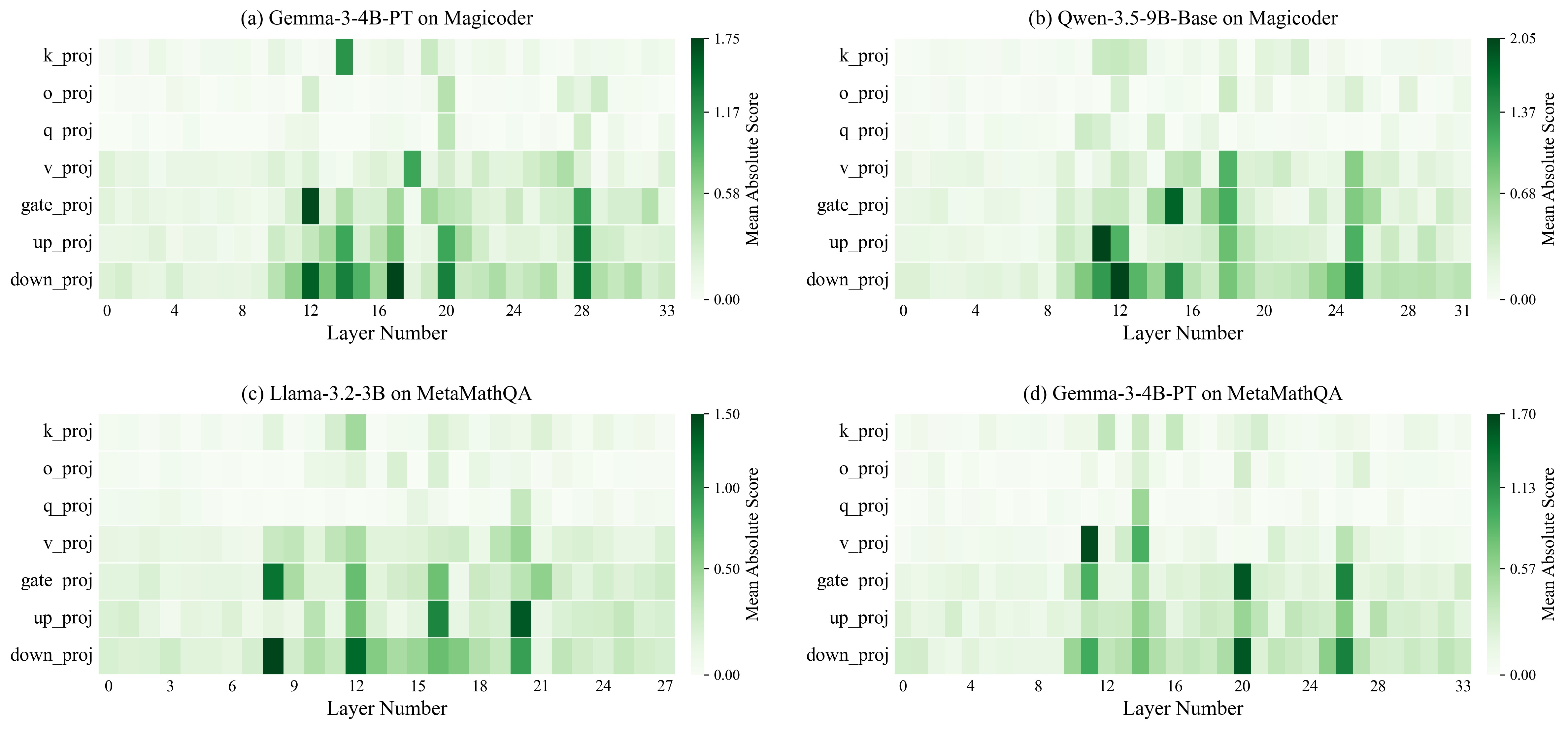}
\vspace{-0.2in}
\caption{Additional layer-wise parameter selection patterns across models and datasets. Cells show mean absolute module-layer block scores. Color scales are set separately.}
\label{fig:appendix_layerwise_param_heatmaps}
\vspace{-0.2in}
\end{figure*}

For the layer-wise heatmap in Fig.~\ref{fig:layerwise_param_heatmap}, let $\mathcal{B}_{\ell,m}$ denote the trainable parameter coordinates in layer $\ell$ and module $m$. We compute the block-level mean absolute score as
\begin{equation}
H_{\ell,m}
=
\frac{1}{|\mathcal{B}_{\ell,m}|}
{\textstyle \sum_{d\in \mathcal{B}_{\ell,m}}}
|s_{\theta,d}|,
\end{equation}
where $s_{\theta,d}$ is the DualSFT parameter score in Eq.~\eqref{eq:practical_scores}. For visualization, $H_{\ell,m}$ is normalized by the maximum across plotted module-layer blocks. This heatmap visualizes the spatial distribution of parameter scores, while aggregate shares in Table~\ref{tab:selected_param_stats_app} provide exact numerical summaries.

\textbf{Additional layer-wise patterns.}
Fig.~\ref{fig:appendix_layerwise_param_heatmaps} extends visualizations to additional model-data settings. The trend persists across models and datasets: early layers are weaker, middle-to-late layers contain high-score blocks, and FFN-side modules, particularly \texttt{down\_proj}, are emphasized. Peak layers and magnitudes vary by setting, necessitating separate color scales.

\textbf{Reverse-selection ablation.}
To test whether low-scored candidates are unfavorable, we select bottom-scoring samples and parameters under identical budgets. We include matched-budget random controls replacing one side while keeping the other DualSFT-selected. Reverse Data + Dual Param uses bottom-scored data with the DualSFT mask; Dual Data + Reverse Param uses DualSFT data with bottom-scored parameters; Reverse Both reverses both sides.

~\ref{fig:reverse_selection} shows that replacing either random side with its DualSFT-selected counterpart improves Overall, with larger plasticity gains from data-side selection. Conversely, every reverse variant underperforms its matched random control, and reversing both sides yields the lowest Overall. These results support learned score directionality: low-scored samples and coordinates are not merely unselected alternatives, but worsen adaptation-retention trade-offs under equal budgets.

\begin{figure}[t]
\centering
\includegraphics[width=\linewidth]{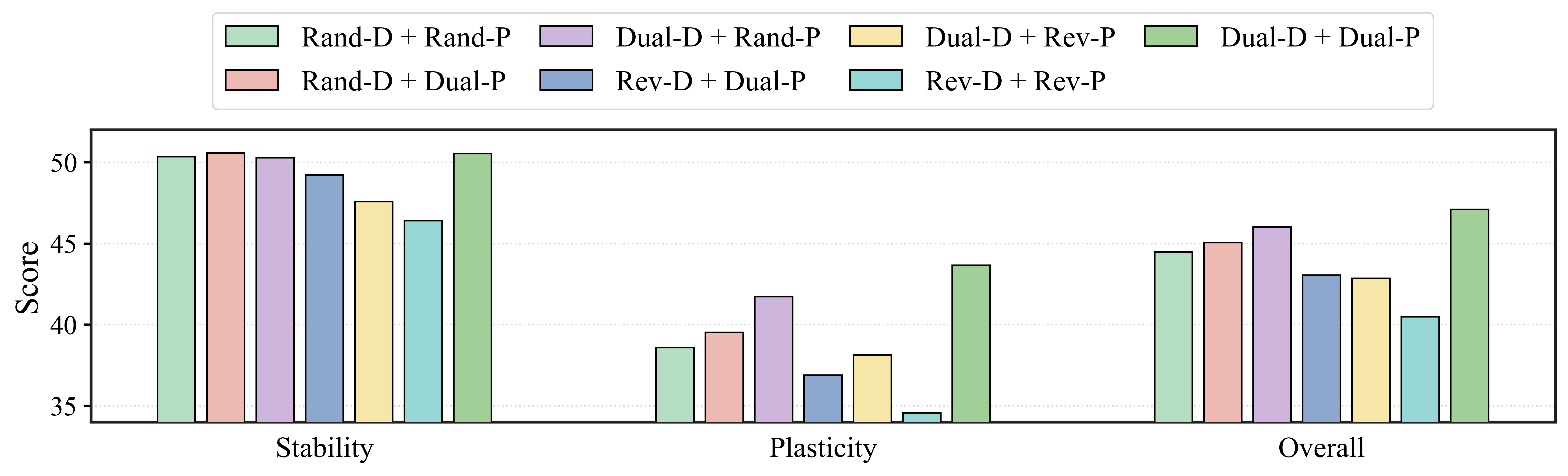}
\vspace{-0.2in}
\caption{Reverse-selection ablation on Llama-3.2-3B (Magicoder). D/P denote data/parameter choices; all settings use matched budgets.}
\label{fig:reverse_selection}
\vspace{-0.2in}
\end{figure}

\subsection{Fixed-side Re-selection Details}
\label{app:conditional_reselection}

We provide definitions and diagnostics for the fixed-side re-selection variants in Table~\ref{tab:conditional_reselection_main}. Let $(S_{\mathcal D}^{(0)},S_{\theta}^{(0)})$ denote the one-shot DualSFT selection at checkpoint $\bar{\boldsymbol{\theta}}$. Data~$\rightarrow$~Param fixes $S_{\mathcal D}^{(0)}$ and re-selects a parameter mask $S_{\theta}^{(1)}$ using scores recomputed on the selected subset. Param~$\rightarrow$~Data fixes $S_{\theta}^{(0)}$ and re-selects a data subset $S_{\mathcal D}^{(1)}$ under the fixed mask. Checkpoint, budgets, and final fine-tuning hyperparameters remain unchanged. The larger cost of Param~$\rightarrow$~Data comes from its per-candidate scoring pass, whereas Data~$\rightarrow$~Param only recomputes an aggregate gradient on the selected subset.

\begin{figure*}[t]
\centering
\includegraphics[width=\textwidth]{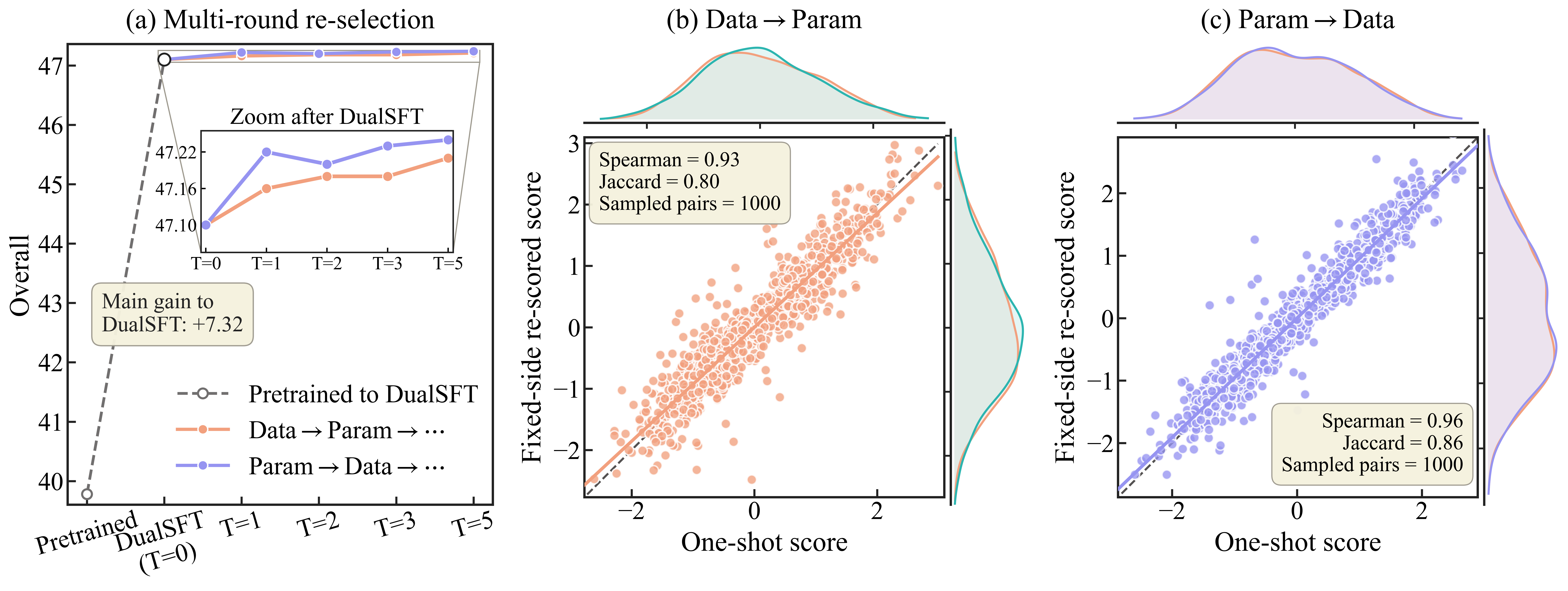}
\vspace{-0.2in}
\caption{Fixed-side re-selection diagnostics. (a) Multi-round re-selection saturates after the first pass. (b,c) One-shot and fixed-side score consistency on the re-selected side. Scatter plots show 1,000 score pairs; Spearman and Jaccard are computed on the full candidate space.}
\label{fig:fixed_side_reselection}
\vspace{-0.1in}
\end{figure*}

\textbf{Multi-round re-selection.}
We run multi-round re-selection to test whether alternating between data and parameter selection yields cumulative gains beyond the first fixed-side pass. Specifically, each round re-selects one side conditioned on the latest selection of the other side, with $T\in\{0,1,2,3,5\}$, while keeping the same checkpoint and budgets. ~\ref{fig:fixed_side_reselection}(a) shows that most gains appear in the first pass. Later rounds change Overall by at most 0.05 points relative to $T{=}1$ and are not consistently monotonic, suggesting rapid saturation rather than cumulative improvement. This indicates that additional passes mainly refine boundary cases rather than uncover new global selection structure.

\textbf{Selection stability.}
We measure stability between one-shot and fixed-side re-selection on the side actually re-selected. For Data~$\rightarrow$~Param, we compare $S_{\theta}^{(0)}$ with $S_{\theta}^{(1)}$; for Param~$\rightarrow$~Data, we compare $S_{\mathcal D}^{(0)}$ with $S_{\mathcal D}^{(1)}$. The fixed side is unchanged by construction. ~\ref{fig:fixed_side_reselection}(b,c) reports Jaccard overlap between selected sets and Spearman correlation between score rankings over the corresponding candidate space. The scatter plots visualize randomly sampled score pairs, while reported metrics use the full candidate space. High overlap and rank correlation show that fixed-side re-scoring largely preserves the global ordering, with changes concentrated near the selection boundary.

\begin{table*}[t]

\centering
% ==== Left table: cost breakdown =====
\begin{minipage}[c]{0.54\textwidth}
\centering
\footnotesize
\setlength{\tabcolsep}{2.2pt}
\captionof{table}{Cost breakdown of fixed-side re-selection on Llama-3.2-3B (Magicoder).}
\label{tab:cost_breakdown}
\vspace{-0.08in}
\resizebox{\linewidth}{!}{
\begin{tabular}{lccc}
\toprule
\textbf{Stage} & \textbf{DualSFT} & \textbf{Data $\rightarrow$ Param} & \textbf{Param $\rightarrow$ Data} \\
\midrule
Shared one-shot scoring & 6.5 h & 6.5 h & 6.5 h \\
Fixed-side re-scoring  & --    & 0.4 h & 2.9 h \\
Restricted fine-tuning & 0.9 h & 0.9 h & 1.0 h \\
\midrule
Selection total        & 6.5 h & 6.9 h (1.06$\times$) & 9.4 h (1.45$\times$) \\
End-to-end total       & 7.4 h & 7.8 h (1.05$\times$) & 10.4 h (1.41$\times$) \\
\bottomrule
\end{tabular}
}
\end{minipage}
\hfill
% ================== Right table: parameter statistics ==================
\begin{minipage}[c]{0.44\textwidth}
\centering
\footnotesize
\setlength{\tabcolsep}{3.0pt}
\captionof{table}{Selected parameter statistics on Llama-3.2-3B (Magicoder).}
\label{tab:selected_param_stats_app}
\vspace{-0.08in}
\resizebox{\linewidth}{!}{
\begin{tabular}{lccc}
\toprule
\textbf{Param statistic} & \textbf{Sel.} & \textbf{Pool} & \textbf{Enrich.} \\
\midrule
Top-5 selected layers   & 42.4\% & 17.9\% & 2.38$\times$ \\
Early layers (L0--L9)   & 8.2\%  & 35.7\% & 0.23$\times$ \\
Attention share         & 22.6\% & 25.0\% & 0.91$\times$ \\
FFN share               & 77.4\% & 75.0\% & 1.03$\times$ \\
down\_proj within FFN   & 47.9\% & 33.3\% & 1.44$\times$ \\
\bottomrule
\end{tabular}
}
\end{minipage}

\vspace{-0.1in}
\end{table*}

\textbf{Cost breakdown.}
Table~\ref{tab:cost_breakdown} reports GPU-hour costs. Selection total includes warmup, first-round scoring, and fixed-side re-scoring; end-to-end total includes restricted fine-tuning under selected data and parameter budgets. Results show that small fixed-side re-selection gains incur non-negligible cost, especially for Param~$\rightarrow$~Data, where data-side re-scoring dominates added overhead.

\subsection{Score Fidelity Protocol and Extended Horizons}
\label{app:score_fidelity}

\subsubsection{Local Surrogate Drift and Horizon Decay}
\label{app:local_surrogate}
DualSFT computes one-shot scores at the warmup checkpoint $\bar{\boldsymbol{\theta}}$, whereas realized validation gains are measured after restricted optimization. This mismatch creates a natural horizon dependence: as the optimization trajectory moves away from $\bar{\boldsymbol{\theta}}$, the gradients defining the local surrogate may drift.

To make this dependence explicit, consider the first-order data score at a generic checkpoint $\boldsymbol{\theta}$:
\begin{equation}
s_{\mathcal D,n}(\boldsymbol{\theta})
=
\eta
\left\langle
\nabla \mathcal{L}_{\mathrm{val}}(\boldsymbol{\theta}),
\nabla \ell_n(\boldsymbol{\theta})
\right\rangle .
\end{equation}
Assume that, along the restricted optimization trajectory, $\|\nabla \mathcal{L}_{\mathrm{val}}(\boldsymbol{\theta})\|\le B_v$, $\|\nabla \ell_n(\boldsymbol{\theta})\|\le B_g$, and the two gradients are Lipschitz with constants $L_v$ and $L_g$. Then
\begin{equation}
\begin{aligned}
\left|s_{\mathcal D,n}(\boldsymbol{\theta}_t)-s_{\mathcal D,n}(\bar{\boldsymbol{\theta}})\right|
&\le
\eta
\left(
B_v L_g + B_g L_v
\right)
\left\|\boldsymbol{\theta}_t-\bar{\boldsymbol{\theta}}\right\|.
\end{aligned}
\label{eq:score_drift_bound}
\end{equation}
One-shot ranking remains stable when Eq.~\eqref{eq:score_drift_bound} is small relative to score margins, but fidelity decays as the trajectory moves farther from $\bar{\boldsymbol{\theta}}$. The same argument applies to parameter scores by replacing per-sample gradients with coordinate-wise training-gradient terms, motivating extended-horizon diagnostics and leaving dynamic score updates for longer adaptation horizons.

\subsubsection{Protocol}
\label{app:protocol}
We provide extended diagnostics for Table~\ref{tab:score_consistency_main}. The main table averages over $K\in\{50,200,500\}$ restricted AdamW steps, while this section reports $K\in\{10,50,200,500,1000\}$ and the full Init/Warmup $\times$ First/Diag grid. For data selection, we sample 64 diagnostic subsets with 256 examples, scoring them by $\hat{U}_{\mathcal D}(S)=\sum_{n\in S}s_{\mathcal D,n}$. For parameter selection, we sample 64 budget-matched masks using block-stratified layer-module sampling, scoring them by $\hat{U}_{\theta}(S)=\sum_{d\in S}s_{\theta,d}$. From the corresponding checkpoint, we run $K$ restricted AdamW steps and compare predicted rankings with realized validation gains $\Delta_K(S)=\mathcal{L}_{\mathrm{val}}(\bar{\boldsymbol{\theta}})-\mathcal{L}_{\mathrm{val}}(\theta_K(S))$ using Spearman correlation and pairwise agreement. $K=1000$ serves as a stress test for the local surrogate.

\textbf{Per-$K$ results.}
Table~\ref{tab:score_fidelity_long_horizon} shows the expected horizon decay: fidelity decreases as optimization moves away from $\bar{\boldsymbol{\theta}}$. Warmup improves over initialization, diagonal second-order scoring improves over first-order scoring, and Warmup + Diag-Second-Order remains the most predictive variant.

\begin{table*}[t]
\centering
\caption{Extended-horizon score fidelity on Llama-3.2-3B with Magicoder.}
\label{tab:score_fidelity_long_horizon}
\renewcommand{\arraystretch}{0.9}
\vspace{-0.1in}
\footnotesize
\setlength{\tabcolsep}{4pt}
\begin{tabular}{l|c|cccc}
\toprule
\textbf{Variant} & \textbf{$K$} & \textbf{Data $\rho$} & \textbf{Param $\rho$} & \textbf{Data Pair} & \textbf{Param Pair} \\
\midrule
\multirow{5}{*}{Init + First-Order}
& 10   & 0.40 & 0.33 & 65.3 & 62.0 \\
& 50   & 0.34 & 0.27 & 61.8 & 58.9 \\
& 200  & 0.28 & 0.21 & 58.4 & 56.1 \\
& 500  & 0.23 & 0.16 & 55.7 & 53.8 \\
& 1000 & 0.17 & 0.12 & 53.6 & 52.0 \\
\midrule
\multirow{5}{*}{Init + Diag-Second-Order}
& 10   & 0.48 & 0.41 & 70.4 & 66.2 \\
& 50   & 0.43 & 0.36 & 67.4 & 63.6 \\
& 200  & 0.37 & 0.30 & 64.0 & 60.6 \\
& 500  & 0.31 & 0.25 & 60.8 & 58.0 \\
& 1000 & 0.25 & 0.20 & 57.6 & 55.3 \\
\midrule
\multirow{5}{*}{Warmup + First-Order}
& 10   & 0.68 & 0.61 & 80.1 & 76.0 \\
& 50   & 0.62 & 0.55 & 76.8 & 72.4 \\
& 200  & 0.56 & 0.49 & 73.0 & 69.0 \\
& 500  & 0.49 & 0.42 & 69.5 & 65.8 \\
& 1000 & 0.40 & 0.35 & 65.5 & 62.2 \\
\midrule
\multirow{5}{*}{Warmup + Diag-Second-Order}
& 10   & \textbf{0.90} & \textbf{0.87} & \textbf{85.1} & \textbf{81.8} \\
& 50   & \textbf{0.87} & \textbf{0.84} & \textbf{82.8} & \textbf{79.5} \\
& 200  & \textbf{0.83} & \textbf{0.80} & \textbf{79.8} & \textbf{75.2} \\
& 500  & \textbf{0.76} & \textbf{0.72} & \textbf{76.8} & \textbf{71.3} \\
& 1000 & \textbf{0.66} & \textbf{0.61} & \textbf{70.2} & \textbf{65.9} \\
\bottomrule
\end{tabular}
\vspace{-0.1in}
\end{table*}

\textbf{Additional domain.}
Table~\ref{tab:score_fidelity_domain} repeats the short-horizon analysis on MetaMathQA. The same ordering holds, with Warmup + Diag-Second-Order giving the strongest agreement with realized gains.

\begin{table}[t]
\centering
\caption{Short-horizon score fidelity on Llama-3.2-3B with MetaMathQA, averaged over $K\in\{10,50,200\}$.}
\label{tab:score_fidelity_domain}
\footnotesize
\setlength{\tabcolsep}{4pt}
\begin{tabular}{lcccc}
\toprule
\textbf{Variant} & \textbf{Data $\rho$} & \textbf{Param $\rho$} & \textbf{Data Pair} & \textbf{Param Pair} \\
\midrule
Init + First-Order & 0.33 & 0.26 & 61.0 & 58.2 \\
Init + Diag-Second-Order & 0.43 & 0.35 & 66.5 & 62.7 \\
Warmup + First-Order & 0.62 & 0.54 & 75.0 & 70.2 \\
\textbf{Warmup + Diag-Second-Order} & \textbf{0.80} & \textbf{0.76} & \textbf{80.6} & \textbf{76.0} \\
\bottomrule
\end{tabular}
\vspace{-0.1in}
\end{table}

Overall, these diagnostics support two points: warmup improves localized scoring, and diagonal second-order refinement improves ranking fidelity. The extended-horizon results also confirm the expected horizon decay of local scores. A full dynamic re-scoring policy would require additional design choices, including when to re-score and whether to continue or restart restricted fine-tuning. We therefore leave dynamic scheduling to future work and focus on the one-shot setting, which is cost-effective under the studied budgets.

\subsection{Diagonal-vs-Full Hessian Rank Protocol and Extended Results}
\label{app:diag_full_rank}

We provide extended diagnostics for Table~\ref{tab:diag_full_main}. Since exact full-model Hessians are prohibitive, we compare the optimizer-aware diagonal proxy with exact slice-level full-Hessian rankings on representative module-layer slices; DualSFT itself requires no full-Hessian computation.

Using Llama-3.2-3B with Magicoder, we evaluate layers $\{3,8,10,14,16,21,24,27\}$, covering suppressed, high-selection, and tail regions. For each layer, we test \texttt{o\_proj} and \texttt{down\_proj}. Each block uses a fixed-size coordinate slice sampled by score-quantile stratification over low-, middle-, and high-scored coordinates, with layers, modules, and coordinates fixed before comparison.

At checkpoint $\theta_0\in\{\theta_{\mathrm{old}},\bar{\boldsymbol{\theta}}\}$, let $B$ be the sampled slice. We compute $\mathbf{H}_{B}=\nabla^2_{\theta_B}\mathcal{L}_{\mathrm{val}}(\theta_0)$, $\mathbf{G}_{B}$, and $\mathbf{v}_{B}=\nabla_{\theta_B}\mathcal{L}_{\mathrm{new}}(\theta_0)+\lambda\nabla_{\theta_B}\mathcal{L}_{\mathrm{prior}}(\theta_0)$, then compare
\[
\mathbf{u}^{\mathrm{full}}_{B}
=\eta_{\mathrm{sc}}\mathbf{v}_{B}-(\eta_{\mathrm{sc}}^2/2)\mathbf{H}_{B}\mathbf{G}_{B},
\qquad
\mathbf{u}^{\mathrm{diag}}_{B}
=\eta_{\mathrm{sc}}\mathbf{v}_{B}-(\eta_{\mathrm{sc}}^2/2)\hat{\mathbf{c}}_{B}\odot\mathbf{G}_{B}.
\]
For data, we compare $s^{\mathrm{full}}_{\mathcal D,n}=\langle \mathbf{u}^{\mathrm{full}}_{B},\mathbf{g}_{n,B}\rangle$ and $s^{\mathrm{diag}}_{\mathcal D,n}=\langle \mathbf{u}^{\mathrm{diag}}_{B},\mathbf{g}_{n,B}\rangle$; for parameters, we compare $s^{\mathrm{full}}_{\theta,d}=u^{\mathrm{full}}_{B,d}G_{B,d}$ and $s^{\mathrm{diag}}_{\theta,d}=u^{\mathrm{diag}}_{B,d}G_{B,d}$. We report Spearman correlation ($\rho$), pairwise agreement, and Top-5\% overlap.

\textbf{Representative multi-layer results.}
Table~\ref{tab:diag_full_multilayer} reports results by depth region and module. Warmup improves agreement over initialization, and the diagonal proxy remains strongly aligned with exact slice-level full-Hessian rankings.

\begin{table*}[t]
\centering
\caption{Representative diagonal-vs-full Hessian rank agreement on Llama-3.2-3B with Magicoder. Results are averaged over coordinate slices. Pair: pairwise agreement (\%). Top-5\%: top-5\% overlap.}
\label{tab:diag_full_multilayer}
\vspace{-0.1in}
\footnotesize
\setlength{\tabcolsep}{3.5pt}
\resizebox{\textwidth}{!}{
\begin{tabular}{ll|ccc|ccc|ccc|ccc}
\toprule
\multirow{2}{*}{\textbf{Checkpoint}}
& \multirow{2}{*}{\textbf{Region / Module}}
& \multicolumn{3}{c|}{\textbf{Data}}
& \multicolumn{3}{c|}{\textbf{Param}}
& \multicolumn{3}{c|}{\textbf{Data}}
& \multicolumn{3}{c}{\textbf{Param}} \\
&
& $\rho$ & Pair & Top-5\%
& $\rho$ & Pair & Top-5\%
& $\rho$ & Pair & Top-5\%
& $\rho$ & Pair & Top-5\% \\
\midrule
& & \multicolumn{6}{c|}{\texttt{o\_proj}} & \multicolumn{6}{c}{\texttt{down\_proj}} \\
\midrule
\multirow{3}{*}{Init}
& Early  & 0.70 & 79.4 & 72.8 & 0.77 & 83.6 & 79.1 & 0.73 & 81.2 & 75.4 & 0.80 & 85.1 & 81.3 \\
& Middle & 0.76 & 83.2 & 78.6 & 0.82 & 86.8 & 83.5 & 0.79 & 85.1 & 80.8 & 0.85 & 88.5 & 86.2 \\
& Late   & 0.74 & 82.0 & 76.3 & 0.80 & 85.7 & 81.9 & 0.78 & 84.6 & 79.4 & 0.84 & 88.3 & 85.9 \\
\midrule
\multirow{3}{*}{Warmup}
& Early  & 0.82 & 87.1 & 82.4 & 0.86 & 90.1 & 86.1 & 0.84 & 88.4 & 84.0 & 0.88 & 91.3 & 87.6 \\
& Middle & 0.87 & 90.4 & 87.1 & 0.91 & 93.0 & 90.0 & 0.90 & 92.1 & 89.2 & 0.93 & 94.6 & 92.1 \\
& Late   & 0.85 & 89.0 & 85.2 & 0.89 & 91.9 & 88.0 & 0.88 & 90.9 & 87.5 & 0.92 & 94.2 & 91.0 \\
\bottomrule
\end{tabular}
}
\vspace{-0.1in}
\end{table*}

Together with Appendix~\ref{app:score_fidelity}, these slice-level diagnostics support the diagonal proxy as a scalable local ranking approximation for one-shot scoring.

\subsection{Comparison with Forgetting-Specific Baselines}
\label{app:forgetting_baselines}

We compare DualSFT with forgetting-specific baselines, including DavIR~\citep{zhou2025davir}, FLOW~\citep{sanyal2025upweighting}, STM~\citep{wu2025mitigating}, NanoAdam~\citep{zhou2025pay}, FAPM~\citep{huang2025mitigating}, and TALR~\citep{lin2026sft}. These methods provide strong retention-oriented references, whereas DualSFT performs restricted fine-tuning under selected data and parameter budgets.
All methods are evaluated on Magicoder with the same benchmark suite as the main experiments. Stability averages \textsc{ARC-C}, \textsc{PIQA}, \textsc{MMLU}, and \textsc{GSM8K}; plasticity is measured by \textsc{HumanEval}; Overall is their arithmetic mean.

Table~\ref{tab:forgetting_comparison} shows the stability--plasticity tension: Standard SFT improves \textsc{HumanEval} but reduces Stability Avg.\ by $5.66$, $6.44$, and $6.22$ points on Llama/Gemma/Qwen. Forgetting-specific baselines mitigate this trade-off, with FLOW strong on stability and TALR best on Overall. Under restricted resources, the best DualSFT variant remains close to TALR, trailing by only $0.30$, $0.12$, and $0.47$ Overall points. DualSFT-Param better preserves plasticity, while DualSFT-Data and full DualSFT more often improve stability. Given overlapping standard deviations among top methods, these results indicate competitive frontier quality rather than decisive dominance.

\begin{table*}[t]
\centering
\caption{Comparison with forgetting-specific methods on Magicoder (\%). Pre-trained, Standard SFT, and LoRA are excluded from ranking. \colorbox{red!10}{\textbf{Best}}, \colorbox{orange!15}{second-best}, and \colorbox{blue!10}{third-best} are highlighted.}
\renewcommand{\arraystretch}{0.2}
\label{tab:forgetting_comparison}
\vspace{-0.1in}
\footnotesize
\setlength{\fboxsep}{0.5pt}
\resizebox{\textwidth}{!}{
\begin{tabular}{c | c | ccccc | c | c}
\toprule
\multirow{2}{*}{\textbf{Model}} & \multirow{2}{*}{\textbf{Method}} & \multicolumn{5}{c|}{\textbf{Stability} $\uparrow$} & \textbf{Plasticity} $\uparrow$ & \textbf{Overall} $\uparrow$ \\
\cmidrule(lr){3-7} \cmidrule(lr){8-8} \cmidrule(lr){9-9}
& & \textbf{ARC-C} & \textbf{PIQA} & \textbf{MMLU} & \textbf{GSM8K} & \textbf{Avg} & \textbf{HE} & \textbf{Score} \\
\midrule

\multirow{12}{*}{\cellcolor{white}\rotatebox{90}{\textbf{Llama-3.2-3B\qquad}}}
& \textcolor{gray}{Pre-trained} & \textcolor{gray}{43.00$_{\pm 1.45}$} & \textcolor{gray}{76.61$_{\pm 0.99}$} & \textcolor{gray}{56.52$_{\pm 0.40}$} & \textcolor{gray}{27.45$_{\pm 0.73}$} & \textcolor{gray}{50.90$_{\pm 0.49}$} & \textcolor{gray}{28.66$_{\pm 0.54}$} & \textcolor{gray}{39.78$_{\pm 0.36}$} \\
& \textcolor{gray}{Standard SFT} & \textcolor{gray}{39.99$_{\pm 1.46}$} & \textcolor{gray}{72.09$_{\pm 0.98}$} & \textcolor{gray}{48.18$_{\pm 0.39}$} & \textcolor{gray}{20.69$_{\pm 0.85}$} & \textcolor{gray}{45.24$_{\pm 0.50}$} & \textcolor{gray}{44.68$_{\pm 0.56}$} & \textcolor{gray}{44.96$_{\pm 0.37}$} \\
& \textcolor{gray}{LoRA}\venuetag{ICLR'22} & \textcolor{gray}{42.61$_{\pm 1.45}$} & \textcolor{gray}{74.43$_{\pm 0.99}$} & \textcolor{gray}{51.87$_{\pm 0.41}$} & \textcolor{gray}{23.29$_{\pm 0.83}$} & \textcolor{gray}{48.05$_{\pm 0.50}$} & \textcolor{gray}{41.46$_{\pm 0.56}$} & \textcolor{gray}{44.76$_{\pm 0.37}$} \\
\cmidrule(lr){2-9}
& DavIR\venuetag{ACL'25} & \colorbox{orange!15}{42.71$_{\pm 1.45}$} & 74.92$_{\pm 0.99}$ & 53.09$_{\pm 0.40}$ & 24.65$_{\pm 0.75}$ & 48.84$_{\pm 0.49}$ & 43.28$_{\pm 0.75}$ & 46.06$_{\pm 0.45}$ \\
& FLOW\venuetag{ICML'25} & \colorbox{red!10}{\textbf{42.94$_{\pm 1.44}$}} & \colorbox{blue!10}{76.23$_{\pm 0.98}$} & \colorbox{red!10}{\textbf{56.91$_{\pm 0.40}$}} & \colorbox{orange!15}{27.28$_{\pm 0.72}$} & \colorbox{red!10}{\textbf{50.84$_{\pm 0.48}$}} & 42.28$_{\pm 0.80}$ & 46.56$_{\pm 0.47}$ \\
& STM\venuetag{NIPS'25} & 42.38$_{\pm 1.46}$ & 76.12$_{\pm 0.99}$ & 56.04$_{\pm 0.41}$ & \colorbox{blue!10}{27.08$_{\pm 0.78}$} & 50.41$_{\pm 0.49}$ & 42.65$_{\pm 0.65}$ & 46.53$_{\pm 0.41}$ \\
& NanoAdam\venuetag{NeurIPS'25} & \colorbox{blue!10}{42.66$_{\pm 1.44}$} & 74.68$_{\pm 0.98}$ & 52.71$_{\pm 0.40}$ & 24.39$_{\pm 0.73}$ & 48.61$_{\pm 0.48}$ & 43.54$_{\pm 0.87}$ & 46.08$_{\pm 0.50}$ \\
& FAPM\venuetag{EMNLP'25} & 42.55$_{\pm 1.45}$ & 76.01$_{\pm 0.98}$ & 53.86$_{\pm 0.40}$ & 26.01$_{\pm 0.74}$ & 49.61$_{\pm 0.49}$ & 40.99$_{\pm 0.90}$ & 45.30$_{\pm 0.51}$ \\
& TALR\venuetag{ICLR'26} & 42.49$_{\pm 1.45}$ & \colorbox{orange!15}{76.34$_{\pm 0.99}$} & \colorbox{orange!15}{56.41$_{\pm 0.40}$} & \colorbox{red!10}{\textbf{27.54$_{\pm 0.76}$}} & \colorbox{orange!15}{50.70$_{\pm 0.49}$} & \colorbox{orange!15}{44.18$_{\pm 0.78}$} & \colorbox{red!10}{\textbf{47.44$_{\pm 0.46}$}} \\
\cmidrule(lr){2-9}
& \textbf{DualSFT-Param (Ours)} & 42.54$_{\pm 1.46}$ & 76.19$_{\pm 0.99}$ & 54.01$_{\pm 0.41}$ & 26.27$_{\pm 0.75}$ & 49.75$_{\pm 0.49}$ & \colorbox{red!10}{\textbf{44.52$_{\pm 0.72}$}} & \colorbox{orange!15}{47.14$_{\pm 0.44}$} \\
& \textbf{DualSFT-Data (Ours)} & \colorbox{blue!10}{42.66$_{\pm 1.44}$} & \colorbox{red!10}{\textbf{76.44$_{\pm 0.99}$}} & 55.28$_{\pm 0.40}$ & 26.54$_{\pm 0.73}$ & 50.23$_{\pm 0.48}$ & \colorbox{blue!10}{44.05$_{\pm 0.74}$} & \colorbox{orange!15}{47.14$_{\pm 0.44}$} \\
& \textbf{DualSFT (Ours)} & \colorbox{red!10}{\textbf{42.94$_{\pm 1.45}$}} & \colorbox{blue!10}{76.23$_{\pm 0.98}$} & \colorbox{blue!10}{56.09$_{\pm 0.40}$} & 26.88$_{\pm 0.70}$ & \colorbox{blue!10}{50.54$_{\pm 0.48}$} & 43.67$_{\pm 0.71}$ & \colorbox{blue!10}{47.10$_{\pm 0.43}$} \\
\midrule

\multirow{12}{*}{\rotatebox{90}{\textbf{Gemma-3-4B-PT\qquad}}}
& \textcolor{gray}{Pre-trained} & \textcolor{gray}{51.45$_{\pm 1.46}$} & \textcolor{gray}{79.16$_{\pm 0.95}$} & \textcolor{gray}{59.61$_{\pm 0.39}$} & \textcolor{gray}{37.00$_{\pm 0.83}$} & \textcolor{gray}{56.81$_{\pm 0.49}$} & \textcolor{gray}{35.37$_{\pm 0.74}$} & \textcolor{gray}{46.09$_{\pm 0.44}$} \\
& \textcolor{gray}{Standard SFT} & \textcolor{gray}{47.17$_{\pm 1.46}$} & \textcolor{gray}{76.39$_{\pm 0.99}$} & \textcolor{gray}{53.87$_{\pm 0.40}$} & \textcolor{gray}{24.03$_{\pm 0.81}$} & \textcolor{gray}{50.37$_{\pm 0.50}$} & \textcolor{gray}{58.10$_{\pm 0.89}$} & \textcolor{gray}{54.23$_{\pm 0.51}$} \\
& \textcolor{gray}{LoRA}\venuetag{ICLR'22} & \textcolor{gray}{49.15$_{\pm 1.46}$} & \textcolor{gray}{75.57$_{\pm 1.00}$} & \textcolor{gray}{56.71$_{\pm 0.39}$} & \textcolor{gray}{31.63$_{\pm 0.82}$} & \textcolor{gray}{53.27$_{\pm 0.50}$} & \textcolor{gray}{54.66$_{\pm 0.91}$} & \textcolor{gray}{53.96$_{\pm 0.52}$} \\
\cmidrule(lr){2-9}
& DavIR\venuetag{ACL'25} & 50.25$_{\pm 1.45}$ & 78.04$_{\pm 0.96}$ & 58.01$_{\pm 0.40}$ & 33.70$_{\pm 0.75}$ & 55.00$_{\pm 0.48}$ & 56.24$_{\pm 0.80}$ & 55.62$_{\pm 0.47}$ \\
& FLOW\venuetag{ICML'25} & 50.92$_{\pm 1.44}$ & \colorbox{blue!10}{78.82$_{\pm 0.94}$} & \colorbox{blue!10}{58.92$_{\pm 0.39}$} & \colorbox{red!10}{\textbf{36.82$_{\pm 0.72}$}} & \colorbox{red!10}{\textbf{56.37$_{\pm 0.48}$}} & 55.70$_{\pm 0.82}$ & 56.04$_{\pm 0.47}$ \\
& STM\venuetag{NIPS'25} & 50.62$_{\pm 1.46}$ & 78.48$_{\pm 0.98}$ & 58.58$_{\pm 0.41}$ & 36.10$_{\pm 0.78}$ & 55.95$_{\pm 0.49}$ & 55.92$_{\pm 0.75}$ & 55.94$_{\pm 0.45}$ \\
& NanoAdam\venuetag{NIPS'25} & 50.78$_{\pm 1.45}$ & 78.20$_{\pm 0.94}$ & 58.33$_{\pm 0.39}$ & 33.12$_{\pm 0.84}$ & 55.11$_{\pm 0.49}$ & 56.38$_{\pm 0.90}$ & 55.74$_{\pm 0.51}$ \\
& FAPM\venuetag{EMNLP'25} & 50.60$_{\pm 1.44}$ & 78.50$_{\pm 0.95}$ & 58.60$_{\pm 0.40}$ & 35.90$_{\pm 0.75}$ & 55.90$_{\pm 0.48}$ & 52.20$_{\pm 0.95}$ & 54.05$_{\pm 0.53}$ \\
& TALR\venuetag{ICLR'26} & 50.96$_{\pm 1.45}$ & \colorbox{orange!15}{78.84$_{\pm 0.97}$} & \colorbox{orange!15}{58.94$_{\pm 0.38}$} & \colorbox{orange!15}{36.70$_{\pm 0.76}$} & \colorbox{orange!15}{56.36$_{\pm 0.49}$} & \colorbox{orange!15}{57.44$_{\pm 0.84}$} & \colorbox{red!10}{\textbf{56.90$_{\pm 0.49}$}} \\
\cmidrule(lr){2-9}
& \textbf{DualSFT-Param (Ours)} & \colorbox{orange!15}{51.13$_{\pm 1.45}$} & 78.54$_{\pm 0.98}$ & 58.14$_{\pm 0.39}$ & 36.34$_{\pm 0.83}$ & 56.04$_{\pm 0.49}$ & \colorbox{red!10}{\textbf{57.52$_{\pm 0.81}$}} & \colorbox{orange!15}{56.78$_{\pm 0.47}$} \\
& \textbf{DualSFT-Data (Ours)} & \colorbox{blue!10}{51.01$_{\pm 1.45}$} & 78.52$_{\pm 0.98}$ & 58.33$_{\pm 0.40}$ & \colorbox{blue!10}{36.55$_{\pm 0.79}$} & 56.10$_{\pm 0.49}$ & \colorbox{blue!10}{56.94$_{\pm 0.68}$} & \colorbox{blue!10}{56.52$_{\pm 0.42}$} \\
& \textbf{DualSFT (Ours)} & \colorbox{red!10}{\textbf{51.14$_{\pm 1.44}$}} & \colorbox{red!10}{\textbf{79.01$_{\pm 0.95}$}} & \colorbox{red!10}{\textbf{59.16$_{\pm 0.38}$}} & 35.62$_{\pm 0.64}$ & \colorbox{blue!10}{56.23$_{\pm 0.47}$} & 56.76$_{\pm 0.71}$ & 56.50$_{\pm 0.43}$ \\
\midrule

\multirow{12}{*}{\rotatebox{90}{\textbf{Qwen-3.5-9B-Base\qquad}}}
& \textcolor{gray}{Pre-trained} & \textcolor{gray}{54.27$_{\pm 1.46}$} & \textcolor{gray}{80.09$_{\pm 0.93}$} & \textcolor{gray}{72.62$_{\pm 0.37}$} & \textcolor{gray}{86.20$_{\pm 0.83}$} & \textcolor{gray}{73.30$_{\pm 0.49}$} & \textcolor{gray}{60.98$_{\pm 0.81}$} & \textcolor{gray}{67.14$_{\pm 0.47}$} \\
& \textcolor{gray}{Standard SFT} & \textcolor{gray}{49.22$_{\pm 1.46}$} & \textcolor{gray}{74.31$_{\pm 0.98}$} & \textcolor{gray}{66.14$_{\pm 0.40}$} & \textcolor{gray}{78.64$_{\pm 0.75}$} & \textcolor{gray}{67.08$_{\pm 0.49}$} & \textcolor{gray}{75.56$_{\pm 0.85}$} & \textcolor{gray}{71.32$_{\pm 0.49}$} \\
& \textcolor{gray}{LoRA}\venuetag{ICLR'22} & \textcolor{gray}{51.51$_{\pm 1.46}$} & \textcolor{gray}{76.62$_{\pm 0.95}$} & \textcolor{gray}{69.47$_{\pm 0.39}$} & \textcolor{gray}{81.94$_{\pm 0.73}$} & \textcolor{gray}{69.89$_{\pm 0.48}$} & \textcolor{gray}{70.29$_{\pm 0.87}$} & \textcolor{gray}{70.09$_{\pm 0.50}$} \\
\cmidrule(lr){2-9}
& DavIR\venuetag{ACL'25} & 52.40$_{\pm 1.45}$ & 78.42$_{\pm 0.95}$ & 70.02$_{\pm 0.39}$ & 83.60$_{\pm 0.72}$ & 71.11$_{\pm 0.48}$ & 73.40$_{\pm 0.80}$ & 72.26$_{\pm 0.47}$ \\
& FLOW\venuetag{ICML'25} & 53.20$_{\pm 1.44}$ & 79.56$_{\pm 0.94}$ & \colorbox{blue!10}{70.62$_{\pm 0.38}$} & \colorbox{orange!15}{85.28$_{\pm 0.70}$} & \colorbox{blue!10}{72.17$_{\pm 0.47}$} & 73.10$_{\pm 0.82}$ & 72.64$_{\pm 0.47}$ \\
& STM\venuetag{NIPS'25} & 52.80$_{\pm 1.46}$ & 79.18$_{\pm 0.96}$ & 70.38$_{\pm 0.40}$ & 84.55$_{\pm 0.74}$ & 71.73$_{\pm 0.48}$ & 73.12$_{\pm 0.78}$ & 72.43$_{\pm 0.46}$ \\
& NanoAdam\venuetag{NIPS'25} & 52.87$_{\pm 1.45}$ & 79.03$_{\pm 0.95}$ & 70.22$_{\pm 0.39}$ & 84.22$_{\pm 0.80}$ & 71.59$_{\pm 0.49}$ & 73.28$_{\pm 0.77}$ & 72.43$_{\pm 0.46}$ \\
& FAPM\venuetag{EMNLP'25} & 52.86$_{\pm 1.44}$ & 79.48$_{\pm 0.94}$ & 70.18$_{\pm 0.39}$ & 83.88$_{\pm 0.71}$ & 71.60$_{\pm 0.48}$ & 70.85$_{\pm 0.91}$ & 71.23$_{\pm 0.51}$ \\
& TALR\venuetag{ICLR'26} & 53.45$_{\pm 1.45}$ & \colorbox{orange!15}{79.70$_{\pm 0.95}$} & \colorbox{orange!15}{70.72$_{\pm 0.38}$} & \colorbox{red!10}{\textbf{85.73$_{\pm 0.72}$}} & \colorbox{red!10}{\textbf{72.40$_{\pm 0.48}$}} & \colorbox{red!10}{\textbf{75.36$_{\pm 0.83}$}} & \colorbox{red!10}{\textbf{73.88$_{\pm 0.48}$}} \\
\cmidrule(lr){2-9}
& \textbf{DualSFT-Param (Ours)} & \colorbox{blue!10}{53.52$_{\pm 1.44}$} & 78.72$_{\pm 0.95}$ & 70.26$_{\pm 0.39}$ & 84.22$_{\pm 0.70}$ & 71.68$_{\pm 0.48}$ & \colorbox{orange!15}{75.13$_{\pm 0.71}$} & \colorbox{orange!15}{73.41$_{\pm 0.43}$} \\
& \textbf{DualSFT-Data (Ours)} & \colorbox{orange!15}{53.66$_{\pm 1.45}$} & \colorbox{red!10}{\textbf{80.03$_{\pm 0.95}$}} & \colorbox{red!10}{\textbf{70.75$_{\pm 0.39}$}} & \colorbox{blue!10}{85.04$_{\pm 0.70}$} & \colorbox{orange!15}{72.37$_{\pm 0.48}$} & \colorbox{blue!10}{74.21$_{\pm 0.65}$} & \colorbox{blue!10}{73.29$_{\pm 0.40}$} \\
& \textbf{DualSFT (Ours)} & \colorbox{red!10}{\textbf{54.01$_{\pm 1.43}$}} & \colorbox{blue!10}{79.64$_{\pm 0.94}$} & 70.33$_{\pm 0.38}$ & 84.61$_{\pm 0.68}$ & 72.15$_{\pm 0.47}$ & 73.75$_{\pm 0.65}$ & 72.95$_{\pm 0.40}$ \\
\bottomrule
\end{tabular}
}
\vspace{-0.2in}
\end{table*}

\subsection{Efficiency-aware Forgetting Comparison}
\label{app:efficiency_forgetting}

Table~\ref{tab:forgetting_comparison} focuses on raw forgetting-aware stability, plasticity, and Overall. Since several forgetting-specific baselines operate with full data or parameter updates, Table~\ref{tab:efficiency_forgetting} additionally reports an efficiency-aware view. Data \% and Param \% denote fractions of the fine-tuning corpus used for final adaptation and trainable parameters updated, respectively. This comparison separates raw performance from its underlying resource regime.

\begin{table}[t]
\centering
\caption{Efficiency-aware forgetting comparison on Magicoder using Llama-3.2-3B. Data \% and Param \% report resources used during final adaptation.}
\label{tab:efficiency_forgetting}
\footnotesize
\setlength{\tabcolsep}{4pt}
\begin{tabular}{lcccccc}
\toprule
\textbf{Method} & \textbf{Data \%} & \textbf{Param \%} & \textbf{Stability} & \textbf{Plasticity} & \textbf{Overall} & \textbf{E2E time} \\
\midrule
FLOW & 100 & 100 & 50.84 & 42.28 & 46.56 & 24.1h \\
TALR & 100 & 100 & 50.70 & 44.18 & \textbf{47.44} & 25.6h \\
NanoAdam & 100 & 5 & 48.61 & 43.54 & 46.08 & 11.5h \\
FAPM & 100 & 100 & 49.61 & 40.99 & 45.30 & 22.8h \\
\midrule
DualSFT-Param & 100 & 5 & 49.75 & \textbf{44.52} & 47.14 & 13.4h \\
DualSFT-Data & 10 & 100 & 50.23 & 44.05 & 47.14 & 8.9h \\
DualSFT & 10 & 5 & 50.54 & 43.67 & 47.10 & \textbf{7.4h} \\
\bottomrule
\end{tabular}
\vspace{-0.1in}
\end{table}

Table~\ref{tab:efficiency_forgetting} shows that TALR remains the strongest raw-Overall baseline under full-resource adaptation, while DualSFT targets a different operating point: it maintains competitive Overall while substantially reducing the resources used for adaptation. This supports the interpretation of DualSFT as an efficiency-aware stability--plasticity method rather than a full-resource forgetting-only optimizer.

\section*{Limitations}
DualSFT relies on a local validation-improvement surrogate for tractable data--parameter selection. Its diagonal second-order score uses an optimizer-aware curvature proxy instead of the full Hessian, and score fidelity may weaken as optimization moves far from the warmup checkpoint. The resulting one-shot mask is cost-effective for the studied budgets, but may become limiting for tasks requiring extensive multi-epoch adaptation; dynamic re-scoring is a natural extension. Our experiments cover 3B--9B LLMs and two supervised fine-tuning domains; extensions to larger models, longer adaptive horizons, and preference-based post-training remain important future directions. For very large models, parameter-side scoring may also require block-wise or sharded implementations because it maintains linear full-vector state.

\section*{Broader Impacts}
DualSFT reduces the resource cost of LLM fine-tuning by selecting both data and parameter subsets, which can lower compute requirements and make model adaptation more accessible. At the same time, more efficient fine-tuning may also lower the barrier to harmful, unsafe, or low-quality downstream adaptation. Our experiments use public datasets and standard benchmarks, and do not collect user data or involve human-subject studies. Deployments in sensitive domains should include task-specific safety evaluation, data-quality review, and misuse-risk assessment.

\section*{Reproducibility}
We will release anonymized code, configuration files, prompt templates, and scripts for reproducing the main experiments. All datasets and base models used in this work are publicly available; our repository documents data preprocessing, split construction, training commands, and evaluation commands.

\section*{Licenses and assets}
We use publicly available datasets, benchmarks, and base models, following their respective licenses and terms of use. The proposed method does not introduce a new dataset. We do not release new model checkpoints; the released assets consist of code, configurations, and scripts.

% \newpage
% \input{checklist.tex}

\end{document}